\newtheorem{theorem}{Theorem}
\newtheorem{definition}[theorem]{Definition}
\newtheorem{proposition}[theorem]{Proposition}
\newtheorem*{remark}{Remark}
\newtheorem{lemma}[theorem]{Lemma}
\newtheorem{example}[theorem]{Example}
\newtheorem{corollary}[theorem]{Corollary}
\numberwithin{theorem}{section}
\newcommand{\abs}[1]{\left\vert #1 \right\vert}
\newcommand{\norm}[1]{\left\Vert #1 \right\Vert}
\newcommand{\sprod}[1]{\left\langle #1 \right\rangle}
\def\eqref#1{equation~\ref{#1}}
\def\ceil#1{\lceil #1 \rceil}
\def\vzero{{\bm{0}}}
\def\vone{{\bm{1}}}
\def\vtheta{{\bm{\theta}}}
\def\va{{\bm{a}}}
\def\vb{{\bm{b}}}
\def\vc{{\bm{c}}}
\def\vf{{\bm{f}}}
\def\vn{{\bm{n}}}
\def\vs{{\bm{s}}}
\def\vu{{\bm{u}}}
\def\vw{{\bm{w}}}
\def\vx{{\bm{x}}}
\def\vy{{\bm{y}}}
\def\vz{{\bm{z}}}
\def\mA{{\bm{A}}}
\def\mB{{\bm{B}}}
\def\mI{{\bm{I}}}
\def\mS{{\bm{S}}}
\def\mU{{\bm{U}}}
\def\mV{{\bm{V}}}
\def\mW{{\bm{W}}}
\def\mPhi{{\bm{\Phi}}}
\DeclareMathAlphabet{\mathsfit}{\encodingdefault}{\sfdefault}{m}{sl}
\SetMathAlphabet{\mathsfit}{bold}{\encodingdefault}{\sfdefault}{bx}{n}
\def\nin{n_{\text{in}}}
\def\gA{{\mathcal{A}}}
\def\gC{{\mathcal{C}}}
\def\gE{{\mathcal{E}}}
\def\gF{{\mathcal{F}}}
\def\gH{{\mathcal{H}}}
\def\gL{{\mathcal{L}}}
\def\gR{{\mathcal{R}}}
\def\sN{{\mathbb{N}}}
\def\sR{{\mathbb{R}}}
\newcommand{\R}{\mathbb{R}}
\newcommand{\indi}{\mathbbm{1}}
\newcommand{\set}[1]{\left\lbrace #1\right\rbrace}
\icmltitlerunning{Time to Spike? Understanding the Representational Power of SNNs}
\begin{document}

\twocolumn[
\icmltitle{Time to Spike? Understanding the Representational Power of \\ Spiking Neural Networks in Discrete Time}



\icmlsetsymbol{equal}{*}

\begin{icmlauthorlist}
\icmlauthor{Duc Anh Nguyen}{lmu}
\icmlauthor{Ernesto Araya}{lmu}
\icmlauthor{Adalbert Fono}{lmu}
\icmlauthor{Gitta Kutyniok}{lmu,mcml,dlr,tromso}
\end{icmlauthorlist}

\icmlaffiliation{lmu}{Department of Mathematics, Ludwig-Maximilians-Universität München, Germany}
\icmlaffiliation{mcml}{Munich Center for Machine Learning (MCML), Munich, Germany}
\icmlaffiliation{dlr}{Institute of Robotics and Mechatronics, DLR-German Aerospace Center, Germany}
\icmlaffiliation{tromso}{Department of Physics and Technology, University of Tromsø, Tromsø, Norway}

\icmlcorrespondingauthor{Duc Anh Nguyen}{danguyen@math.lmu.de}

\icmlkeywords{Spiking neural networks (SNNs), linear regions, approximation theory}

\vskip 0.3in
]



\printAffiliationsAndNotice{}  

\begin{abstract}
Recent years have seen significant progress in developing spiking neural networks (SNNs) as a potential solution to the energy challenges posed by conventional artificial neural networks (ANNs). 
However, our theoretical understanding of SNNs remains relatively limited compared to the ever-growing body of literature on ANNs. In this paper, we study a discrete-time model of SNNs based on leaky integrate-and-fire (LIF) neurons, referred to as discrete-time LIF-SNNs, a widely used framework that still lacks solid theoretical foundations.
We demonstrate that discrete-time LIF-SNNs with static inputs and outputs realize piecewise constant functions defined on polyhedral regions, and more importantly, we quantify the network size required to approximate continuous functions. Moreover, we investigate the impact of latency (number of time steps) and depth (number of layers) on the complexity of the input space partitioning induced by discrete-time LIF-SNNs. Our analysis highlights the importance of latency and contrasts these networks with ANNs employing piecewise linear activation functions. Finally, we present numerical experiments to support our theoretical findings.

\end{abstract}

\section{Introduction}\label{sec:introduction}

Artificial neural networks (ANNs) have emerged as a fundamental component of artificial intelligence, exhibiting remarkable achievements across a wide range of applications \cite{ann1_2018, ann2_2021, ann3_2022}. However, it is well-established that the implementation of ANNs in modern applications often necessitates a considerable allocation of computational resources, with hardware requirements exhibiting an unsustainable rate of growth \cite{DL_diminishing_returns_2021}. Spiking neural networks (SNNs), often referred to as the third generation of ANNs \cite{3rd_gen_NNs_maass_1997}, have been developed as a potential energy-efficient alternative thanks to their event-driven nature inspired by biological neuronal dynamics and compatibility with neuromorphic hardware \cite{Mehonic2024NeuromorphicRoadmap}. 

Despite the rapid advancements in neuromorphic computing and SNNs \cite{guo2023Survey, Yuan22VO2, parallel_spiking_neurons_fang2023, lv2024TSforecasting}, our foundational understanding of SNNs remains incomplete. A fundamental characteristic of any model class is its expressivity—both in terms of the functions it can accurately represent or approximate, and the computational efficiency with which these representations can be obtained. Analyzing the representational power of ANNs has been a central concern in deep learning that showcased their (universal) capabilities \cite{Cybenko_universal_1989, hornik_universal_1989,leshno_universal_1993, universal_bounded_width_hanin_2019, approx_num_neurons_2020, opt_approx_pw_smooth_functions_petersen_2018, approx_relu_sobolev_guehring2020, opt_approx_sparse_DNNs_boelcskei_2019}. 
The expressivity of SNNs has been discussed by several papers \cite{Maass_nips_1994, 3rd_gen_NNs_maass_1997, theo_provable_snns_zhang_2022, expressivity_snns_manjot_adalbert_2024, stable_learning_SNN_neuman_2024}, which derived results comparable to those established for ANNs. However, these works focus on continuous-time models of SNNs, mostly based on the spike response model \cite{Gerstner92SRM}, combined with specific coding schemes such as time-to-first-spike coding \cite{expressivity_snns_manjot_adalbert_2024, stable_learning_SNN_neuman_2024} or instantaneous rate coding \cite{theo_provable_snns_zhang_2022}. While there exist implementations of the continuous-time models on specific analog neuromorphic hardware \cite{firstspike2021goltz}, the setting differs crucially from the more commonly applied SNN framework based on time discretization \cite{Lessons_from_DL_eshraghian_2023, spikingjelly_2023}, which is currently more prevalent due to its straightforward applicability on (commercially) available digital neuromorphic hardware like Loihi 2 or Spinnaker 2 \cite{Loihi2_2021, spinnaker_async_ML_2024}. 

Therefore, this paper directly focuses on the computational power and expressivity of SNNs designed with discrete time dynamics. Based on the simple and computationally efficient \emph{leaky-integrate-and-fire} (LIF) neuron model \cite{Book_Gerstner_neuronal_dynamics_2014}, we mathematically formalize and investigate the \emph{discrete-time LIF-SNN}, a framework formally introduced in \Cref{sec:SNN_model}, that accommodates a broad range of practically employed model classes. 
Due to their prevalence, our focus is on LIF neurons, but in principle, our framework can be easily extended to different types of integrate-and-fire models and beyond. 

In SNN literature \cite{Book_Gerstner_neuronal_dynamics_2014, 3rd_gen_NNs_maass_1997}, the LIF model is usually referred to as a special case of the spike response model, while the discrete-time setting is obtained by discretizing the continuous-time framework. However, discrete-time LIF-SNNs are shown to preserve expressive power that is comparable to the more general continuous-time models. This is true although discrete-time LIF-SNNs inherently realize discrete functions, meaning that the input space is partitioned into polyhedral regions, i.e., regions with linear boundaries, associated with constant outputs. 

Our goal is to elucidate the internal mechanisms of discrete-time SNNs, highlighting their capacity in a learning framework. We analyze their expressivity through approximation properties and input partitioning into linear regions \cite{linear_regions_2014_montufar}, focusing on the case of static data—a common benchmark in SNN research \cite{Lessons_from_DL_eshraghian_2023}.

\paragraph{Contributions.}
We demonstrate that discrete-time LIF-SNNs partition the input space differently from ReLU-ANNs, with the temporal aspect—the third dimension alongside width and depth—playing a crucial role. Our main contributions are: 
\begin{itemize}
    \item We show that discrete-time LIF-SNNs are universal approximators of continuous functions on compact domains by demonstrating that they realize piecewise constant functions with polyhedral regions and, conversely, can express any such function in a single time step. Additionally, we establish upper and lower bounds on the number of neurons required, yielding order-optimal approximation rates.

    \item We formalize constant (polyhedral) regions in discrete-time LIF-SNNs and establish a tight upper bound on their number. Although they could theoretically grow exponentially with time, we surprisingly show that they scale quadratically. Crucially, we show that the temporal dimension introduces parallel hyperplanes forming the boundaries of the polyhedral regions for each neuron in the first hidden layer, while subsequent layers do not increase the number of regions. 
    
    \item Our empirical results support the theoretical analysis. In low-latency SNNs with a narrow first hidden layer, adding neurons to deeper layers yields limited accuracy gains compared to ANNs. In contrast, high-latency SNNs benefit from richer input partitioning, leading to significant improvements. We also experimentally demonstrate how the temporal dimension shapes parallel hyperplanes as the parameters of our discrete SNN model vary. 
\end{itemize}

\paragraph{Related work.}

While ANN expressivity has been extensively studied, research on this topic in SNNs remains limited. 
 Exceptions include 
\cite{Maass_nips_1994, noisy_maass_nips_1995, lower_bounds_snns_Maass_1996, noisy_maass_nips_1996, 3rd_gen_NNs_maass_1997, fast_maass_1997,temporal_coding_SNN_alpha_comsa_2020, supervised_tempo_code_SNNs_mostafa_2018, expressivity_snns_manjot_adalbert_2024, stable_learning_SNN_neuman_2024} 
that analyze approximation properties of continuous-time SNNs based on temporal coding and the spike response model. However, these approaches differ conceptually from our model, which aligns with the widely used discretized implementation framework \cite{Lessons_from_DL_eshraghian_2023, spikingjelly_2023}. In case of a single time step, our model simplifies to Heaviside (or threshold) ANNs, for which approximation results have been established \cite{leshno_universal_1993, book_threshold_ANN_boolean_anthony_2001}. However, our results emphasize the rates of approximation in terms of the number of neurons.

The expression of constant functions on polyhedra by discretized LIF networks and the generation of parallel hyperplanes by single neurons with increased latency are highlighted in \cite{SNN_neural_architecture_search_2022}, motivating their neural architecture search algorithm, but without offering theoretical insights. In Heaviside ANNs, bounds on linear regions have been established \cite{threshold_ANN_2022}, though they do not address the role of time as our results do. A more detailed discussion of related works on the expressive power and properties of ANNs and SNNs is provided in Appendix \ref{sec:related_works_appendix}.

\section{The network model and its neuronal dynamics}\label{sec:SNN_model}
SNNs in discrete time are computational models that process time series data \( (\vx(t))_{t \in [T]} \) and produce binary spike outputs \( (\vs(t))_{t \in [T]} \), where \( \vs(t) \in \{0, 1\} \). Similar to conventional ANNs, SNNs consist of neurons organized in a graph, with dynamics inspired by biological neurons. Each neuron maintains a membrane potential \( u(t) \), evolving over time based on input contributions. When the membrane potential exceeds a threshold, the neuron fires, producing a spike encoded as \( 1 \), followed by a reset mechanism.

This work focuses on a discrete-time formulation of leaky integrate-and-fire (LIF) neurons, a widely used model that approximates membrane potential dynamics with a resistor-capacitor analogy. Our formulation builds on and formalizes existing frameworks, providing a structured representation that incorporates encoding and decoding schemes, facilitating both analysis and implementation. Details on the biological motivations and connections to continuous-time dynamics are provided in Appendix \ref{sec:discussion_SNN}.

The key properties required for the definition of the \textbf{discrete-time LIF-SNN} model considered in this paper are as follows: 
\begin{enumerate}
    \item \textbf{Network (spatial) architecture} $(L,\vn) \in\sN\times\sN^{L+2}$. Neurons are arranged in layers with information being propagated from input to output layer feed-forwardly, where $L$ is the number of hidden layers (referred to as the \textbf{depth}) and $\vn=(n_0, \hdots, n_{L+1})$ is the number of neurons in each layer (referred to as the \textbf{width}) with input dimension $n_0:= n_{\text{in}}$ and output dimension $n_{L+1}:=n_{\text{out}}$.     

     \item \textbf{Neuronal (temporal) dynamics.} $T\in \sN$ denotes the number of time steps, also referred to as the network's \textbf{latency}. For each hidden layer $\ell\in  [L]$, the \textbf{spike (activation)} vector $\vs^\ell(t) \in \set{0,1}^{n_\ell}$ and the \textbf{membrane potential} vector $\vu^\ell(t) \in \sR^{n_\ell}$ at time step $t \in [T]$ are given by
    \begin{equation}\label{eq:discrete_snn_dynamics}
    \hspace*{-.35cm}\begin{cases}
        \vs^{\ell}(t) = H\big(\beta^\ell \vu^{\ell}(t-1) + \mW^{\ell} \vs^{\ell-1}(t) + \vb^\ell- \vartheta^{\ell} \vone_{n_{\ell}} \big)&\\
        \vu^{\ell}(t) = \beta^\ell \vu^{\ell}(t-1) + \mW^{\ell} \vs^{\ell-1}(t) +\vb^\ell- \vartheta^{\ell} \vs^{\ell}(t),&      
    \end{cases}
    \end{equation}
    where $H$ is the Heaviside function (applied entry-wise) and $(\vs^0(t))_{t\in [T]}$ are the \textbf{initial spike activations}. The remaining parameters including $\mW^\ell,\vb^\ell$ are defined below.
    \item \textbf{Coding schemes} $E: \sR^{n_\text{in}} \to \sR^{n_\text{in}\times T}$ and $D: \set{0,1}^{n_L \times T} \to \sR^{n_{\text{out}}}$, where the \textbf{input encoding} $E$ maps an input vector $\vx \in \sR^{n_\text{in}}$ to a time series $\big(\vx(t)\big)_{t\in [T]}$ representing the initial spike activations and the \textbf{output decoding} $D$ maps any time series $\big(\vs(t)\big)_{t \in [T]} \in \set{0,1}^{n_L \times T}$ to an output vector $ D\Big( \big(\vs(t)\big)_{t \in [T]} \Big) \in \sR^{n_\text{out}}$.

    \item \textbf{(Hyper)parameter} $\big( (\mW^\ell, \vb^\ell), (\vu^\ell(0),\beta^\ell,\vartheta^\ell) \big)_{\ell\in [L]}$, where the \textbf{weight matrices} $\mW^{\ell} \in \sR^{n_\ell\times n_{\ell-1}}$ and \textbf{bias vectors} $\vb^{\ell} \in \sR^{n_\ell}$ represent the parameters commonly used in ANNs, whereas the \textbf{initial membrane potential} vector $\vu^\ell(0) \in \sR^{n_{\ell}}$, the \textbf{leaky term} $\beta^\ell \in [0,1]$ and the \textbf{threshold} $\vartheta^\ell \in (0,\infty)$ represent SNN-specific temporal (hyper)parameters.

\end{enumerate}

\begin{definition}[Discrete-time LIF-SNN] \label{def:discrete_SNN_model}
     An \textbf{SNN} in the \textbf{discrete-time LIF model} is given by the tuple 
     \begin{equation*}
        \mPhi:=\Big( \big(\mW^\ell, \vb^\ell\big)_{\ell\in [L]}, \big(\vu^\ell(0),\beta^\ell,\vartheta^\ell\big)_{\ell\in [L]}, T,(E,D) \Big)    
     \end{equation*}
     and \textbf{realizes} the mapping $R(\mPhi): \sR^{n_\text{in}} \to \sR^{n_\text{out}}$ according to (\ref{eq:discrete_snn_dynamics}):
    \begin{equation*}
        R(\mPhi)(\vx) = D\Big( \big(\vs_\Phi^L(t)\big)_{t \in [T]} \Big)  \quad \text{ with } \vs^0= E(\vx).   
    \end{equation*}
\end{definition}
\begin{remark}[Learnable parameters]
For an SNN satisfying Definition \ref{def:discrete_SNN_model}, the learnable parameters include $\big((\mW^\ell, \vb^\ell), (\vu^\ell(0),\beta^\ell,\vartheta^\ell) \big)_{\ell\in [L]}$  along with the decoder $D$, which is typically parameterized (as specified in Definition \ref{def:coding_schemes}). In fact, the learnability of temporal parameters, namely $(\vu^\ell(0), \beta^\ell, \vartheta^\ell)$, is flexible and varies across implementations. Earlier works often keep these parameters fixed, while more recent studies propose making them learnable \cite{trainable_membrane_constant_2021, rethinking_initial_mem_shen_2024}.    
\end{remark}
\begin{remark}[Extensions]
    Our definition of discrete-time LIF-SNNs decomposes the model into modular components (1--4 above), each of which can be flexibly modified or generalized. In particular, the neuronal dynamics can be easily adapted to incorporate alternative integrate-and-fire or even more advanced models, balancing biological plausibility and computational efficiency. Additionally, the model can accommodate recurrent network architectures in place of the feed-forward architecture, and the encoding schemes can be extended to handle time-series data in \( \R^{n_{\text{in}} \times T} \), rather than `static' inputs in \( \R^{n_{\text{in}}} \).
\end{remark}

\paragraph{Coding schemes.}
In this paper, we assume a discrete-time LIF-SNN with direct encoding and membrane potential outputs, which are general enough to capture a wide range of applications including image classification, while also allowing us to demonstrate our theoretical insights. In direct encoding, the core idea is to directly input analog signals without converting them into binary spike trains. This approach, appearing under different names \cite{direct_input_2_rueckauer_2017, direct_input_3_wu_2019, trainable_membrane_constant_2021}, turned out as a practical alternative to more biologically plausible schemes \cite{DIET-SNN_direct_input_encoding_2023}.
The idea behind membrane potential outputs is often explained as adding one more layer after the last spike layer without firing and threshold mechanism \cite{SNN_regression_2024, Lessons_from_DL_eshraghian_2023}. This is equivalent to adding an affine layer to convert the binary spike activations into real-valued outputs, a technique also commonly applied in non-spiking ANNs. Please see Appendix \ref{sec:discussion_SNN} for more details.

\begin{definition}[Direct input encoding, membrane potential decoding] \label{def:coding_schemes}
Let $E: \sR^{n_\text{in}} \to \sR^{n_\text{in} \times T}$ and $D: \set{0,1}^{n_L \times T} \to \sR^{n_\text{out}}$ be an encoder and decoder for a discrete-time LIF-SNN, respectively. The encoder $E$ employs a \textbf{direct encoding} scheme if 
\begin{equation*}
    E(\vx)(t) = \vx \quad \forall t\in [T].    
\end{equation*}
The decoder $D$ relies on \textbf{membrane potential outputs} if

\begin{equation*}
        D\Big( \big(\vs(t)\big)_{t \in [T]} \Big) = \sum_{t=1}^T a_t (\mV s(t) + \vc) 
\end{equation*}
for $\va = (a_1,\dots,a_T)\in\sR^T$, $\vc\in \sR^{n_\text{out}}$, and $\mV\in\sR^{n_\text{out}\times n_L}$.
\end{definition}

\section{Structure of computations in SNNs}\label{sec:universal_approx}
Our goal is to deepen the understanding of the computational framework underlying the discrete-time LIF model. 
We begin by examining fundamental properties that arise directly from its definition, followed by a more in-depth analysis of its computational structure, focusing on (functional) approximation capabilities and input partitioning.

\subsection{Elementary properties}
One immediately observes an intrinsic recursiveness of the model even without explicit recursive connections in the spatial feedforward architecture \cite{Neftci2019SurrogateGD}. This inherent statefulness property indicates the ability of this model to process dynamical (neuromorphic) data effectively \cite{Rathi23Survey}. Nevertheless, we study SNNs in a simpler setting with static data to carve out the underlying properties. 

In the considered setting, discrete-time LIF-SNNs are closely linked to ANNs with Heaviside activation function; see \Cref{App:Model_Elem_Prop} for more details. Crucially, for $T=1$ discrete-time LIF-SNNs are equivalent to Heaviside ANNs meaning that discrete-time LIF-SNNs can realize any Boolean function and approximate continuous functions to arbitrary degree inheriting these properties from Heaviside ANNs \cite{leshno_universal_1993, book_threshold_ANN_boolean_anthony_2001}. More exactly, we conclude that one-layer (i.e., $L=1$) discrete-time LIF-SNNs possess the universal approximator property for continuous functions, which similarly as in the case of ANNs can be extended to arbitrary $T, L \in \sN$ by the following result. 
\begin{proposition}[Identity mapping] \label{prop:IdentityMapping}
    Let $\mPhi$ be a discrete-time LIF-SNN with latency $T \in \sN$ and $L \in \sN$ layers with constant width $n \in \sN$ in each layer. Then, there exists a configuration of parameters such that $\big(\vs_\Phi^L(t)\big)_{t \in [T]} = \vs^0$ for any $\vs^0 \in \set{0,1}^{n\times T}$.  
\end{proposition}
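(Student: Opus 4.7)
The plan is to give a constructive proof by exhibiting an explicit parameter configuration under which every hidden layer, at every time step, simply copies its incoming spike pattern to its output. Concretely, I aim to choose the (hyper)parameters so that $\vs^{\ell}(t) = \vs^{\ell-1}(t)$ for every $\ell \in [L]$ and $t \in [T]$, after which the claim follows by induction on $\ell$.

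The natural choice is to disable the temporal memory and pick a threshold that cleanly separates $0$ and $1$. Specifically, I would set, for each $\ell \in [L]$, the leaky term $\beta^{\ell} = 0$, the initial membrane potential $\vu^{\ell}(0) = \vzero$, the weight matrix $\mW^{\ell} = \mI_{n}$ (the $n \times n$ identity), the bias $\vb^{\ell} = \vzero$, and the threshold $\vartheta^{\ell} = \tfrac{1}{2} \in (0,\infty)$. Substituting into the spike update in~\eqref{eq:discrete_snn_dynamics} yields
\begin{equation*}
    \vs^{\ell}(t) \;=\; H\!\left(\vs^{\ell-1}(t) - \tfrac{1}{2}\vone_{n}\right).
\end{equation*}
Since $\vs^{\ell-1}(t) \in \{0,1\}^n$, each coordinate of the argument equals either $-\tfrac{1}{2}$ or $+\tfrac{1}{2}$, and the entrywise Heaviside recovers exactly $\vs^{\ell-1}(t)$. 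Hence the layer acts as the identity on the spike pattern at every time step.

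To finish, I would argue by induction on the layer index $\ell$: the base case $\ell = 0$ is immediate by definition of the initial spike activations, and the inductive step is precisely the identity-layer computation above. Thus $\vs^{L}_{\Phi}(t) = \vs^{0}(t)$ for every $t \in [T]$, which is the desired conclusion. The only subtle point worth checking—and arguably the main (mild) obstacle—is that the reset term $-\vartheta^{\ell}\vs^{\ell}(t)$ in the membrane update does not corrupt subsequent time steps; this is handled automatically by the choice $\beta^{\ell} = 0$, which zeroes out $\vu^{\ell}(t-1)$ in the update at time $t$, so the reset leaves no trace. Since the encoder and decoder are not constrained by the statement, no assumptions on them are needed for the claim.
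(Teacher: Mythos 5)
Your construction is correct: with $\beta^{\ell}=0$ (which is permitted, since the paper only requires $\beta^{\ell}\in[0,1]$), $\mW^{\ell}=\mI_n$, $\vb^{\ell}=\vzero$, $\vu^{\ell}(0)=\vzero$ and $\vartheta^{\ell}=\tfrac12$, the spike update collapses to $\vs^{\ell}(t)=H\big(\vs^{\ell-1}(t)-\tfrac12\vone_n\big)=\vs^{\ell-1}(t)$, each time step decouples completely from the others, and the induction on $\ell$ goes through with no further work. You are also right that the reset term is harmless precisely because the leak annihilates the stored potential.

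This is, however, a genuinely different route from the paper's. The paper keeps the temporal dynamics fully alive: it sets $\beta=1$, $\vartheta=1$ and $\mW=(1+\epsilon)\mI_n$ with $0<\epsilon<\tfrac1T$, and then runs an induction over $t\in[T]$ showing that the membrane potential always lies in $\set{k\epsilon: k\leq t}$, hence stays strictly below the threshold and never triggers a spurious spike, while the extra $\epsilon$ in the weight guarantees a spike whenever the input spikes. The trade-off is clear: your argument is shorter and entirely static, but it relies on being allowed to switch the memory off; the paper's argument is more delicate but demonstrates that the identity map is realizable even in the non-leaky regime $\beta=1$, where the neuron integrates its full history — a regime in which one must actively control the accumulated potential across all $T$ steps. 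For the proposition as stated, either construction suffices.
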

\begin{proof}[Proof sketch]
    It suffices to consider the case $L=1$. It is straightforward to verify that 
     \begin{equation*}
        \mPhi:=\Big( \big(\mW, \vb\big), \big(\vu(0),\beta,\vartheta\big), T,(E,D) \Big)    
     \end{equation*}    
     with $\mW = (1+\varepsilon)\mI_{n}$ for $0<\varepsilon < \frac{1}{T}$, $\vb =\vzero$, $\vu(0) = \vzero$, $\beta =1$, and $\vartheta = 1$ satisfies $\big(\vs_\Phi^L(t)\big)_{t \in [T]} = \vs^0$ for any $\vs^0 \in \set{0,1}^{n\times T}$. Details are provided in Appendix \ref{sec:proofs_approx}.
\end{proof}

\subsection{Approximation properties}

\cite{threshold_ANN_2022} extended the observation that Heaviside ANNs compute piecewise constant functions on polyhedra, analogous to how ReLU ANNs compute continuous piecewise linear functions, by proving that two-layer Heaviside ANNs can represent any function realizable by Heaviside ANNs of arbitrary depth. 
We leverage the insights from \cite{threshold_ANN_2022} to enhance the universal approximation property for discrete-time LIF-SNNs (which for $T=1$ are equivalent to Heaviside ANNs) and derive approximation rates for Lipschitz continuous functions (on compact domains). We recall that a function \( f: \Omega \subset \mathbb{R}^{\nin} \to \mathbb{R} \) is \(\Gamma\)-Lipschitz with respect to the \( \|\cdot\|_\infty \) norm if  
\[
|f(\vx) - f(\vy)| \leq \Gamma \|\vx - \vy\|_\infty \quad \text{for all } \vx, \vy \in \Omega,
\]  
where the \( \ell_\infty \)-norm is defined as  
$
\|\vx\|_{\infty} = \max_{i \in [n]} |x_i|.
$
\begin{theorem}\label{thm:approx}
    Let $f$ be a continuous function on a compact set $\Omega\subset \sR^{\nin}$. For all $\varepsilon>0$, there exists a discrete-time LIF-SNN $\mPhi$ with direct encoding, membrane potential output, $L=2$ and $T=1$ such that%
    \begin{equation}\label{eq:thm_approx}
    \|R(\mPhi)-f\|_{\infty}\leq \varepsilon.
    \end{equation}
    Moreover, if $f$ is $\Gamma$-Lipschitz, then $\mPhi$ can be chosen with width parameter $\vn=(n_1,n_2)$ given by
    \begin{align}\label{eq:thm_num_neurons}
        n_1&=\left(\max\left\{\left\lceil \frac{\operatorname{diam}_{\infty}(\Omega)}{\varepsilon} \Gamma\right\rceil, 1\right\}+1\right)\nin,\notag \\
        n_2&=\max\left\{\left\lceil \frac{\operatorname{diam}_{\infty}(\Omega)}{\varepsilon}\Gamma\right\rceil^{\nin},1\right\},
    \end{align}
    where $\operatorname{diam}_\infty(\Omega)=\sup_{\vx,\vy\in\Omega}\|\vx-\vy\|_{\infty}$.
\end{theorem}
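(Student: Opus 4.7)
The approach is to use the equivalence between $T=1$ discrete-time LIF-SNNs and two-layer Heaviside ANNs (noted in Section~\ref{sec:universal_approx}): once this reduction is made, the task becomes building a shallow Heaviside network whose second-layer activations are indicators of a cubical grid on $\Omega$, and then letting the membrane-potential decoder linearly combine them into a piecewise constant approximant of $f$.

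First, I would cover $\Omega$ by an axis-aligned grid of cubes of $\|\cdot\|_\infty$-side length $h$, where $h=\epsilon/\Gamma$ in the Lipschitz case, and $h$ is supplied by uniform continuity on the compact $\Omega$ in the general continuous case. By construction the oscillation of $f$ on any grid cube is at most $\epsilon$. Writing $D=\operatorname{diam}_\infty(\Omega)$ and $K=\max\{\lceil D\Gamma/\epsilon\rceil,1\}$, the grid uses $K+1$ distinct thresholds per coordinate and comprises at most $K^{\nin}$ cubes meeting $\Omega$, matching the widths claimed in \cref{eq:thm_num_neurons}.

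Second, I would configure the first hidden layer with $n_1=(K+1)\nin$ neurons indexed by pairs (coordinate $i$, threshold level $k$). The $(i,k)$-th row of $\mW^1$ is $\ve_i^\top$, the corresponding entry of $\vb^1$ is $-a_{i,k}$, and with $\vu^1(0)=\vzero$ and $\vartheta^1=0$ the spike vector at $t=1$ is exactly $\bigl(H(x_i-a_{i,k})\bigr)_{i,k}$. The second hidden layer would use $n_2=K^{\nin}$ neurons, one per grid cube $C_{\vec j}=\prod_i [a_{i,j_i-1},a_{i,j_i})$. For such a neuron, placing weights $+1$ on the first-layer entries $H(x_i-a_{i,j_i-1})$, weights $-1$ on the entries $H(x_i-a_{i,j_i})$, zero elsewhere, and threshold $\vartheta^2=\nin-\tfrac12$, produces a Heaviside output that fires iff $\vx\in C_{\vec j}$: the pre-activation sum $\sum_i \bigl[H(x_i-a_{i,j_i-1})-H(x_i-a_{i,j_i})\bigr]$ equals $\nin$ on $C_{\vec j}$ and is at most $\nin-1$ off of it, since any coordinate violating the interval constraint contributes $0$ rather than $1$. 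Finally, the membrane-potential decoder specialized to $T=1$ is $D(\vs(1))=a_1(\mV\vs(1)+\vc)$ (Definition~\ref{def:coding_schemes}); choosing $a_1=1$, $\vc=\vzero$, and the $\vec j$-th column of $\mV$ equal to a representative value $f(\vx_{\vec j})$ for some $\vx_{\vec j}\in C_{\vec j}\cap\Omega$ (arbitrary if the intersection is empty), the realized function is the piecewise constant interpolant $\sum_{\vec j} f(\vx_{\vec j})\indi_{C_{\vec j}}$, and \cref{eq:thm_approx} follows from the per-cube oscillation bound.

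The main technical obstacle is boundary bookkeeping: because the cubes $C_{\vec j}$ are half-open, every $\vx\in\Omega$ must activate exactly one second-layer indicator, which requires fixing a convention for $H(0)$, perturbing the top thresholds by an infinitesimal amount, or extending the outermost cubes to closed intervals on one side. A minor additional check is the degenerate regime $K=1$ (when $\epsilon$ already exceeds the oscillation of $f$), where the construction collapses to a near-constant network consistent with the $\max\{\cdot,1\}$ clauses in \cref{eq:thm_num_neurons}. The remaining steps are routine verifications and counting.
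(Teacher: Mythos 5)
Your proposal is correct and follows essentially the same route as the paper: approximate $f$ by a step function on a cubical grid (via uniform continuity, or $h=\epsilon/\Gamma$ in the Lipschitz case), realize each cube indicator with a first layer of coordinate-threshold Heaviside units $H(x_i-a_{i,k})$ and a second layer implementing the AND $\prod_j\chi_j=H(\sum_j\chi_j-\nin)$, then linearly combine via the decoder—exactly the construction in the paper's Lemmas on expressing step functions and approximating by them. The only nit is that you set $\vartheta^1=0$, which violates the model's requirement $\vartheta^\ell\in(0,\infty)$; this is trivially repaired by absorbing a positive threshold into the bias, as the paper does with $\vartheta^1=1$ and $\vb^1=\vb+\vone$.
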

\begin{proof}[Proof sketch]
   The proof consists of showing the following two statements: (1) continuous functions can be approximated by step functions, and (2) step functions can be realized by discrete-time LIF-SNNs. More precisely, assuming w.l.o.g. that \( \Omega \subset [-K,K]^{\nin} \) for some \( K \in \mathbb{R} \), we prove that \( f \) can be approximated to arbitrary precision by a function that is constant on a partition of \( [-K,K]^{\nin} \) into hypercubes \( \{C_i\}_{i=1}^{m} \), i.e.,  $\Bar{f} = \sum_{i=1}^{m} \Bar{f}_i \indi_{C_i}$. Next, we show that there exists a discrete-time LIF-SNN \( \Phi \) that realizes \( \Bar{f} \), i.e., $R(\Phi) = \Bar{f},$ using exactly the number of neurons given in \eqref{eq:thm_num_neurons}. The complete proof is given in Appendix \ref{sec:proofs_approx}.
\end{proof}
\begin{remark}[Lipschitz condition]
   The Lipschitz assumption primarily serves to make the result more explicit but is not essential. Since \( f \) is continuous on a compact set, it is uniformly continuous, allowing us to replace  
$
\left \lceil\frac{\operatorname{diam}_\infty(\Omega)}{\varepsilon}\Gamma\right\rceil 
$  
with  
$
\left\lceil\frac{\operatorname{diam}_\infty(\Omega)}{\omega^\dagger(\varepsilon)}\right\rceil,
$  
where \( \omega^\dagger(s) = \inf_t \{ t : \omega(t) > s \} \) is the generalized inverse of the modulus of continuity \( \omega \).  

Notably, when \( \Gamma = 0 \), i.e., \( f \) is constant, \eqref{eq:thm_num_neurons} yields a width parameter of \( \vn = (2\nin,1) \). Here, \( 2\nin \) corresponds to the minimum number of hyperplanes required to define a hypercube covering \( \Omega \).

\end{remark}
\begin{remark}[Approximation rates]
Theorem \ref{thm:approx} provides explicit convergence rates in terms of both the width parameter $\vn$ and input dimension $\nin$, as shown in \eqref{eq:thm_num_neurons}. This represents an improvement over traditional universal approximation results, which typically lack quantitative guarantees. While \cite{threshold_ANN_2022} recently established bounds on neural network size, our result differs in two crucial aspects. First, their analysis focuses on representing functions that are constant on polyhedral pieces, rather than approximating general continuous functions. Second, their bound exhibits quadratic scaling in $\left\lceil \frac{\operatorname{diam}_{\infty}(\Omega)}{\varepsilon}\Gamma\right\rceil^{\nin}$, whereas our bound achieves linear scaling in this term.
\end{remark}
Our next results tell us that there exist (Lipschitz) continuous functions for which reducing significantly the width of the first hidden layer provokes a significant increase in the approximation error. This demonstrates that our bound on the required number of neurons is optimal up to constant factors in the worst case. Specifically, we can construct continuous functions for which this number of neurons is necessary for achieving the desired approximation accuracy, establishing a matching lower bound. We prove it in the special case of $\Omega=[0,1]$, the proof is deferred to Appendix \ref{sec:proofs_approx}. 
\begin{proposition}\label{prop:lower_bound_neurons}
 For any $\Gamma>0$, define $f:[0,1]\to \sR$ by $f(x)=\Gamma x$. For any $\varepsilon\in(0,1]$, there exists a discrete-time LIF-SNN $\mPhi$ with $T=1,L=2$ and $n_1=\lceil\Gamma/\varepsilon\rceil+1$, such that \[\|R(\mPhi)-f\|\infty\leq \varepsilon,\] but any discrete-time LIF-SNN $\mPhi'$ with $T=1,L=2$ and first layer width $n'_1\leq\Gamma/\varepsilon^{\alpha}-1$, with $\alpha<1$, will satisfy \[\|R(\mPhi')-f\|_{\infty}\geq \frac12\varepsilon^{\alpha}.\]
\end{proposition}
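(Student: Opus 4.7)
The plan is to exploit the rigidity imposed by the architecture with $\nin=1$, $T=1$, and $L=2$: for such networks $R(\mPhi')$ is necessarily piecewise constant on at most $n_1'+1$ subintervals of $[0,1]$, which turns the lower bound into a straightforward quantization argument for the linear function $f(x)=\Gamma x$.

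I would first establish the structural claim. When $T=1$, the dynamics in~\eqref{eq:discrete_snn_dynamics} collapse at every layer to a single Heaviside applied to an affine function of the previous spike vector; with $\nin=1$ each first-layer neuron then realizes $x\mapsto H(w_i x+\tilde b_i)$, which has at most one break point in $[0,1]$. As $x$ sweeps $[0,1]$, the pattern $\vs^1(1)\in\{0,1\}^{n_1'}$ therefore takes at most $n_1'+1$ distinct values, inducing a partition of $[0,1]$ into at most $n_1'+1$ subintervals on which $\vs^1(1)$ is constant. Because $\vs^2(1)$ depends only on $\vs^1(1)$ and the membrane-potential decoder is affine in $\vs^2(1)$, the output $R(\mPhi')$ is constant on each element of this partition. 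The upper bound then follows directly from Theorem~\ref{thm:approx} applied to the $\Gamma$-Lipschitz function $f$ on $\Omega=[0,1]$ with $\operatorname{diam}_\infty(\Omega)=1$, which yields a discrete-time LIF-SNN of the stated width achieving error at most $\epsilon$.

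For the lower bound I would run a simple volume argument on the partition. Write $R(\mPhi')=\sum_{j=1}^N c_j\indi_{I_j}$ with $N\le n_1'+1$, and set $\ell_j=|I_j|$. If $\|R(\mPhi')-f\|_\infty<\tfrac12\epsilon^\alpha$, then on each piece $I_j=[a_j,b_j]$ both $|c_j-\Gamma a_j|$ and $|c_j-\Gamma b_j|$ are less than $\tfrac12\epsilon^\alpha$, so by the triangle inequality the oscillation of $f$ on $I_j$ satisfies $\Gamma\ell_j<\epsilon^\alpha$, i.e.\ $\ell_j<\epsilon^\alpha/\Gamma$. Summing over $j$ and using $\sum_j\ell_j=1$ forces $N>\Gamma/\epsilon^\alpha=n_1'$, contradicting the structural bound once a minor boundary-break-point accounting is included (any first-layer break point at $x=0$ or $x=1$ does not split a piece of $[0,1]$, so the effective count is $N\le n_1'$). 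This pins down $\|R(\mPhi')-f\|_\infty\ge \tfrac12\epsilon^\alpha$.

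The main obstacle is the structural claim that no subsequent layer or decoder component can refine the first-layer partition. This is clean here because at $T=1$ the whole network is a deterministic Boolean composition of $\vs^1(1)$, so any new break point in $R(\mPhi')$ would require $\vs^1(1)$ itself to change; at larger $T$, in contrast, the membrane memory can create additional break points at deeper layers, which is precisely the phenomenon studied by the input-partitioning analysis in the rest of the paper. Tracking the precise constant $\tfrac12$ is the other delicate step, and is handled by the boundary accounting above.
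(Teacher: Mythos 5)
Your overall strategy is the same pigeonhole idea as the paper's: with $T=1$ the network is a Boolean function of the first-layer activations, so $R(\mPhi')$ is piecewise constant on a partition of $[0,1]$ determined by the $n_1'$ first-layer thresholds, and a linear $f$ must incur error proportional to the length of the longest piece. You execute the pigeonhole by summing the piece lengths, whereas the paper locates one long piece by comparing against the grid of the reference network $\mPhi$; these are equivalent. Your structural argument (each first-layer neuron contributes one break point, deeper layers and the affine decoder cannot refine the partition, hence at most $n_1'+1$ pieces) is in fact spelled out more carefully than in the paper, which simply asserts that $R(\mPhi')$ is constant on $n_1'-1$ intervals.

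However, the final contradiction does not close. Your summation gives $N>\Gamma/\epsilon^{\alpha}=n_1'$, i.e.\ $N\geq n_1'+1$, which is perfectly compatible with your own structural bound $N\leq n_1'+1$. The ``boundary accounting'' you invoke to tighten this to $N\leq n_1'$ is not valid: nothing forces any of the $n_1'$ break points of an adversarially chosen $\mPhi'$ to sit at $x=0$ or $x=1$, so all of them can lie in $(0,1)$ and produce $n_1'+1$ genuine pieces. With the correct count $N\leq n_1'+1$, your argument only yields
\[
\|R(\mPhi')-f\|_{\infty}\;\geq\;\frac{\Gamma}{2(n_1'+1)}\;=\;\frac{\epsilon^{\alpha}}{2\left(1+\epsilon^{\alpha}/\Gamma\right)},
\]
which falls just short of the stated $\tfrac12\epsilon^{\alpha}$ (and indeed a network realizing the optimal piecewise-constant approximation on $n_1'+1$ equal pieces attains exactly this smaller error, so the constant cannot be recovered by this route). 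To be fair, the paper's proof leans on the unjustified claim of $n_1'-1$ pieces at the analogous step, so your argument is no weaker in substance; but as written it does not prove the inequality with the constant $\tfrac12$, and you should either justify a sharper piece count or accept the $(1+\epsilon^{\alpha}/\Gamma)^{-1}$ degradation in the constant.
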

\begin{remark}[Multidimensional case]
Although we have not formally proven it, we believe our result extends to the multidimensional case. A weaker version of Proposition \ref{prop:lower_bound_neurons} holds for any \( \nin \in \mathbb{N} \), as the construction in Theorem \ref{thm:approx} remains order-optimal for certain continuous functions, specifically Lipschitz functions depending on a single coordinate. Refining this condition in the multivariate setting is left for future work.

\end{remark}
Our insights into the computational structure so far still lack in describing two important aspects: What exactly is the contribution of the depth $L$ and the latency $T$ to the computational capacity, respectively? When $T=1$, our model reduces to standard Heaviside ANNs, for which the role of depth has been thoroughly analyzed (see \cite{threshold_ANN_2022}[Theorem 1]). While these networks, regardless of depth, can only realize functions that are piecewise constant on polyhedra, a fundamental question remains: Can deeper architectures achieve faster convergence rates or provide more precise approximation guarantees?

While discrete-time LIF-SNNs and Heaviside ANNs are not equivalent for $T>1$, they maintain structural similarities: discrete-time LIF-SNNs can be reformulated as Heaviside ANNs with block-diagonal weight matrices (repeated $T$ times) and time-varying bias terms that depend on initial spike activation (see \eqref{eq:block_diag} in \Cref{App:Model_Elem_Prop}). This raises a fundamental question about how the trade-off between constrained weights and time-dependent biases affects the computational capabilities of discretized LIF SNNs relative to Heaviside ANNs—a question we address through input partitioning.

\section{Input space partitioning complexity of discrete-time LIF-SNNs}\label{sec:linear_regions}

The seminal works \cite{linear_regions_2013_montufar, linear_regions_2014_montufar} pioneered research on the complexity of input partitioning in ANNs to address model expressivity in deep learning (see Appendix \ref{sec:related_works_appendix} for details). The direct correspondence between a discrete-time LIF-SNN's realizable values and its space partitions suggests that partition complexity even better characterizes SNN computation than the corresponding results in ANNs. While the partitioning mechanisms of ANNs are well understood, this question remains largely unexplored for SNNs, with only experimental insights from \cite{SNN_neural_architecture_search_2022}. 

\begin{figure}[ht]
     \centering
     \begin{subfigure}[b]{0.21\textwidth}
         \centering
         \includegraphics[width=0.98\textwidth]{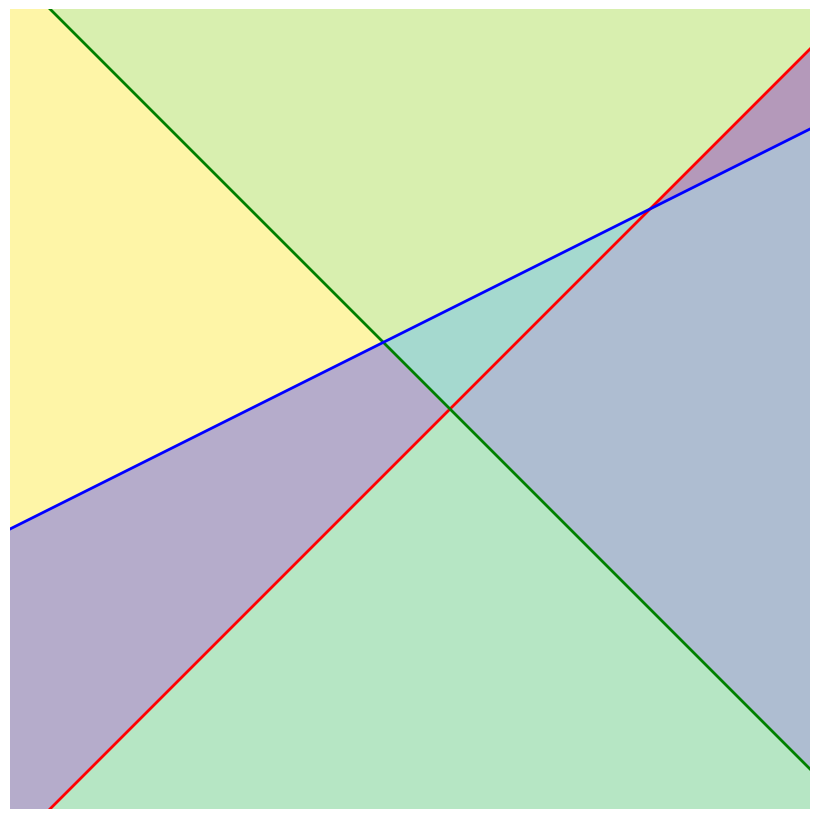}
         \caption{$t=1,\ell=1$}
     \end{subfigure}
     \hfill
     \begin{subfigure}[b]{0.21\textwidth}
         \centering
         \includegraphics[width=0.98\textwidth]{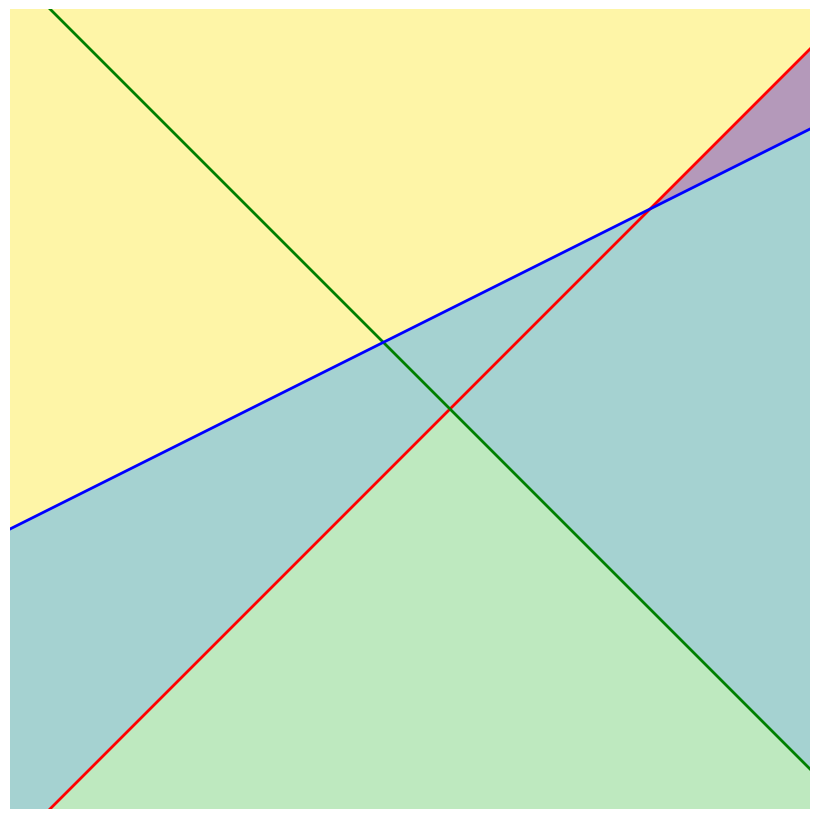}
         \caption{$t=1,\ell=2$}
     \end{subfigure}
          
     \begin{subfigure}[b]{0.21\textwidth}
         \centering
         \includegraphics[width=0.98\textwidth]{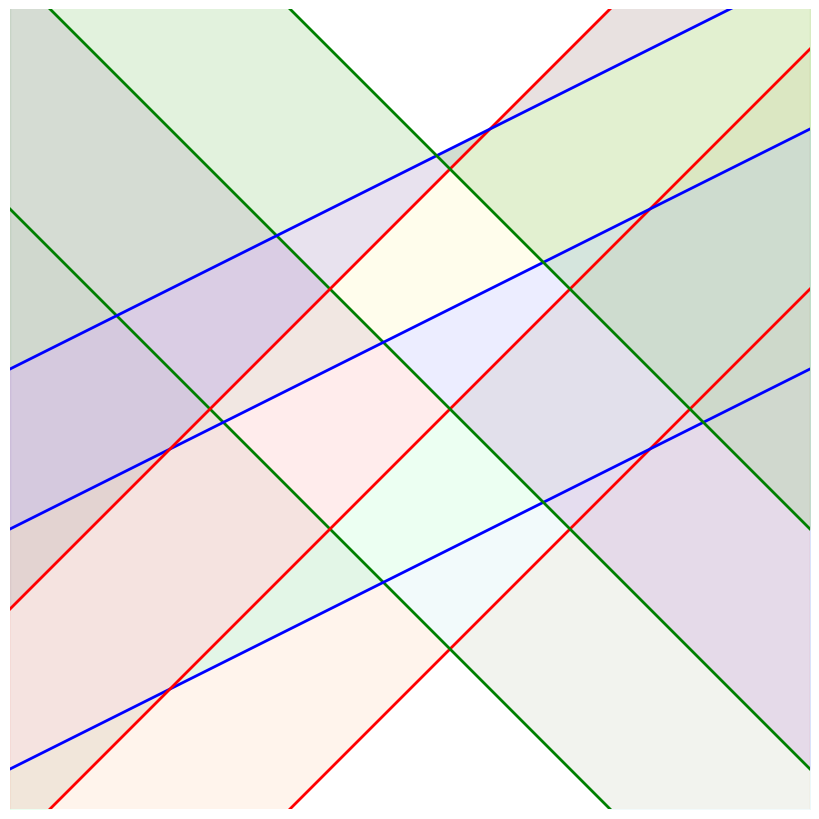}
         \caption{$t=2,\ell=1$}
     \end{subfigure}
     \hfill
     \begin{subfigure}[b]{0.21\textwidth}
         \centering
         \includegraphics[width=0.98\textwidth]{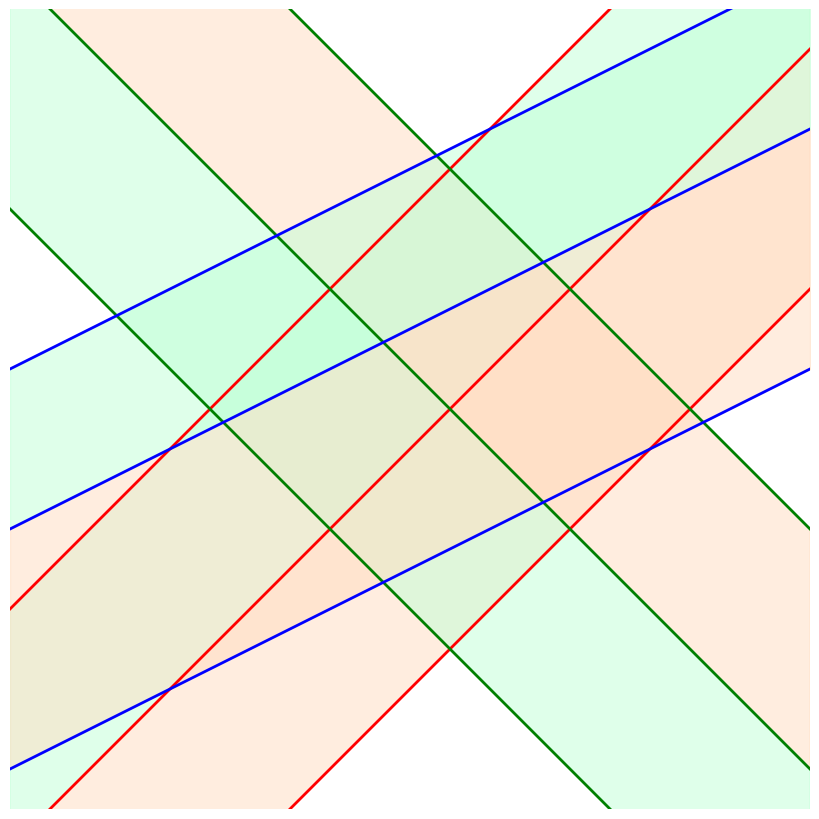}
         \caption{$t=2,\ell=2$}
     \end{subfigure}
    \caption{Each color represents a distinct constant value of $R(\mPhi)$ in the corresponding region. At $t=2$, neurons define parallel hyperplanes, while at $\ell=2$ some regions merge.}
    \label{fig:input_partition_init}
\end{figure}

This section establishes a theoretical framework for understanding input patterns in discrete-time LIF-SNNs. We extend the concepts of activation patterns and regions from \cite{surprisingly_few_regions_hanin_rolnick2019} to the discrete-time LIF-SNN setting. 
\begin{definition}[Regions]
    Let $\mPhi$ be a discrete-time LIF-SNN and denote by $\vs^1(t;x)$ the spike activations $\vs^1(t) \in \set{0,1}^{n_1}$ in the first hidden layer for $t\in[T]$ given an input vector $\vx \in \sR^{n_\text{in}}$. The \textbf{activation patterns} of $\mPhi$ for $\vx$ are defined as 
    \[
        \gA(\vx) := \big( \vs^1(t;\vx)\big)_{t\in[T]} \in \set{0,1}^{n_1 \times T}.
    \]
    The \textbf{activation regions} of $\mPhi$ are defined as the (non-empty) sets of input vectors that lead to an activation pattern $\gA$: 
    \[
        \gR(\gA) := \set{\vx \in \sR^{n_\text{in}}: \gA(\vx) = \gA}.
    \]
    Finally, the \textbf{constant regions} of $\mPhi$ are defined as the sets of input vectors that lead to the same output vector $\vy \in \sR^{n_\text{out}}$:
    \[
        \gC(\vy) := \set{\vx \in \sR^{n_\text{in}}: R(\mPhi)(\vx) = \vy}.
    \]
\end{definition}
We study the cardinality of the sets
\[
\gR:= \set{\gR(\gA): \gA \in \set{0,1}^{n_1} \text{ is activation pattern of } \mPhi},
\]
and
\[
\gC:=\set{\gC\big(R(\mPhi)(\vx)\big):  \vx \in \sR^{n_{\text{in}}}}
\]
referred to as the number of activation regions and the number of constant regions, respectively.
Here, activation regions are maximal input subsets yielding identical spike patterns in the first hidden layer, while constant regions are maximal subsets yielding identical outputs.

\begin{remark}[Activation versus constant regions]
    For any activation region $\gR(\gA)$, all inputs $\vx \in \gR(\gA)$ yield the same spike pattern $\big(\vs^1(t;\vx)\big)_{t\in [T]} = \gA$ in the first hidden layer, resulting in identical inputs to subsequent layers. Therefore, each activation region maps to a unique constant output, implying $\gC \subseteq \gR$. However, multiple activation regions may map to the same output, potentially reducing the number of constant regions.
\end{remark}

For shallow ANNs with piecewise linear activations (e.g., ReLU or Heaviside), the input partition can be characterized geometrically via hyperplane arrangements \cite{linear_regions_2013_montufar}. Each hidden neuron corresponds to a hyperplane in $\mathbb{R}^{n_\text{in}}$, forming an arrangement whose number of regions is determined by Zaslavsky’s theorem \cite{zaslavsky_theorem_1975, book_enum_combinatorics_2011}, a key result in enumerative combinatorics. The maximum number of regions created by an ANN with $n$ hidden neurons is $2^n$ for $n \leq n_\text{in}$ and $\sum_{i=0}^{n_\text{in}} \binom{n}{i}$
for $n \geq n_\text{in}$, achieved only when the hyperplanes are in general position. We refer to Appendix \ref{sec:proof_counting_regions} for a formal discussion of hyperplane arrangements and the notion of general position.

In the case of discrete-time LIF-SNNs, each neuron in the hidden layer corresponds to multiple hyperplanes in the input space. Indeed, the spike activation $s^1_k(t)$ of an arbitrary hidden neuron $k$ can be shown to be equal to 
\begin{equation}\label{eq:shift}
H\bigg(\vw_k^T\vx + b_k +\underbrace{\frac{\beta^{t} u_k(0) - \vartheta\Big( 1+ \sum_{i=1}^{t-1}\beta^{i} s_k(t-i)\Big)}{\sum_{i=0}^{t-1} \beta^i}}_{=:g_{t-1}(s_k(1),\ldots,s_k(t-1))}\bigg),
\end{equation}
where $\vw_k$ and $b_k$ are the $k$-th row and element of $\mW$ and $\vb$, respectively, with the layer indices omitted for convenience (see Appendix \ref{subsec:temporal_separation} for the detailed derivation). 
Two key observations follow:  
(1) Since the spatial transformation $(\mW, \vb)$ is shared across all time steps, all induced hyperplanes are parallel, whether they originate from different time steps or distinct spike activations of previous steps, as highlighted by the shift term $g_t$ in \eqref{eq:shift}.  
(2) With $2^{t-1}$ possible spike sequences in $\{0,1\}^{t-1}$, a neuron at time $t$ may correspond to up to $2^{t-1}$ hyperplanes. This results in  
$
1 + \sum_{t=1}^{T} 2^{t-1} = 2^T
$ 
input space regions, each mapped to a unique spike pattern $\big( s_k(t) \big)_{t \in [T]} \in \{0,1\}^T$, matching the cardinality of $\{0,1\}^T$.
However, we show that the actual number grows only quadratically with $T$. 
\begin{lemma}[Temporal input separation] \label{lemma:temporal_seperation}
    Consider an arbitrary neuron $k$ in the first hidden layer of a discrete-time LIF-SNN with $T$ time steps. Then $k$ partitions the input space $\sR^{n_\text{in}}$ via parallel hyperplanes into at most $\tfrac{T^2+T+2}{2}$ regions, on each of which the time series $\big(s^1_k(t)\big)_{t \in [T]}$ takes a different value in $\set{0,1}^T$. This upper bound on the maximum number of regions is tight in the sense that there exist values of the spatial parameters $u_k(0), \beta, \vartheta \in \sR \times [0,1] \times \sR_+$ such that the bound is attained. 
\end{lemma}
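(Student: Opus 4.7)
My plan is first to derive \eqref{eq:shift} by iterating the LIF recursion: dividing the firing condition $\beta u_k(t-1)+\vw_k^T\vx+b_k\geq\vartheta$ by $A_t:=\sum_{i=0}^{t-1}\beta^i>0$ rearranges it into the form $s_k(t)=H(\vw_k^T\vx+b_k+g_{t-1}(s_k(1),\ldots,s_k(t-1)))$. Every threshold hyperplane bounding a region thus has normal $\vw_k$, so they are automatically parallel, and it suffices to count the distinct spike sequences $(s_k(1),\ldots,s_k(T))\in\{0,1\}^T$ realized as the scalar $z:=\vw_k^T\vx+b_k$ ranges over $\mathbb R$; let $r(T)$ denote the maximum of this count over admissible $(u_k(0),\beta,\vartheta)$.

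The bound $r(T)\leq(T^2+T+2)/2$ would follow by induction on $T$ from the recursion $r(T)\leq r(T-1)+T$ together with the base case $r(1)=2$. The recursion is natural: when passing from step $T-1$ to $T$, each interval $I_\sigma$ on which $(s_k(1),\ldots,s_k(T-1))=\sigma$ is split into two subintervals iff the threshold $\theta_T(\sigma):=-g_{T-1}(\sigma)$ lies inside $I_\sigma$, so the inductive step reduces to the claim that at most $T$ such $\sigma$ can exist. Iterating then gives $r(T)\leq 1+\sum_{t=1}^T t=(T^2+T+2)/2$.

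I expect the ``at most $T$ splits'' claim to be the main obstacle. My approach is to analyse the pre-firing potential $v_T(z):=\beta u_{T-1}(z)+z+b_k$, which is piecewise affine in $z$ with positive slope $A_T$ on each of the $r(T-1)$ pieces (one per pattern $\sigma$) and jumps by $-\vartheta\sum_{i=1}^{T-1}\beta^{T-i}\Delta s_i$ at every previous-pattern transition. A new region at step $T$ is created precisely by an upward crossing of the level $\vartheta$ in the interior of a piece (a ``direct trigger'' at time $T$), so between two consecutive direct triggers $v_T$ must have descended below $\vartheta$, which requires a downward jump tracing back to an earlier transition whose cascade nets to a reset on $v_T$. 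Combined with the inductively applied statement that the number of direct triggers at earlier time $t<T$ is already at most $t$, a careful book-keeping of the cascades yields the bound. The cascade accounting will be the most delicate step, since a trigger at $t^*<T-1$ acts on $v_T$ through the dominant contribution $-\vartheta\beta^{T-t^*}$ plus smaller corrections from cascade-induced flips of $s_{t^*+1},\ldots,s_{T-1}$ whose signs need not agree.

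For the tightness statement I would produce explicit parameters constructed iteratively in $T$. Taking $b_k=0$, $\vartheta=1$, $\beta\in(0,1)$, and $u_k(0)>0$ sufficiently small makes all the threshold inequalities defining the maximally-refined pattern arrangement simultaneously feasible; as a concrete check, $(u_k(0),\beta,\vartheta)=(0.1,0.5,1)$ already realizes the seven patterns $(0,0,0),(0,0,1),(0,1,0),(0,1,1),(1,0,1),(1,1,0),(1,1,1)$ at $T=3$, attaining $7=(9+3+2)/2$. In the inductive step, given parameters achieving $r(T-1)=1+(T-1)T/2$, one chooses the extension parameters so that exactly $T$ further splits occur.
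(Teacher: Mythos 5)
Your setup matches the paper's: you derive the same shifted-Heaviside formula, reduce to counting spike patterns as the scalar $z=\vw_k^\top\vx+b_k$ varies over $\sR$, observe that an interval $I_\sigma$ splits at step $t$ iff its own threshold $-g_{t-1}(\sigma)$ lies inside it, and aim for the recursion $r(t)\le r(t-1)+t$. The problem is that the one step carrying all the difficulty --- that at most $t$ of the $r(t-1)$ intervals can contain their own threshold --- is exactly the step you leave as ``careful book-keeping of the cascades,'' and the sketch you give does not plausibly close it. Counting upward crossings of $v_t$ against intervening downward jumps can only bound the number of new triggers by the number of sign-changing boundaries of the piecewise-affine potential, and there are up to $r(t-1)-1\sim t^2/2$ such boundaries with jumps of inconsistent sign (as you yourself note); nothing in the descent argument caps the count at $t$ rather than at order $t^2$. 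The paper's resolution is a different, purely combinatorial observation you are missing: partition the codes $\sigma=(a_1,\dots,a_{t-1})$ by their number of ones $m\in\{0,\dots,t-1\}$; within each class, the left-to-right order of the regions is the increasing lexicographic order of $\sigma$, while the thresholds $-g_{t-1}(\sigma)$ are ordered by $\sum_j\beta^{t-j}a_j$, which is \emph{decreasing} in the lexicographic order (later spikes carry larger weight $\beta^{t-j}$ when $\beta\le 1$). Since the regions are disjoint intervals traversed in one order while their candidate thresholds are traversed in the reverse order, at most one threshold per class can land in its own region, giving at most $t$ splits in total. Without this (or an equivalent) argument your bound is unproven.

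The tightness half has a second gap: your induction ``given parameters achieving $r(T-1)$, choose the extension parameters so that exactly $T$ further splits occur'' is not available, because $(u_k(0),\beta,\vartheta)$ are fixed once and govern all time steps simultaneously --- there are no per-step parameters to tune. Your numerical check at $T=3$ with $(0.1,0.5,1)$ is correct, but it does not yield the general statement. The paper instead exhibits a single global choice, $\beta=1$, $\vartheta=1$, $u_k(0)$ a small generic value: then all codes with the same spike count $m$ collapse to the one threshold $(1+m)/t$, the new threshold $(m+1)/(t+1)$ interleaves strictly between $m/t$ and $(m+1)/t$, and genericity of $u_k(0)$ rules out coincidences among the finitely many fractions, so every step contributes exactly $t$ new regions. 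You would need an argument of this global type to establish tightness for all $T$.
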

The proof of the statement is provided in Appendix \ref{subsec:temporal_separation}. 
Next, we aim to refine the estimation of activation and constant regions in discrete-time LIF-SNNs by incorporating (1) the spatial arrangement of hyperplanes from different neurons in the first hidden layer and (2) the effect of deeper layers. The first requires reconsidering Zaslavsky's theorem, as the hyperplanes are not in general position. The second follows naturally from the observation that increasing depth does not add activation regions; see also \Cref{fig:input_partition_init}. The proof of the next statement can be found in Appendix \ref{subsec:deep_SNN_regions}.

\begin{theorem}[Maximum number of regions]\label{theorem:num_regions}
    Consider the set $\Psi$ of discrete-time LIF-SNNs with $T$ time steps, input dimension $\nin$, and $n_1$ neurons in the first hidden layer. Then the number of constant and activation regions generated by any $\mPhi \in \Psi$ is upper bounded by
    \begin{equation*}
        \abs{\gC} \leq \abs{\gR} \leq 
        \begin{cases}
            \sum_{i=0}^{n_\text{in}} \left( \frac{T^2+T}{2}\right)^i\binom{n_1}{i} \quad &\text{if } n_1 \geq n_\text{in},\\
            \left(\frac{T^2+T+2}{2} \right)^{n_1} \quad &\text{otherwise.}
        \end{cases}    
    \end{equation*}
    Moreover, the upper bound is tight: There exist certain $\mPhi \in \Psi$ that attain the bound with appropriate configurations of the network parameters. In particular, for the first inequality to become an equality $\mPhi$ must have at least $n_1$ neurons in each layer.
\end{theorem}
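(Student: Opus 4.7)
The proof naturally splits into three parts: establishing the upper bound via hyperplane arrangement counting, showing that depth does not increase $\gR$ nor $\gC$, and exhibiting a construction that achieves the bound.

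For the upper bound, I would first apply Lemma \ref{lemma:temporal_seperation} to each of the $n_1$ neurons in the first hidden layer, concluding that the input space is partitioned by $n_1$ families of parallel hyperplanes, each family containing at most $m := \tfrac{T^2+T}{2}$ hyperplanes (since $k$ parallel hyperplanes cut $\sR^{n_\text{in}}$ into $k+1$ regions, and Lemma \ref{lemma:temporal_seperation} gives at most $m+1$ regions per neuron). I would then invoke a generalization of Zaslavsky's theorem for such arrangements: the maximum number of regions formed by $n_1$ families of $m$ parallel hyperplanes, when normal vectors across distinct families are in general position, equals $\sum_{i=0}^{\min(n_1,n_\text{in})} \binom{n_1}{i} m^i$. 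When $n_1 \geq n_\text{in}$ this yields the first case of the theorem directly; when $n_1 < n_\text{in}$, the binomial identity $\sum_{i=0}^{n_1} \binom{n_1}{i} m^i = (1+m)^{n_1}$ gives the second. The generalized formula can itself be proven by induction on the total number of hyperplanes, using the standard argument of "sweeping in" one hyperplane at a time and counting the subdivisions it induces on existing regions, with the parallel constraint ensuring that hyperplanes within the same family do not cut each other.

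For the second part, activation patterns in the first hidden layer deterministically determine the spike inputs to all subsequent layers, so deeper layers cannot split an activation region into further pieces; hence the bound on $|\gR|$ depends only on the first layer. Since each activation region produces a unique output via the deterministic forward pass and decoder, $|\gC| \leq |\gR|$, and the same bound applies. For tightness, I would construct $\mPhi$ whose first-layer weight vectors $\vw_k \in \sR^{n_\text{in}}$ are chosen so that any subset of $\min(n_1, n_\text{in})$ of them is linearly independent, and whose temporal parameters $(u_k(0), \beta, \vartheta)$ for each neuron are chosen as in the tightness part of Lemma \ref{lemma:temporal_seperation}, simultaneously achieving the per-neuron bound of $m+1$ regions. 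This realizes the upper bound on $|\gR|$.

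To further achieve $|\gC|=|\gR|$, I would ensure that subsequent hidden layers propagate the first-layer activation patterns without collapsing any of them, using the identity construction from Proposition \ref{prop:IdentityMapping} applied with at least $n_1$ neurons in each subsequent layer, after which the affine decoder can be set to map the $|\gR|$ distinct $T$-length spike sequences to $|\gR|$ distinct real values. The necessity of having $\geq n_1$ neurons in each layer for equality comes from an information-preservation argument: if an intermediate layer has fewer than $n_1$ neurons, the per-time-step spike map of that layer cannot be injective on $\{0,1\}^{n_1}$, and one can exhibit two activation patterns of the first layer that, after propagation, produce identical sequences and hence identical outputs. The main obstacle I anticipate is twofold: proving the generalized Zaslavsky bound with its tightness while simultaneously reconciling the general-position requirement on weight vectors with the constraint that each neuron must realize the full $m+1$ partition of Lemma \ref{lemma:temporal_seperation}; and making the necessity part of the "at least $n_1$ neurons per layer" claim rigorous, since the interplay of membrane potential carry-over across time steps with narrow intermediate layers requires a careful pigeonhole or rank argument rather than the naive per-step count.
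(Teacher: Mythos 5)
Your proposal follows essentially the same route as the paper: Lemma \ref{lemma:temporal_seperation} supplies at most $\tfrac{T^2+T}{2}$ parallel hyperplanes per first-layer neuron, a sweep/deletion induction on families of parallel hyperplanes gives the counting formula (this is the paper's Lemma \ref{lemma:spatial_separation}, with the recursion $r_{n,d}^k = r_{n-1,d}^k + k\,r_{n-1,d-1}^k$), depth adds no activation regions, and tightness together with $\abs{\gC}=\abs{\gR}$ follows from a general-position construction combined with the identity mapping of Proposition \ref{prop:IdentityMapping}. The one substantive gap is the point you flag but do not resolve: choosing the weight vectors $\vw_k$ so that every subset of $\min(n_1,n_\text{in})$ of them is linearly independent controls only non-parallelism and the dimension of intersections of up to $n_\text{in}$ representative hyperplanes; it does not prevent $n_\text{in}+1$ hyperplanes drawn from distinct families from passing through a common point, which depends on the offsets $g_{t-1}$ and would break tightness. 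The paper handles this in Lemma \ref{lemma:existence_general_position} by adding the families one at a time and translating each new family far enough that all existing intersection points lie strictly on one side of every shifted copy, using that the shifts are uniformly bounded. Finally, note that the paper only establishes the sufficiency direction of the ``at least $n_1$ neurons per layer'' clause (via the identity map and an injective decoder); your sketched necessity argument via non-injectivity on $\set{0,1}^{n_1}$ would not suffice as stated, because the activation patterns actually realized form only a small subset of $\set{0,1}^{n_1\times T}$ on which a narrower layer could in principle still act injectively.
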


\begin{remark}[Comparison with ReLU]
The maximum number of activation regions for shallow ReLU or Heaviside ANNs scales as either $\Theta(2^{n_1})$, i.e., exponential in $n_1$ when $n_1 \ll n_\text{in}$, or $\Theta(n_1^{n_\text{in}})$, i.e., polynomial in $n_1$ when $n_1 \gg 1$  with $n_\text{in}$ treated as a constant \cite{linear_regions_2013_montufar}.  
For shallow discrete-time LIF-SNNs, we instead obtain the scaling behavior $\Theta({T}^{2n_1})$ and $\Theta \Big( (T^2 n_1)^{n_\text{in}} \Big)$, respectively. The multiplicative factor $T^2$ indicates that shallow discrete-time LIF-SNNs with high latency $T$ can generate significantly more activation regions than shallow ReLU ANNs.

However, the maximum number of activation regions in discrete-time LIF-SNNs does not scale with depth $L$ as it does in ReLU ANNs. While the number of activation regions in ReLU ANNs increases exponentially with depth \cite{linear_regions_2014_montufar, bound_count_regions_DNNs_serra_2018, understanding_ReLU_DNNs_arora_2018}, the exponential growth is based on a linear dependence on $n_1$, typically on a value even smaller than $n_1$. Therefore, the growth of activation regions in discrete-time LIF-SNNs with latency is much faster than the growth in depth for deep ReLU ANNs.

\end{remark}

This observation highlights the distinct data-fitting approaches of discrete-time LIF-SNNs and ReLU ANNs. In ReLU ANNs, increasing depth and width refines the input partition, improving training data interpolation. In contrast, the input partition of a discrete-time LIF-SNN is fixed with depth but detailed from the first hidden layer, given sufficient latency $T$. While adding more layers doesn't refine the input partition, it can improve the accuracy of the function realized by these layers, which combine regions and map to the output. Although one additional Heaviside layer could suffice, its neuron count would be large compared to deeper architectures, similar to expressing a boolean function with negation and OR operations, achievable through linear combinations of neuron outputs.

\section{Experimental results}\label{sec:experiments}

\subsection{First hidden layer as a bottleneck layer}\label{sec:exp_first}
In this subsection, we conduct experiments to assess the contribution of subsequent hidden layers to network expressivity. Specifically, we compare ANNs and SNNs (with varying time steps). For each model, we have four layers and we first set the first hidden layer as a bottleneck, with a small number of neurons, which limits expressivity. We then progressively increase the width of subsequent layers to evaluate how this affects network expressivity.

\begin{figure}[ht]
    \centering
    \begin{subfigure}[b]{0.5\textwidth} 
         \centering
         \includegraphics[width=0.98\textwidth]{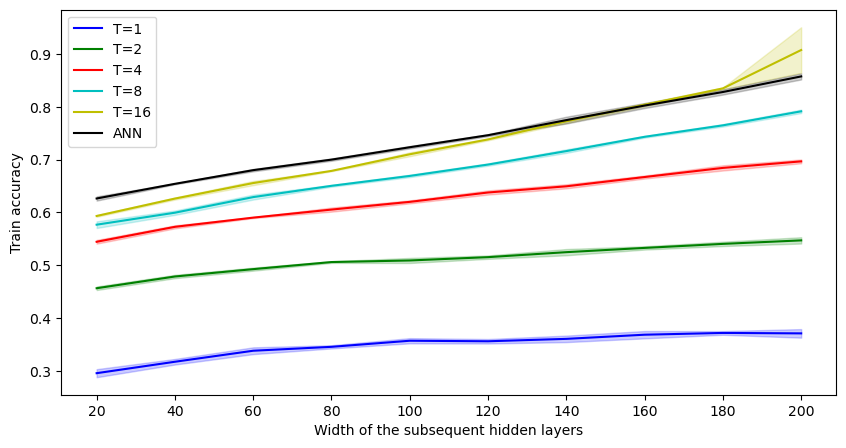}
         \caption{Training accuracy achieved by an ANN and SNNs with different numbers of time steps but identical spatial architectures}
         \label{fig:bottleneck_a}
    \end{subfigure}
    \begin{subfigure}[b]{0.5\textwidth}
        \centering
        \includegraphics[width=0.98\textwidth]{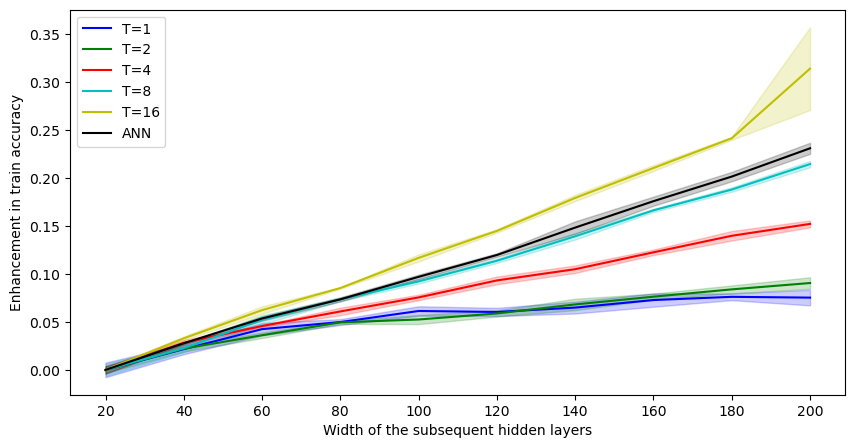}
        \caption{The improvement of training accuracy when increasing the width of the subsequent hidden layers.}
    \end{subfigure}
    \caption{Comparison of train accuracies achieved by ANN and SNNs with different numbers of time steps. Both types of networks share the same spatial architectures: the first hidden layer is a bottleneck with only $20$ neurons and the subsequent layers are progressively widened in each experiment. We consider $4$ layers in both cases. }
    \label{fig:bottleneck_cifar10}
\end{figure}

Fig. \ref{fig:bottleneck_cifar10} presents results for CIFAR10 classification, where the first hidden layer is set to 20 neurons, and the number of neurons in subsequent layers ranges from 20 to 200. Due to the bottleneck layer, all networks are limited in their ability to fit the training data, even after extensive training, preventing full interpolation. Additionally, increasing latency in SNNs consistently improves training accuracy, aligning with Theorem \ref{theorem:num_regions}, which shows that the representational complexity of SNNs, measured by the maximum number of generated regions, increases with $T$.

While increasing $T$ in SNNs quite consistently enhances the training accuracy, the particular gain differs in each setting. For very small values of $T$, growing the width of all subsequent layers from $20$ neurons to $200$ neurons only improves the training accuracy by approximately $7\%$. For $T=4$ and $T=8$, the gain in training accuracy becomes much higher, almost reaching $20\%$. Nevertheless, this still lags behind the gain of ANN (approximately $21\%$) and of SNN with $T=16$ (approximately $30\%$) on average.

This phenomenon can be explained by the fact that discrete-time LIF-SNNs generate regions only in the first hidden layer, with subsequent layers merging regions rather than augmenting their number. In the low-latency regime, the first hidden layer may not generate a sufficiently rich input partition, limiting the ability of later layers, even with high capacity, to form effective decision boundaries.

\subsection{Shifting of parallel hyperplanes}\label{exp:parallel}
We analyze the hyperplane shifts $g_{t-1}$ defined in \eqref{eq:shift} for a single neuron $k$ in the first hidden layer. Fixing the initial membrane potential $\vu_k(0)$, threshold $\vartheta$, and bias $b_k$, we compute $\big(g_{t-1}(s_k(1),\ldots,s_k(t-1))\big)^T_{t=1}$ for given values of $\langle \vw_k,\vx\rangle+b_k$ and $\beta$. When $\beta=1$, the shifts appear to converge to $\langle \vw_k,\vx\rangle+b_k$ with some oscillation at large $t$, as shown in \Cref{fig:shifts} for $\langle \vw_k,\vx\rangle+b_k=0.7$ (this behavior persists across different values in $[0,1]$). In contrast, with $\beta=0.8$, the shifts exhibit a periodic pattern up to numerical precision. %

For certain values of $\beta$ (e.g., $\beta=0.8$), the shift term $g_{t-1}$ takes only a few distinct values due to periodic behavior, a pattern not observed for $\beta=1$, where the values do not repeat, but their differences diminish over time. Thus, although the number of regions grows with $T$, their widths appear to vanish with $T$, raising questions about the model's effective capacity. In the bottleneck experiments, see Fig. \ref{fig:bottleneck_a}, latency appears to influence final accuracy, possibly because the small $T$ value maintains relatively large oscillations in $g_{t-1}$ and thus wider regions. This opens the question about the utility of considering larger values for $T$. However, accuracy also depends on factors not analyzed here, such as training process and data complexity, presenting an intriguing direction for future research. 
\begin{figure}[h]
    \centering
\includegraphics[width=0.8\linewidth]{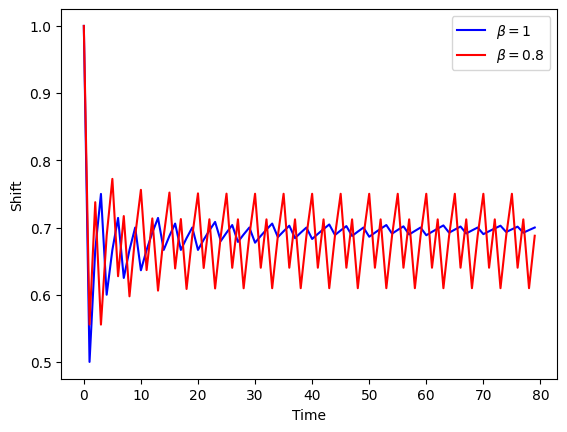}
    \caption{Value of $g_{t-1}$ for $\beta=1$ and $\beta=0.8$ and $\langle \vw,\vx\rangle+ b=0.7$.}
    \label{fig:shifts}
\end{figure}

\subsection{Counting regions: pre-training versus post-training}\label{sec:sec_exp}
We consider a discrete-time LIF SNN with $L=2$ and $T=1,2$, and count the number of linear regions before and after training on a linearly separable dataset depicted in \Cref{fig:data_visualize}. Table \ref{tab:regions} compares the number of regions per layer across five random initializations (see Appendix \ref{sec:experiments_appendix}) against the theoretical upper bound.
\begin{table}[ht]
    \centering
    \begin{tabular}{|c|c|c|c|c|c|c|}
    \hline
         \multicolumn{2}{|c|}{} & \multicolumn{2}{c|}{pre-training} & \multicolumn{2}{c|}{post-training}&theory\\
    \hline
        $T$&$n_1$ & $\ell=1$ & $\ell=2$ & $\ell=1$ & $\ell=2$ &
        $\ell=1$\\
    \hline
        \multirow{3}{4pt}{1}&2 & 4 & 3 & 4 & 3&  4 \\
        &3 & 7  & 4   & 7  & 3 & 7 \\
        &4& 10 & 4  & 6 & 4 & 11\\
    \hline
        \multirow{3}{4pt}{2}&2   & 16  & 14   & 4   & 3&16   \\
        &3 & 35  & 14   & 10  &5 &37   \\
        &4& 59  &  14 & 12 & 5& 67\\
    \hline
    \end{tabular}
    \caption{Maximum number of regions pre versus post training}
\label{tab:regions}
\end{table}
\begin{figure}[ht]
     \centering
     \includegraphics[width=0.7\linewidth]{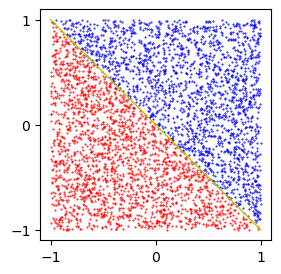}
     \caption{Illustration of the toy dataset consisting of points in $[-1,1]^2$ from two classes separated by the yellow line and visualized by color.}
     \label{fig:data_visualize}
 \end{figure}
As expected, the number of regions often decreases after training, as the model is trying to fit the very simple decision boundary of this toy dataset. 
A natural extension of our theory will be to prove whether randomly initialized discrete-time LIF-SNNs achieve or approximate the theoretical bound in Theorem \ref{theorem:num_regions}.
\section{Conclusion}
As a core component of neuromorphic computing, SNNs have made rapid practical strides in recent years. We complement this progress with theoretical insights into their computational power, particularly for discretized-time implementations. While the low-latency regime's equivalence to Heaviside-activated ANNs is well understood, the model’s distinctive features emerge in the high-latency regime. Here, SNNs differ not only from Heaviside ANNs but also from ANNs with piecewise linear activations due to the impact of depth. An important question is whether and when these computational distinctions can be exploited in practical applications. Increasing latency for static tasks may improve SNN performance, which is however not a given in light of our numerical experiments, but at the cost of higher energy consumption, which depends on spike density \cite{Dampfhoffer22Energy, Lemaire22Energy}. Thus, the true potential of SNNs may lie in dynamic, event-driven tasks, an area still lacking rigorous theoretical analysis and requiring further exploration.

\section*{Acknowledgements}
The authors acknowledge support by the project "Next Generation AI Computing (gAIn)", which is funded by the Bavarian Ministry of Science and the Arts (StMWK Bayern) and the Saxon Ministry for Science, Culture and Tourism (SMWK Sachsen).

\section*{Impact Statement}
This paper presents work whose goal is to advance the field of Machine Learning. There are many potential societal consequences of our work, none of which we feel must be specifically highlighted here.

\bibliographystyle{icml2025}
\bibliography{refs}

\appendix

\onecolumn
\section{Roadmap to the Appendix}
The appendix is organized as follows

\begin{itemize}
    \item In Section \ref{sec:proofs}, we provide the paper's main proofs. Section \ref{sec:proofs_approx} and Section \ref{sec:proof_counting_regions} contain proofs for the results in Sections \ref{sec:universal_approx} and \ref{sec:linear_regions}, respectively.
    \item Section \ref{sec:experiments_appendix} presents  details of our experimental methodology.
    \item Section \ref{sec:related_works_appendix} provides an expanded review of related literature and an in-depth discussion of existing research.
    \item Section \ref{sec:discussion_SNN} motivates the discrete-time LIF-SNN model, reviews alternative SNN models, and discusses practical encoding schemes. Furthermore, we derive elementary properties and suggest potential extensions of our theory. While not essential, this section is included for self-containment and to aid readers unfamiliar with SNNs.
\end{itemize}

Table \ref{tab:SNN_notions} summarizes the notation used throughout the paper and appendix to aid reader comprehension.

\begin{table}[h!]
    \centering
    \begin{tabular}{|c|c|c|c|}
        \hline
        Type& Notion & Notation & Domain\\
        \hline
        \multirow{3}{*}{Network sizes} &Number of spike layers & $L$& $\sN$\\
        \cline{2-4}
        &Layer width & $n_\ell$& $\sN$\\
        \cline{2-4}
        &Latency (number of time steps) & $T$& $\sN$\\
        \hline
        \multirow{2}{*}{Neuron states}& Membrane potential vector & $\vu^\ell(t)$ & $\sR^{n_\ell}$\\
        \cline{2-4}
        &Spike activation vector& $\vs^\ell(t)$ & $\set{0,1}^{n_\ell}$\\
        \hline
        \multirow{5}{*}{(Hyper-)Parameters\footnotemark }&Weight matrix & $\mW^\ell$ &  $\sR^{n_\ell \times n_{\ell-1} }$\\
        \cline{2-4}
        &Bias vector & $\vb^\ell$ & $\sR^{n_\ell}$\\
        \cline{2-4}
        &Initial membrane potential vector & $\vu^\ell(0)$ & $\sR^{n_\ell}$\\
        \cline{2-4}
        &Leaky parameter & $\beta^\ell$ & $[0,1]$\\
        \cline{2-4}
        &Threshold value & $\vartheta^\ell$ & $\sR_+$\\
        \hline
        \multirow{3}{*}{\makecell{Input/output/target \\ vectors}}& Input vector & $\vx$ & $\sR^{n_\text{in}}$ \\
        \cline{2-4}
        &Output vector &$\vz $  & $\sR^{n_{\text{out}}}$ \\
        \cline{2-4}
        &Label or target vector&$\vy$& $\sR^{n_{\text{out}}}$\\
        \hline
    \end{tabular}
    \caption{Table of SNN notions}
    \label{tab:SNN_notions}
\end{table}
\footnotetext{The leaky parameters and threshold values can be either hyperparameters or trainable parameters depending on the implementation \cite{trainable_membrane_constant_2021}.}

\section{Proofs}\label{sec:proofs}
\subsection{Proofs of Section \ref{sec:universal_approx}}\label{sec:proofs_approx}

\subsubsection{Expressing the identity}
We start with the proof of Proposition \ref{prop:IdentityMapping} in Section \ref{sec:universal_approx}, which we repeat for convenience
\begin{proposition}    %
    Let $\Phi$ be a discrete-time LIF-SNN with latency $T \in \sN$ and $L \in \sN$ layers with constant width $n \in \sN$ in each layer. Then, there exists a configuration of parameters such that $\big(\vs_\Phi^L(t)\big)_{t \in [T]} = \vs^0$ for any $\vs^0 \in \set{0,1}^{n\times T}$.  
\end{proposition}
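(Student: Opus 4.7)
The plan is to verify the parameter configuration suggested in the sketch by a one-line reduction to $L=1$ followed by an induction on $t$. The reduction observes that if a single hidden layer of width $n$ realizes the identity on arbitrary inputs $\vs^0 \in \set{0,1}^{n \times T}$, then using the same parameters in each of the $L$ hidden layers propagates $\vs^0$ unchanged through the network, because each layer in \eqref{eq:discrete_snn_dynamics} takes the spikes of the previous layer as its input and the update rule is identical across layers. Hence it suffices to handle the case $L=1$.

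For $L=1$, I would substitute $\mW = (1+\epsilon)\mI_n$, $\vb = \vzero$, $\vu(0) = \vzero$, $\beta = 1$, $\vartheta = 1$ into \eqref{eq:discrete_snn_dynamics} and prove the joint invariant $\vs^1(t) = \vs^0(t)$ and $\vu(t) = \epsilon \sum_{i=1}^{t} \vs^0(i)$ by induction on $t \in [T]$. Since $\mW$ is diagonal and $\vb = \vzero$, the dynamics decouple across components, so the invariant can be verified entrywise. The base case is direct: for component $k$, the spike equation reads $s^1_k(1) = H((1+\epsilon) s^0_k(1) - 1)$, which equals $1$ when $s^0_k(1) = 1$ (argument $\epsilon > 0$) and $0$ when $s^0_k(1) = 0$ (argument $-1$), and the membrane update then yields $u_k(1) = \epsilon s^0_k(1)$.

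For the inductive step, using the invariant at time $t-1$, the Heaviside argument at component $k$ and time $t$ equals $\epsilon \sum_{i=1}^{t-1} s^0_k(i) + (1+\epsilon) s^0_k(t) - 1$. If $s^0_k(t) = 1$ this is at least $\epsilon > 0$, producing the desired spike; if $s^0_k(t) = 0$ it reduces to $\epsilon \sum_{i=1}^{t-1} s^0_k(i) - 1 \leq \epsilon(t-1) - 1$. Once the spike part of the invariant is established, the membrane update telescopes as $\vu(t) = \vu(t-1) + (1+\epsilon)\vs^0(t) - \vs^1(t) = \vu(t-1) + \epsilon\vs^0(t)$, which closes the induction.

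The only real obstacle, which is quite mild, is to pin down the right constraint on $\epsilon$: in the no-spike case of the inductive step the Heaviside argument must remain strictly negative, i.e., $\epsilon(t-1) - 1 < 0$, for every $t \leq T$. This is exactly what the hypothesis $\epsilon < 1/T$ ensures. Intuitively, the membrane potential accumulates a fraction $\epsilon$ of each past input spike, and the bound on $\epsilon$ guarantees that this residual leak never catches up with the threshold $\vartheta = 1$ over the time horizon $T$, so silent inputs remain silent in the output. Beyond this bookkeeping, the proof requires no further idea.
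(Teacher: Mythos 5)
Your proposal is correct and follows essentially the same route as the paper: the identical parameter choice $\mW=(1+\epsilon)\mI_n$, $\vb=\vzero$, $\vu(0)=\vzero$, $\beta=1$, $\vartheta=1$ with $0<\epsilon<1/T$, the reduction to $L=1$, and an induction on $t$; your invariant $\vu(t)=\epsilon\sum_{i=1}^{t}\vs^0(i)$ is just a sharper form of the paper's statement that $u_i(t)\in\set{k\epsilon : k\leq t}$. No gaps.
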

\begin{proof}
    It suffices to consider the case $L=1$ since the presented argument can be repeated for any subsequent layer. Fix the parameter of 
     \begin{equation*}
        \mPhi:=\Big( \big(\mW, \vb\big), \big(\vu(0),\beta,\vartheta\big), T,(E,D) \Big)    
     \end{equation*}    
     as $\mW = (1+\varepsilon)\mI_{n}$ with $0<\varepsilon < \frac{1}{T}$, $\vb =\vzero$, $\vu(0) = \vzero$, $\beta =1$, and $\vartheta = 1$.
     Then, \eqref{eq:discrete_snn_dynamics} implies for each neuron $i\in[n]$ the dynamic 
    \begin{align*}
        \begin{cases}
            s_i(t) = H \Big(u_i(t-1) +(1+\varepsilon) s_i^0(t) - 1\Big)\\
            u_i(t) = u_i(t-1) +(1+\varepsilon) s_i^0(t) - s_i(t)
        \end{cases},
        \text{ where } \quad \vs^0 = \big(\vs^0(t))_{t\in[T]} \in \set{0,1}^{n\times T}.
    \end{align*}
    We show by induction over $t \in [T]$ that $u_i(t) \in \gE_t:= \set{k \varepsilon: k \leq t}$. For $t=0$, it trivially holds true as $u_i(t) = 0$. Assuming that $u_i(t-1)\in \gE_{t-1}$, we consider the induction step from $t-1$ to $t$ for some fixed $t > 0$. If the input at time $t$ is zero, i.e., $s_i^0(t) =0$, then we have
    \[
        s_i(t) = H \Big( u_i(t-1) + (1+\varepsilon) s_i^0(t) - 1\Big) = H \Big( \underbrace{u_i(t-1) -1}_{<0 \text{ by induction}} \Big) = 0
    \]
    and thus 
    \[
        u_i(t) = u_i(t-1) + (1+\varepsilon)s_i^0(t) - s_i(t) = u_i(t-1) \in \gE_{t-1} \subset \gE_t.
    \]
    On the other hand, if $s_i^0(t) = 1$, we have 
    \[
        s_i(t) = H \Big( u_i(t-1) + (1+\varepsilon) s_i^0(t) - 1\Big) = H \Big( \underbrace{u_i(t-1) + \varepsilon}_{>0}\Big) = 1
    \]
    and
    \[
        u_i(t) = u_i(t-1) + (1+\varepsilon) s_i^0(t) - s_i(t) = \underbrace{u_i(t-1)}_{\in \gE_{t-1}} + \varepsilon \in  \gE_t.
    \]
    Note that the induction also showed that $s_i(t) = s_i^0(t)$ for all $t\in [T]$ which implies that $\big(\vs^L_\Phi(t)\big)_{t \in [T]} = \vs^0$ for any $\vs^0 \in \set{0,1}^{n\times T}$ as required.
\end{proof}

\subsubsection{Universal Approximation}
We now present the proof of the main result of Section \ref{sec:universal_approx}, which addresses the approximation of continuous functions by discrete-time LIF-SNNs. Throughout this section, we consider the constant encoder as fixed. We remind the reader of the result we aim to prove, which corresponds to Theorem \ref{thm:approx}. For simplicity, we will henceforth replace $n_{\text{in}}$  with $n$ in the notation.
\begin{theorem}\label{theorem:universal_T1_app}
    Let $n\in \sN$, $\varepsilon >0$, $\Omega \subset \sR^n$ be a compact domain and $f \in \gC(\Omega)$ be an arbitrary continuous function on $\Omega$. Then there exists a discrete-time LIF-SNN $\mPhi$ with $T=1$ time step, $L=2$ spike layers, constant encoding, and membrane potential outputs, which satisfies
    \[
        \norm{R(\mPhi)-f}_\infty := \sup_{x\in \Omega} \abs{R(\mPhi)(x)-f(x)} < \varepsilon. 
    \]
    In words, discrete-time LIF-SNNs are universal approximators for continuous functions on $\Omega$. Additionally, if $f$ is $\Gamma$-Lipschitz, then $\mPhi$ can be chosen with width parameter $\vn=(n_1,n_2)$ given by
    \begin{align}
        n_1&=\left(\max\left\{\left\lceil \frac{\operatorname{diam}_{\infty}(\Omega)}{\varepsilon} \Gamma\right\rceil, 1\right\}+1\right)n,\notag \\
        n_2&=\max\left\{\left\lceil \frac{\operatorname{diam}_{\infty}(\Omega)}{\varepsilon}\Gamma\right\rceil^{n},1\right\},
    \end{align}
    where $\operatorname{diam}_\infty(\Omega)=\sup_{\vx,\vy\in\Omega}\|\vx-\vy\|_{\infty}$.
\end{theorem}
The proof involves two main steps: (1) demonstrating that such SNNs can represent any constant function defined on hyperrectangles (step functions), and (2) establishing that step functions can approximate any continuous function on a compact set with arbitrary precision. For clarity, we formally define the relevant geometric notions below.
\begin{definition}[Polyhedra, hyperrectangles, indicator functions, step functions]
Let $n\in \sN$.
    \begin{itemize}
        \item A subset $P \subseteq \sR^n$ is called a \textbf{polyhedron} if there exist $\mA \in \sR^{p \times n}$, $\vb \in \sR^p$, $p \in \sN$ such that 
        \[
            P = \set{\vx \in \sR^n: \mA \vx \leq \vb},
        \]
        where ''$\leq$'' is understood entry-wise. If $P$ is bounded, we also refer to it as a \emph{polytope}.
        
        \item A subset $R \subseteq \sR^n$ is called a \textbf{hyperrectangle} or \emph{hyperbox} in $\sR^n$ if it is the Cartesian product of $n$ one-dimensional (non-degenerate) intervals $I_i \subseteq \sR$, $i= 1, \hdots, n$. A \textbf{hypercube} is a particular case of hyperrectangle where all the intervals $I_i$ are of equal length. In the sequel, all hypercubes and hyperrectangles in $\sR^n$ are assumed to be non-degenerate (with positive Lebesgue measure).
        
        \item For some subset $Q \in \sR^n$, the \textbf{indicator function} on $\sR^n$  is defined by $\indi_Q(x) = 1$ if $x\in Q$ and $\indi_Q(x) = 0$ if $x\in \sR^n \setminus Q$. 
        
        \item \textbf{Step functions} (on hyperrectangles) are defined as linear combinations of indicator functions on hyperrectangular pieces. More precisely, a function $f: \sR^n \to \sR$ is called a step function if there exist $m\in \sN$, $f_1, \hdots, f_m \in \sR$ and polyhedra $R_1, \hdots, R_m \subseteq \sR^n$ such that $f = \sum_{i=1}^m f_i \indi_{R_i}$.
    \end{itemize}
\end{definition}

In summary, polyhedra are subsets defined by finitely many linear boundaries, with hyperrectangles as a special case where all boundaries are parallel to the coordinate axes. Although constant functions on polyhedra are not strictly necessary to prove Theorem \ref{theorem:universal_T1_app}, as can be seen in the proof of Lemma \ref{lem:approx_by_step_functions}, they naturally arise in this context as showcased in the intermediate Lemmas \ref{lem:express_polyhedra_indicators} and \ref{lem:opt_layers_expression_polyhedra}. Additionally, this concept is important in the study of linear regions in Section \ref{sec:linear_regions}. 

The following two results demonstrate that step functions can be expressed by a discrete-time LIF-SNN with \( T = 1 \) and \( L = 2 \) using suitable weights and biases. Furthermore, they establish that the depth \( L = 2 \) is optimal, as there exist step functions that cannot be approximated to arbitrary precision by a discrete-time LIF-SNN with \( L = 1 \).
\begin{lemma}\label{lem:express_polyhedra_indicators}
    Let $m,n\in \sN$, $f_1, \hdots, f_m \in \sR$, $P_1, \hdots, P_m \subseteq \sR^n$ be arbitrary polyhedra and $C_1,\ldots,C_m$ be disjoint (up to Lebesgue measure zero ) hypercubes in $\sR^n$ with the same volume, such that $\cup^m_{i=1}C_i=C$, where $C$ is a hypercube in $\sR^n$. Then the following holds:
    \begin{enumerate}[label=(\roman*)]
    \item The function $f:= \sum_{i=1}^m f_i \indi_{P_i} $ can be expressed by a single time step discrete-time LIF-SNN with two hidden layers. Moreover, if $P_j=\{x\in\R^n: \mA_jx\leq \vb_j\}$, with $\mA\in \R^{p_j\times n}$, then the width of the SNN that realizes $f$ is described by \[
    n_1=\sum^m_{i=1}p_i, \quad n_2=m
    \]
    \item The (step) function $f:=\sum^m_{i=1}f_i\indi_{C_i}$, can be expressed by a two-layer discrete-time LIF-SNN with first and second layer widths given by 
    \[
    n_1=\left(m^{1/n}+1\right)n,\quad n_2=m
    \]
    \end{enumerate}
\end{lemma}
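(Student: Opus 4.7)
My plan is to leverage the observation from Section 3.1 that for $T=1$ the discrete-time LIF-SNN collapses to a Heaviside ANN (since $\beta^\ell\vu^\ell(0)$ and $-\vartheta^\ell\vone$ simply fold into the effective bias of a single-step evaluation), with the membrane-potential decoder providing a final affine readout. Both parts then reduce to constructing explicit weights and biases so that second-layer Heaviside outputs implement the required polyhedral indicators, after which the decoder forms the linear combination with the $f_i$'s.

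For part (i), write each polyhedron as $P_j=\{\vx:\va_{j,k}^\top \vx\le b_{j,k},\ k\in[p_j]\}$. In the first hidden layer I place one neuron per constraint, giving $n_1=\sum_j p_j$, so that neuron $(j,k)$ computes $H(b_{j,k}-\va_{j,k}^\top \vx)$, firing iff that single inequality is satisfied. In the second hidden layer I use one neuron per polyhedron, $n_2=m$, each implementing an AND of its $p_j$ inputs via $H\bigl(\sum_k s^1_{j,k}-p_j+\tfrac12\bigr)$, which fires iff all $p_j$ first-layer neurons associated to $P_j$ are active, i.e., iff $\vx\in P_j$. Finally, I set $\mV=(f_1,\dots,f_m)$, $\vc=\vzero$, $a_1=1$ so that the decoder outputs $\sum_{j=1}^m f_j\, s^2_j=f(\vx)$.

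For part (ii), since the $\{C_i\}_{i=1}^m$ tile the hypercube $C$ with congruent cells, they arise from an axis-aligned grid with $k=m^{1/n}$ subdivisions per coordinate. I allocate, for each axis $i\in[n]$ and each grid level $j\in\{0,\dots,k\}$, one first-layer neuron computing $H(x_i-c_{i,j})$, yielding $n_1=(k+1)n=(m^{1/n}+1)n$ units. Each cell $C_\alpha=\prod_i[c_{i,\alpha_i},c_{i,\alpha_i+1})$ is then singled out by the condition $\sum_{i=1}^n\bigl(s^1_{i,\alpha_i}-s^1_{i,\alpha_i+1}\bigr)=n$, which a second-layer Heaviside with bias $-n+\tfrac12$ detects; one such neuron per cell gives $n_2=m$, and the decoder again forms the correct linear combination.

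The main obstacle is bookkeeping: fixing a convention for $H$ at zero and for the interval endpoints so that the indicators partition $C$ up to measure zero without double-counting boundary points, and checking that the bias gap of $\tfrac12$ strictly separates the unique all-ones pattern from every off-by-one pattern (whose sum is at most $p_j-1$ in part (i), and at most $n-1$ in part (ii) once any single coordinate condition fails). Everything else is a direct translation: for any admissible $\beta^\ell$, $\vartheta^\ell$, $\vu^\ell(0)$ at $T=1$ the LIF update \eqref{eq:discrete_snn_dynamics} reduces to a single Heaviside of an affine map, so the abstract two-layer Heaviside circuit constructed above is realized verbatim within the discrete-time LIF-SNN model with the stated widths.
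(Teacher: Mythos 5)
Your construction is correct and essentially identical to the paper's: one first-layer Heaviside neuron per half-space constraint (respectively per axis-aligned grid hyperplane $H(x_i-c_{i,j})$), second-layer AND gates realized as thresholded sums of first-layer spikes, and the decoder supplying the linear combination with coefficients $f_1,\dots,f_m$, yielding the same widths $n_1=\sum_i p_i$ (resp.\ $(m^{1/n}+1)n$) and $n_2=m$. The only cosmetic difference is your bias margin of $\tfrac12$ in the AND gates, where the paper relies on the convention $H(0)=1$ with bias exactly $-p$; both are valid.
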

\begin{proof} We begin by proving part $(i)$.
    Observe that a general polyhedron $P= \set{\vx \in \sR^n: \mA \vx \leq \vb}$, with $\mA \in \sR^{n\times p}$ and $\vb \in \sR^p$, contains a point $\vx \in \sR^n$ if and only if $\sprod{\va_i, x}\leq b_i $ for every $i\in [p]$ (here $\va_i$ is the $i$-th row of $\mA$), or equivalently 
    \[
        H\big( b_i - \sprod{\va_i, x}\big) = 1, \; \forall i\in [p].
    \]
    Since the Heaviside function takes only values $0$ or $1$, this is in turn equivalent to 
    \[
        \sum_{i=1}^p H\big( b_i - \sprod{\va_i, x}\big) = p,
    \]
    or, equivalently,
    \[
        H\Bigg( \underbrace{\sum_{i=1}^p H\big( b_i - \sprod{\va_i, \vx}\big) - p}_{=\vone_p^\top H(\vb-\mA \vx) -p}\Bigg)=1.
    \]
    
    This means that the indicator function $\indi_P$ can be written as a discrete-time LIF-SNN with $T=1$, $L=2$, and $p$ neurons in the first hidden layer. 
    Note that a discrete-time LIF-SNN corresponding to the previous network can be constructed with parameters (here we consider $\beta=1$)
    \begin{align*}
        \mW^1&=-\mA, \quad \vb^1= \vb+\vone\quad \vartheta^1=1, \quad \vu^1(0)=0 \\
        \mW^2&=\vone^\top_p, \quad \vb^2= -p+1\quad \vartheta^2=1, \quad \vu^2(0)=0.
    \end{align*}

    For the desired function $f$, one can simply stack (the hidden layers) of each of the LIF-SNN expressing $\indi_{P_i}$, $i\in[p]$ together and set the weight matrix of the output layer to be $[f_1,\hdots,f_m]$. Thus, the size of the hidden layers is 
    \[
    n_1=\sum^m_{i=1}p_i, \quad n_2=m.
    \]
    We now prove part $(ii)$.
    Consider $f:= \sum_{i=1}^m f_i \indi_{C_i}$. Given that $C=\cup^m_{i=1}C_i$ is a hypercube, we can write $C=\times^n_{j=1}I_j$, where $I_j$ is an interval, for all $j\in [n]$. Defining $N:=m^{1/n} \in\sN$, we divide each of intervals in $\{I_j\}^n_{j=1}$ as follows
    \[
    I_j=\cup^N_{k=1}[r^j_k,r^j_{k+1}],
    \]
    where $r^j_1$ and $r^j_{N+1}$ are the upper and lower limits of the interval $I_j$, respectively, and $r^j_{k+1}-r^j_{k}=1/N$, for $k\in [N]$. Thus, given that the hypercubes $C_1,\ldots, C_m$ have the same volume and their union is $C$, we have that for each $C_i$ with $i=1,\ldots,m$, there exists a sequence $\{k_{ij}\}^N_{j=1}\subset [N]$ such that
    \[
    C_i=\times^n_{j=1}[r^j_{k_{ij}},r^j_{k_{ij}+1}].
    \]

    With this, for any input $x=(x_1,\ldots,x_n)$, $\indi_{C_i}(x)$ is a boolean function that can be written as \[
    \indi_{C_i}(x)=\prod^n_{j=1}\indi_{[r^j_{k_{ij}},r^j_{k_{ij}+1}]}(x_j).
    \]
    On the other hand, for any sequence of boolean variables $\chi_1,\ldots, \chi_n$ we have 
    \begin{equation}\label{eq:AND_to_thrsum}\prod^n_{j=1}\chi_j= H\left(\sum^n_{j=1}{\chi_j}-n\right).
    \end{equation}
    From the above, it follows that if we know how to express $\chi_j:=\indi_{[r^j_{k_{ij}},r^j_{k_{ij}+1}]}(x_j)$ using the first layer of a discrete-time LIF-SNN network, then the second layer can be used to implement \eqref{eq:AND_to_thrsum} straightforwardly, with $m$ neurons. 
    Indeed, the expression inside the Heaviside in \eqref{eq:AND_to_thrsum} is an affine transformation of $\bm{\chi}=(\chi_1,\ldots,\chi_n)$. 
    From this, it is clear that 
    \[
    \chi_j=H(x_j-r^j_{k_{ij}})-H(x_j-r^j_{k_{ij}+1}).
    \]
    Hence, $\bm{\chi}$ can be written as an affine transformation of the variables in $\{H(x_j-r^j_{k_{ij}})\}_{j\in [n],i\in[N+1]}$. 
    Consequently, 
    we built a discrete-time LIF-SNN with the first hidden layer having $(N+1)n$ neurons, each of which will implement one of the elements of the set $\{H(x_j-r^j_{k_{ij}})\}_{j\in [n],i\in[N+1]}$. To finish the proof, we can set the weights in the output layer to be $[f_1,\ldots, f_m]$, as in the proof of part (i). 
\end{proof}

\paragraph{Minimality with respect to the number of neurons.}Our proof of Theorem \ref{theorem:universal_T1_app}, will use elements on the $\operatorname{span}(\{\indi_{C_i}\}^m_{i=1})$ for some $m$ and hypercubes $\{C_i\}^m_{i=1}$, that partition a larger cube $C$. In part $(ii)$ of Lemma \ref{lem:express_polyhedra_indicators}, we demonstrated that any function in this space can be realized by a discrete-time LIF-SNN. We now prove that the construction provided in Lemma \ref{lem:express_polyhedra_indicators} is minimal, meaning it uses the smallest possible number of neurons necessary to construct a discrete-time LIF-SNN with two hidden layers, capable of expressing any function in $\operatorname{span}(\{\indi_{C_i}\}^m_{i=1})$. For that, note that considering the $L_2$-inner product $\langle f,g\rangle=\int_{\sR^n}f(x)g(x)dx$, the functions $\{\indi_{C_i}\}^m_{i=1}$ are orthogonal (because $C_i\cap C_j=\emptyset$, for $i\neq j$, modulo a set of measure zero). 
So, as a subspace of $L_2(\sR^{n})$, $\operatorname{span}(\{\indi_{C_i}\}^m_{i=1})$ has dimension $m$. 
\begin{lemma}
    Let $C_1,\ldots,C_m$ be hypercubes in $\sR^n$ satisfying the conditions of Lemma \ref{lem:express_polyhedra_indicators}. If $\gF$ is a family of discrete-time LIF-SNN networks with $T=1, L=2$, such that for any $f\in \operatorname{span}(\{\indi_{C_i}\}^m_{i=1})$ there exist a network in $\gF$ that realizes $f$, then there exist an element in $\gF$, with spatial architecture parameters $(L=2, \vn=(n_1,n_2))$,  with $n_1=(m^{1/n}+1)n$ and $n_2=m$.
\end{lemma}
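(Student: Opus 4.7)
The plan is to establish two independent lower bounds, $n_2\ge m$ and $n_1\ge (m^{1/n}+1)n$, which together match the construction of Lemma \ref{lem:express_polyhedra_indicators}(ii) and therefore certify its minimality. The $n_2$ bound will follow from a clean linear-algebraic dimension count on the space of realizable functions, while the $n_1$ bound requires a geometric argument about the hyperplane arrangement induced by the first hidden layer.

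For the bound on $n_2$, I would fix an element $\mPhi\in\gF$ that (by hypothesis) realizes every $f\in\operatorname{span}(\{\indi_{C_i}\}_{i=1}^m)$ by tuning its parameters, and study the set of functions realizable while varying only the decoder. Since $T=1$, the membrane-potential decoder in Definition \ref{def:coding_schemes} gives $R(\mPhi)(\vx)=\vm^\top\vs^2(\vx)+c$ for $\vm\in\sR^{n_2}$, $c\in\sR$, so the realizable functions span the vector space $V=\operatorname{span}\{s^2_1(\cdot),\ldots,s^2_{n_2}(\cdot),1\}$ of dimension at most $n_2+1$. A direct check (evaluating any linear relation on the exterior of $C=\cup_i C_i$ and on one point of each $C_i$) shows that $\{\indi_{C_1},\ldots,\indi_{C_m},1\}$ are linearly independent, so $\operatorname{span}(\{\indi_{C_i}\})+\sR\cdot 1$ has dimension $m+1$; since $V$ contains $\operatorname{span}(\{\indi_{C_i}\})$ by assumption and always contains the constant $1$, we get $n_2+1\ge m+1$, hence $n_2\ge m$.

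For the bound on $n_1$, I would pick a generic target $f^*=\sum_{i=1}^m i\,\indi_{C_i}$, whose $m+1$ prescribed values (the $f_i$ and the exterior value $0$) are pairwise distinct, and let $\mPhi^*\in\gF$ realize $f^*$. The output $R(\mPhi^*)$ is constant on every cell of the arrangement produced by the $n_1$ first-layer hyperplanes, so each such cell must be contained in a single $C_i$ or in the exterior of $C$. Equivalently, the union of arrangement hyperplanes must cover every $(n-1)$-dimensional face $F$ separating two adjacent regions (internal faces between adjacent hypercubes and external faces between outermost hypercubes and the exterior). A local separation argument at each $\vx\in\operatorname{relint}(F)$ (considering $\vx\pm\epsilon\ve$ across the normal of $F$) forces $\vx$ to lie on some arrangement hyperplane $H_k$; since only finitely many $H_k$'s are available and $\operatorname{relint}(F)$ has positive $(n{-}1)$-measure while intersections of non-coinciding hyperplanes have dimension $n-2$, one of the $H_k$ must in fact coincide with the full axis-aligned affine extension of $F$, i.e.\ $\{x_j=r_{j,l}\}$. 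Counting distinct such axis-aligned hyperplanes gives $N+1=m^{1/n}+1$ per coordinate direction (the $N-1$ internal cuts plus the two outer bounds of $C$), for a total of $(m^{1/n}+1)n$; therefore $n_1\ge (m^{1/n}+1)n$.

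The main obstacle will be the geometric step for $n_1$, namely ruling out that a collection of non-axis-aligned hyperplanes could jointly substitute for one axis-aligned separator. The delicate point is to upgrade the pointwise coverage of $\operatorname{relint}(F)$ by $\bigcup_k H_k$ into coincidence of a single $H_k$ with the affine hull of $F$, using the dimension/measure mismatch between an $(n-1)$-face and the $(n-2)$-dimensional intersections of distinct hyperplanes. A secondary subtlety is the exterior of $C$, which may a priori be split by the arrangement into several cells with different spike patterns; this is harmless because the argument only requires separation of each $C_i$ from \emph{any} adjacent exterior piece, which still demands an axis-aligned hyperplane along the boundary of $C$.
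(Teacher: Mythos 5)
Your proposal is correct, and the two halves relate to the paper's proof differently. For $n_2$ you argue exactly as the paper does: the decoder is an affine functional of the second-layer spike vector, so the realizable functions (with the hidden layers fixed) live in a space of dimension at most $n_2+1$, which must contain the $(m+1)$-dimensional span of $\{\indi_{C_1},\dots,\indi_{C_m},1\}$; your explicit inclusion of the constant $1$ is a minor tidying of the paper's orthogonality-based dimension count, and both arguments share the same (unstated) reading of the hypothesis that a single network with fixed hidden layers must realize the whole span. For $n_1$ you take a genuinely different route. The paper picks a target with pairwise distinct values on the cubes and argues that the first-layer arrangement must \emph{separate} the $m$ cube centers, then invokes a combinatorial claim that separating a regular $m^{1/n}\times\cdots\times m^{1/n}$ grid needs $(m^{1/n}-1)n$ hyperplanes plus $2n$ more to close off the outer regions; the key step ("to separate the points we must separate each coordinate projection") is asserted rather than proved. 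You instead show that every $(n-1)$-face between adjacent constant regions must be covered by the union of first-layer hyperplanes (via the two-sided perturbation at a relative-interior point), and then upgrade coverage to coincidence: a face has positive $(n-1)$-measure while a hyperplane not containing its affine hull meets it in a set of dimension at most $n-2$, so finitely many such hyperplanes cannot cover it. This forces each of the $(m^{1/n}+1)n$ distinct axis-aligned cut hyperplanes to appear literally in the arrangement, which is a stronger conclusion than the paper's and closes precisely the gap you flag (non-axis-aligned hyperplanes jointly substituting for an axis-aligned one) that the paper's separation argument leaves implicit. Your handling of the exterior is also sound and replaces the paper's informal "two extra hyperplanes per coordinate to obtain closed regions" remark with the same boundary-face argument.
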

\begin{proof}
    To prove that $n_2=m$ is necessary, it suffices to note that in the output layer, the boolean functions obtained in the second hidden layer are multiplied by constants and added. This operation cannot increase the linear dimension of the space of functions realized up to the second hidden layer. Given that $\operatorname{span}(\{\indi_{C_i}\}^m_{i=1})$ has dimension $m$, and that each neuron in the second hidden layer implements one boolean function, it is clear that to express all elements in $\operatorname{span}(\{\indi_{C_i}\}^m_{i=1})$,at least $m$ neurons in the second hidden layer are necessary, i.e., $n_2=m$. On the other hand, given that up to the second layer we can express a boolean function with each neuron, each neuron 
    should express a function of the form $\indi_{\cup_i C_i}$. 
    We now consider a function $f\in\operatorname{span}(\{\indi_{C_i}\}^m_{i=1})$, such that, $f$ has different values for each cube $\{C_i\}^m_{i=1}$. In particular, if $\Bar{\vx}_i$ is the center of the cube $C_i$, then $f(\Bar{\vx}_i)\neq f(\Bar{\vx}_j)$ for $i\neq j$. Given that $\cup^m_{i=1}C_i=C$, the points $\{\Bar{\vx}_i\}^m_{i=1}$ form a regular grid in $\sR^n$. The minimum number of hyperplanes needed to separate a regular grid with $m$ points in $\sR^n$ is $(m^{1/n}-1)n$. Indeed, fixing $d\in[n]$ and considering $\Bar{\vx}_{i,d}$, the $d$-th coordinate of $\Bar{\vx}_{i}$, we have that $|\{\Bar{\vx}_{i,d}\}^m_{i=1}|=m^{1/n}$, and the number of hyperplanes needed to separate $m^{1/n}$ collinear points is $m^{1/n}-1$. To separate the points $\Bar{\vx}_{i}$, we need to be able to separate each of the sets $\{\Bar{\vx}_{i,d}\}$ for $d\in[n]$. Therefore, at least $(m^{1/n}-1)n$ hyperplanes are required to perform the separation of the points in $\{\Bar{\vx}_i\}^m_{i=1}$. Note that in the previous argument for each coordinate $d$ there where two unbounded regions (corresponding to $\min_i \Bar{\vx}_{i,d}$ and $\max_i \Bar{\vx}_{i,d}$). Thus, to obtain closed regions, we need two extra hyperplanes per coordinate. Then the total number of hyperplanes needed is $(m^{1/n}+1)n$. Since each neuron in the first hidden layer generates one of these hyperplanes, we deduce that $n_1$ has to be at least $(m^{1/n}+1)n$.   
\end{proof}
\begin{remark}[Point separability]
In the proof, we repeatedly use the fact that a grid point set of size $m$ can be separated by $(m^{1/n}+1)n$ hyperplanes. A similar observation is made in \cite{Har_peled}[page 4], where the more general problem of point separability (not necessarily on a grid) is discussed (mainly in $\sR^2$).
\end{remark}
\paragraph{Minimality with respect to the depth.}We now prove that expressing an indicator function over an arbitrary polyhedron (excluding half-spaces and stripes between two half-spaces) fundamentally requires at least two layers. A comparable result for threshold Artificial Neural Networks was independently established in \cite{threshold_ANN_2022}, although with a different proof. 
\begin{lemma}\label{lem:opt_layers_expression_polyhedra}
    Let $n\geq2$, $\tilde{A}:=[\tilde{\va}_1^\top,\tilde{\va}_2^\top]^\top \in \sR^{n\times 2}$, and $P=\{\vx\in\sR^n:\tilde{\mA} \vx+ \tilde{\vb}\geq \vzero\}$ be a non-empty polyhedron with two non-parallel linear boundaries, i.e., $\tilde{\mA}$ has full rank. Then the indicator function $\indi_P$ cannot be expressed by any (neither with spike output nor membrane potential output) single time step discrete-time LIF-SNN with only one hidden (spike) layer. 
\end{lemma}
\begin{proof}
    Toward a contradiction, we assume there exists such a polyhedron $P \subseteq \sR^n$ that can be expressed by a discrete-time LIF-SNN with $T=1$ and $L=1$. We first consider the case of spike outputs. Since the output vectors have only one dimension, the spike layer is allowed to have only one neuron. It follows that the network has the form $\vx \mapsto H(\va^\top \vx + b)$ for some $\va \in \sR^n$ and $b\in \sR$. However, this only realizes the indicator function of a half-space, which is not $P$ (by our assumption on $\tilde{\va}_1$ and $\tilde{\va}_2$). 

    Now we consider the more general case where we use membrane potential outputs, i.e., the decoder $D: \sR^m \to \sR$ is an affine map of the form $\sR^m\ni\vy = [y_1, \hdots, y_m]^\top \mapsto w_0 + \sum_{i=1}^m w_i y_i \in \sR$, where $m$ is the number of neurons in the last spike layer while $\vw^\top = [w_1, \hdots, w_m] $ and $w_0$ denote the weights and bias of the decoder. Then the network realizes the following mapping
    \begin{equation}\label{eq:xmap}
        \vx \mapsto w_0 + \sum_{i=1}^m w_i H\big( \sprod{\va_i, \vx} + b_i \big) =: f(\vx),     
    \end{equation}

    where $\mA = [\va_1^\top,\hdots,\va_m^\top]^\top \in \sR^{m\times n}$ and $\vb = [b_1, \hdots, b_m]^\top \in \sR^m$ are the weight matrix and bias vector of the spike layer. To simplify notations, we set
    \[
        \gH_i = \set{x\in \sR^n: \sprod{\va_i, \vx} +b_i =0}
    \]
    to be the hyperplane defined by the $i$-th neuron in the spike layer, $i=1,\hdots,m$. Without loss of generality, we assume that $\gH_i$ are distinct hyperplanes and $w_i \neq 0$ for all $i$. We will show that the union of the hyperplanes $\gH_i$ must be equal to the union of the hyperplanes $\gA$ that define $P$, i.e.,  
    \[ \gA := \set{\vx \in \sR^n: \exists i \in \set{1,2} \text{ such that } \sprod{\tilde{\va}_i, \vx} + \tilde{b}_i =0 }.
    \] 
    
    To see that $\cup_{i=1}^m \gH_i\subseteq \gA$, let $\vx\in \gH_i$ for some arbitrary $i\in [m]$. Observe that if $\vx \notin \gA$, then there exists some neighborhood of $\vx$ that lies completely in $\sR^n \setminus \gA$. Then $\gH_i$ divides this neighborhood into two regions that correspond to the same value when evaluated by $\indi_P$ but different values when evaluated by $f$ (as $w_i \neq 0$). This means that $f \neq \indi_P$. 
    
    On the other hand, to see $\gA \subseteq \cup_{i=1}^m \gH_i$ let $\vx \in \gA$. If $\vx$ does not belong to any hyperplane $\gH_i$, then similarly there exists some neighborhood of it that possesses only one constant value of $f$ but two different values of $\indi_P$, so that $f \neq \indi_P$. 

    Next, we proceed with the case $m=2$, i.e., there are only two neurons in the hidden layer. Observe that $\sR^n$ is divided into $4$ different regions by $\gH_1$ and $\gH_2$, and inserting some arbitrary element of each of these regions subsequently gives (via (\ref{eq:xmap}))
    \begin{align*}
        \begin{cases}
            w_0 + w_1 +w_2 = y_1,\\
            w_0 + w_1 = y_2,\\
            w_0 + w_2 = y_3,\\
            w_0 = y_4,
        \end{cases}    
    \end{align*}
    whereas among $y_1, y_2, y_3, y_4 \in \set{0,1}$, there is exactly one $1$ and three $0$'s (so that indeed the indicator function of $P$ is realized in the end). Here, the exact values of the $y_i$'s depend on (the signs of) the equations corresponding to $\gH_{1}$ and $\gH_{2}$; however, for any configuration of $(y_i)_{i=1}^4$, this always leads to the contradiction that $y_2+y_3 = y_1 + y_4$. Finally, the case that we have more than two neurons in the hidden layer can be reduced to the case of two neurons by restricting the functions $f$ and $\indi_P$ to the interior of the regions created by $\gH_1$ and $\gH_2$ (since we showed that $\cup_{i=1}^m \gH_i = \gA$). This completes the proof.
\end{proof}

 \paragraph{Approximation with step functions.} We now prove that step functions can be used to approximate continuous functions. More precisely, we prove that given any continuous functions $f$ defined on a compact set $\Omega$, one can select a finite set of hypercubes $\{C_i\}^m_{i=1}$, for some $m$, such that $f$ is close to $\operatorname{span}(\{\indi_{C_i}\}^m_{i=1})$. 
 
\begin{lemma}\label{lem:approx_by_step_functions}
    The set of step functions on hypercubes can approximate any continuous function on a compact subset $\Omega \subset \sR^n$, $n\in \sN$, arbitrarily well. In particular, for all $f\in\gC(\Omega)$ and for every $\varepsilon>0$ there exists $m\in \sN$ such that the following holds: there exists hypercubes $C_1,\ldots,C_m$ of same volume with their union forming again a hypercube and $\Bar{f}_1,\ldots, \Bar{f}_m\in \sR$ such that $\|f-\Bar{f}\|_{\infty}\leq \varepsilon$ with $\Bar{f}=\sum^m_{i=1}\Bar{f}_i\indi_{C_i}$.
\end{lemma}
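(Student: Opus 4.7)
The plan is to exploit the uniform continuity of $f$ guaranteed by the compactness of $\Omega$, partition a bounding hypercube into sufficiently small sub-hypercubes, and assign a constant value on each sub-hypercube equal to the value of $f$ at one representative point.

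First, since $\Omega$ is compact, it is bounded, so there exists $K>0$ such that $\Omega\subseteq [-K,K]^n=:C$. Next, by the Heine–Cantor theorem applied to the continuous function $f$ on the compact set $\Omega$, given $\epsilon>0$ one can choose $\delta>0$ such that for all $\vx,\vy\in\Omega$ with $\|\vx-\vy\|_\infty<\delta$ one has $|f(\vx)-f(\vy)|<\epsilon$. I would then pick an integer $N$ large enough that the side length $2K/N$ of $N^n$ equal axis-aligned sub-hypercubes $C_1,\dots,C_m$ (with $m=N^n$) partitioning $C$ (overlapping only on boundaries, which have Lebesgue measure zero) is strictly less than $\delta$.

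For each $i\in[m]$, if $C_i\cap\Omega\neq\emptyset$, I would fix an arbitrary representative $\vx_i\in C_i\cap\Omega$ and set $\bar f_i:=f(\vx_i)$; if $C_i\cap\Omega=\emptyset$, I would set $\bar f_i$ to an arbitrary value (e.g., $0$), since such cubes contribute nothing to $\|f-\bar f\|_\infty$ taken over $\Omega$. Defining $\bar f:=\sum_{i=1}^m \bar f_i\indi_{C_i}$, I would then verify that for any $\vx\in\Omega$, letting $i(\vx)$ be the index of the sub-hypercube containing $\vx$, one has $\|\vx-\vx_{i(\vx)}\|_\infty\leq 2K/N<\delta$, so by the uniform continuity estimate, $|f(\vx)-\bar f(\vx)|=|f(\vx)-f(\vx_{i(\vx)})|<\epsilon$. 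Taking the supremum over $\vx\in\Omega$ yields $\|f-\bar f\|_\infty\leq\epsilon$, completing the proof.

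The only mild subtlety is the handling of sub-hypercubes that do not meet $\Omega$, since we need $\bar f$ to be well-defined on the whole of $C$ (to match the statement) even though only its restriction to $\Omega$ enters the approximation bound; assigning an arbitrary value there resolves this cleanly. A secondary subtlety is the overlap of the $C_i$ on their common boundaries: this can either be absorbed by adopting the convention that each point belongs to a unique $C_i$ via half-open intervals, or simply ignored since boundaries have measure zero and the argument only requires membership in at least one $C_i$. I expect no genuine obstacle in the argument.
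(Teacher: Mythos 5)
Your proposal is correct and follows essentially the same route as the paper: bound $\Omega$ by a hypercube, invoke uniform continuity to fix a mesh size $\delta$, partition into congruent sub-hypercubes, and assign a constant per cube. The only difference is that you take $\bar f_i$ to be a point evaluation $f(\vx_i)$ whereas the paper uses the mean of $f$ over $C_i\cap\Omega$; both yield the same bound via the same uniform-continuity estimate (and your choice even sidesteps the degenerate case where $C_i\cap\Omega$ has measure zero).
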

\begin{proof}
    Let $f \in \gC( \Omega)$ be a continuous function and $\varepsilon >0$ be arbitrary. Since $\Omega$ is compact, there exists a constant $K>0$ such that $\Omega \subseteq [-K,K]^n$ and, moreover, $f$ is uniformly continuous on $\Omega$. Hence, there exists $\delta >0$ such that for any $\vx, \vy \in \Omega$ with $\norm{\vx-\vy}_\infty := \max_{i\in[d]} \abs{x_i - y_i} < \delta$, it holds $\abs{f(\vx) -f(\vy)} \leq \varepsilon$. 

    Now we partition the domain $\Omega \subseteq [-K,K]^n$ into hypercubes $C_i, i=1,\dots,m$  by dividing the interval $[-K,K]$ into $\lceil \frac{2K}{\delta}\rceil$ sub-intervals of size at most $\delta$. We define a step function $\Bar{f}=\sum^m_{i=1}\Bar{f}_i\indi_{C_i}$ on $\Omega$ by specifying its value $\Bar{f}_i$ on each  hypercube $C_i$ to be the mean value of $f$ on the intersection of $C_i \cap\Omega$. For any arbitrary cube $C_i$, it can be observed that for any $\vx \in C_i \cap \Omega$ it holds 
    \begin{equation}\label{eq:proof_approx}
        \abs{f(\vx) - \Bar{f}(\vx)} = \abs{f(\vx) - \frac{1}{\abs{C_i\cap \Omega}} \int_{C_i\cap \Omega} f(\vy) d\vy }\leq \frac{1}{\abs{C_i\cap \Omega}} \int_{C_i \cap \Omega} \abs{f(\vx)-f(\vy)} d\vy < \varepsilon, 
    \end{equation}
    where for any Lebesgue measurable set $A\subset \sR^n$, $|A|$ represents its Lebesgue measure. 
    Since both $x$ and $C_i$ are arbitrary, this means that 
    \[
        \norm{f-\Bar{f}}_\infty = \sup_{\vx\in \Omega} \abs{f(\vx) -\Bar{f}(\vx)} \leq \varepsilon. 
    \]
\end{proof}
\begin{figure}
\centering
\begin{tikzpicture}[scale=1.5]
    \draw[thick, smooth, tension=1, fill=blue!10] plot[smooth cycle] coordinates {
        (0,0) (1.5,0.5) (1.9,1.4) (1,2.42) (-0.5,2) (-0.95,1) (-0.5,0.5)
    };

    \node at (0.8, 1.8) [blue] {\huge\textbf{$\Omega$}};

    \foreach \x in {-1, -0.5, ..., 2.5} {
        \draw[gray, thin, dashed] (\x, -0.5) -- (\x, 3); 
    }
    \foreach \y in {-0.5, 0, ..., 3} {
        \draw[gray, thin, dashed] (-1.5, \y) -- (2.5, \y); 
    }

    \draw[thick, dashed, red] (-1, -0.5) rectangle (2, 2.5);
    \draw[<->] (-1.1, 2.5) -- (-1.1, -0.5) node[midway, left] {$2K$};
    \draw[<->] (-1,2.6) -- (-0.5, 2.6) node[midway, above] {$\delta$};
\end{tikzpicture}
\caption{Illustration of the grid and cube partition used in the proof of Lemma \ref{lem:approx_by_step_functions}.  }
\end{figure}
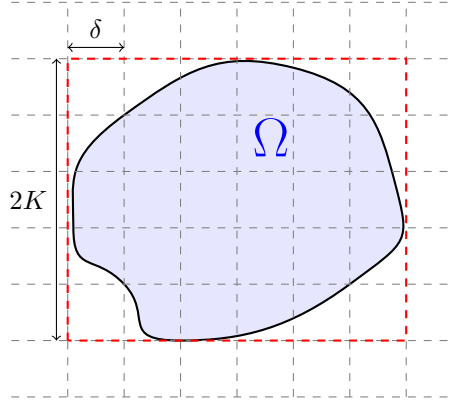

\subsubsection{Approximation rates}

In this section we provide approximation rates for a discrete-time LIF-SNN in terms of its size. More specifically, we combine Lemmas \ref{lem:express_polyhedra_indicators} and \ref{lem:approx_by_step_functions} for the special case of Lipschitz functions (chosen for analytical tractability) to give an estimate on the number of neurons needed to approximate a Lipschitz function using a discrete-time LIF-SNN. 
We recall that a function $f:\Omega\to \sR$ is $\Gamma$-Lipschitz (with respect to the $\|\cdot\|_\infty$ norm) if for all $\vx,\vy\in\Omega$, we have 
\[
|f(\vx)-f(\vy)|\leq \Gamma \|\vx-\vy\|_\infty.
\]
\begin{corollary}\label{cor:Lipschitz_approx}
    Let $f$ be a $\Gamma$-Lipschitz continuous function on the hypercube $[-K,K]^n$. Then for all $\varepsilon>0$ there exists a discrete-time LIF-SNN with $T=1, L=2$ and 
    \[
    n_1=\left(\max\left\{\left\lceil \frac{2K}{\varepsilon} \Gamma\right\rceil, 1\right\}+1\right)n, \quad n_2=\max\left\{\left\lceil \frac{2K}{\varepsilon}\Gamma\right\rceil^{n},1\right\}
    \]
    such that $\|R(\mPhi)-f\|_\infty\leq \varepsilon$.

\end{corollary}
\begin{proof}
    Similar to the proof of \ref{lem:approx_by_step_functions}, we will construct a function of the form $\Bar{f}=\sum^m_{i=1}\Bar{f}_i\indi_{C_i}$, where $m:=\ceil{\frac{2K}{\delta}}^n$ for some $\delta$ to be determined. In particular, $\delta$ corresponds to the size of the hypercubes $C_i$ with $\cup_{i=1}^m C_i =[-K,K]^n$. By applying \eqref{eq:proof_approx} and Lipschitz continuity we have, for $x\in C_i\cap [-K,K]^n$, 
    \begin{align*}
       \|f-\Bar{f}\|_\infty\leq \frac{1}{\abs{C_i\cap \Omega}} \int_{C_i \cap \Omega} \abs{f(\vx)-f(\vy)} d\vy \leq \Gamma \delta,  
    \end{align*}
    given that $\|\vx-\vy\|_\infty\leq \delta$ for $\vx,\vy\in C_i\cap [-K,K]^n$. 
    Thus, choosing $\delta=\min\{\frac{\varepsilon}{\Gamma}, 2K\}$ we get
    \[
    \|f-\Bar{f}\|_\infty\leq \varepsilon,
    \]
    and, by Lemma \ref{lem:express_polyhedra_indicators} part $(ii)$, that a discrete-time LIF-SNN with $L=2$, $n_1=(m^{1/n}+1)n$, $n_2=m$ can be constructed to express $\Bar{f}$. Replacing the value of $m$, we have $n_1=(\left\lceil\frac{2K}{\delta}\right\rceil+1)n$. With our choice of $\delta$, this becomes
    \[
    n_1=\left(\max\left\{\left\lceil \frac{2K}{\varepsilon} \Gamma\right\rceil, 1\right\}+1\right)n
    \]
\end{proof}
\begin{remark}
    In Theorem \ref{theorem:universal_T1_app}, we use $\operatorname{diam}_{\infty}(\Omega)$ instead of $K$, but it is easy to see that $K$ can be chosen as $\operatorname{diam}_{\infty}(\Omega)$ given that $\Omega$ is a compact set. 
\end{remark}
We now finish the proof of Theorem \ref{theorem:universal_T1_app} by simply applying Lemma \ref{lem:express_polyhedra_indicators}, \ref{lem:approx_by_step_functions} and Corollary \ref{cor:Lipschitz_approx}.
\begin{proof}[Proof of Theorem \ref{theorem:universal_T1_app}]
    By Lemma \ref{lem:express_polyhedra_indicators}, we can express any linear combination of indicator functions of hypercubes of the same volume forming a larger hypercube by discrete-time LIF-SNNs with $T=1$ and $L=2$. By Lemma \ref{lem:approx_by_step_functions}, these step functions can uniformly approximate any continuous function on compact subsets of $\mathbb{R}^n$ to arbitrary precision. Therefore, the class of SNNs described above forms a universal approximator for continuous functions. Finally, the last statement in Theorem \ref{theorem:universal_T1_app}, for a Lipschitz $f$, follows from Corollary \ref{cor:Lipschitz_approx}.
\end{proof}
\paragraph{Lower bound on neuronal requirements for SNN function approximation.}
The construction in the proof of Lemma \ref{lem:approx_by_step_functions} reveals that two-layer discrete-time LIF-SNNs approximate continuous functions by discretizing them on a grid of hypercubes. We aim to prove that for certain classes of functions, this grid-based approximation strategy is optimal up to constant factors, leading to a lower bound on the required number of neurons. The next proposition, which is a restatement of Proposition \ref{prop:lower_bound_neurons} in the main paper, establishes the order-optimal approximation bound for one-dimensional functions defined on the unit interval $[0,1]$.
\begin{proposition}\label{prop:no_significant_reduction}
For any $\Gamma>0$, we define $f:[0,1]\to \sR$ by $f(x)=\Gamma x$. For any $\varepsilon\in(0,1]$, there exists a discrete-LIF-SNN $\Phi$ with two layers and the width of the first hidden layer $n_1=\lceil\Gamma/\varepsilon\rceil+1$, such that  $\|R(\Phi)-f\|\infty\leq \varepsilon$. On the other hand, any discrete-time LIF-SNN $\Phi'$ with first layer hidden width $n'_1\leq \Gamma/\varepsilon^{\alpha}-1$, for $\alpha<1$, will satisfy $\|R(\Phi')-f\|_{\infty}\geq \frac12\varepsilon^{\alpha}$. 
\end{proposition}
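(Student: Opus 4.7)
The plan is to establish the upper and lower bounds separately. The upper bound is essentially a direct application of earlier constructions, while the lower bound reduces to a one-dimensional region count combined with a pigeonhole argument.

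For the upper bound, I would specialize Corollary~\ref{cor:Lipschitz_approx} (equivalently, part~(ii) of Lemma~\ref{lem:express_polyhedra_indicators}) to $n=1$ and $\Omega=[0,1]$: partition $[0,1]$ into $m=\lceil \Gamma/\epsilon\rceil$ equal subintervals, each of length at most $\epsilon/\Gamma$, and approximate $f$ on each subinterval by its midpoint value. This incurs a uniform error of at most $(\Gamma/2)(\epsilon/\Gamma)=\epsilon/2\leq\epsilon$, and the resulting step function is realized by a two-layer discrete-time LIF-SNN with $n_1=m+1=\Gamma/\epsilon+1$ threshold neurons in the first hidden layer (the extra neuron closes the grid at the endpoints) and $n_2=m$ in the second.

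For the lower bound, the argument has three ingredients. First, Theorem~\ref{theorem:num_regions} specialized to $T=1$, $n_\text{in}=1$, and $n_1=n'_1$ gives $\abs{\gC}\leq \abs{\gR}\leq \sum_{i=0}^{1}\binom{n'_1}{i}=n'_1+1$. Equivalently, each of the $n'_1$ first-layer neurons contributes a single threshold point on $\sR$, and with $T=1$ neither the second hidden layer nor the affine membrane-potential decoder can subdivide these intervals further; they can only merge them. Hence $R(\mPhi')$ is piecewise constant on at most $n'_1+1$ subintervals of $[0,1]$. Second, pigeonhole yields at least one piece $I^*=[a,a+\ell^*]\subseteq[0,1]$ of length $\ell^*\geq 1/(n'_1+1)$. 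Third, on any interval of length $\ell$, the minimax error of approximating $f(x)=\Gamma x$ by a constant equals $\Gamma\ell/2$, attained at the endpoints. Combining these,
\[
\|R(\mPhi')-f\|_\infty \;\geq\; \frac{\Gamma\ell^*}{2} \;\geq\; \frac{\Gamma}{2(n'_1+1)} \;=\; \frac{\Gamma\,\epsilon^\alpha}{2(\Gamma+\epsilon^\alpha)},
\]
which is of order $\tfrac12\epsilon^\alpha$ as $\epsilon\to 0$ (the stated constant $\tfrac12$ emerges from the harmless correction factor $\Gamma/(\Gamma+\epsilon^\alpha)\to 1$).

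I expect the most delicate point to be the first step of the lower bound: one must verify that with $T=1$, adding a second hidden layer plus an affine decoder cannot produce a finer partition than the first-layer threshold arrangement. Theorem~\ref{theorem:num_regions} already guarantees this, but a short self-contained check -- the second hidden layer acts as a fixed Boolean function on the finite set of first-layer activation patterns, and the decoder is an affine combination of these patterns -- makes the bound transparent. Once this is in place, pigeonhole together with the trivial minimax bound for approximating a nonconstant affine function by a constant finishes the argument, and shows that the exponent $\alpha<1$ is sharp: reducing $n'_1$ below $\Gamma/\epsilon$ by any sublinear factor forces the longest piece to have length $\Omega(\epsilon^\alpha/\Gamma)$, pushing the $L^\infty$ error to $\Omega(\epsilon^\alpha)$.
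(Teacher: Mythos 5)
Your argument is correct and essentially the paper's own proof: the upper bound is read off from Corollary~\ref{cor:Lipschitz_approx}, and the lower bound combines the fact that $R(\mPhi')$ is piecewise constant on $O(n'_1)$ intervals (which the paper asserts directly and you justify via Theorem~\ref{theorem:num_regions}) with a pigeonhole argument producing one long interval on which the linear $f$ cannot be approximated by a constant. The only immaterial differences are that the paper pigeonholes over the grid points of the reference network $\mPhi$ rather than over interval lengths, and that both versions reach the stated constant $\tfrac12$ only up to a factor tending to $1$ (equivalently, for $\epsilon$ small enough), so your slack is no worse than the paper's.
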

\begin{proof}
    The first statement is a direct consequence of Corollary \ref{cor:Lipschitz_approx}. In particular, by construction in Corollary \ref{cor:Lipschitz_approx}, respectively Lemma \ref{lem:express_polyhedra_indicators}, $R(\Phi)$ is constant on a regular grid with $n_1$ points defined by $\{\frac{i}{n_1-1}\}^{n_1-1}_{i=0}$ (corresponding to $n_1-1$ intervals on $[0,1]$).

    Now consider a discrete-time LIF-SNN $\Phi'$ with $n'_1$ neurons in the first layer. In that case, $R(\Phi')$ is constant on (at most) $n'_1+1$ intervals. Then, (via the pigeonhole principle) there exists at least one interval that contains $q=\left\lceil \frac{n_1}{n'_1+1}\right\rceil$ points of the set $\{\frac{i}{n_1-1}\}^{n_1-1}_{i=0}$. This means that this interval contains a segment of length $q/(n_1-1)$. Given that the function $R(\Phi')$ is constant on this interval and that $f$ is linear, the error satisfies 
    \[
    \|R(\Phi')-f\|_{\infty}\geq \frac12 \Gamma q/(n_1-1)= \frac12\Gamma\left\lceil \frac{n_1}{n'_1+1}\right\rceil \frac1{n_1-1}  \geq \frac12\Gamma\left\lceil \frac{1}{n'_1+1}\right\rceil \geq \frac12\varepsilon^{\alpha}.
    \]
    In the last inequality, we used our assumption on $n'_1$.      
\end{proof}
\begin{remark}[Extensions]
    The proof remains valid when $n'_1$ takes the more general form $h(\frac{1}{\varepsilon})$, where $h$ is any function satisfying $\lim_{x\to\infty} \frac{h(x)}{x} = 0$, rather than the specific form $\frac{1}{\varepsilon^\alpha}$ with $\alpha < 1$ used above. While we focused on the unit interval for clarity, the results naturally extend to arbitrary compact sets. More interestingly, we believe that extensions to the $n$-dimensional case are possible, but this general case is left for future work. However, a weaker version of Proposition \ref{prop:no_significant_reduction} holds for arbitrary dimension $n\in\mathbb{N}$, as can be shown by considering Lipschitz continuous functions that depend on a single coordinate. 

\end{remark}

\subsection{Proofs of Section \ref{sec:linear_regions}} \label{sec:proof_counting_regions}
In this section, we present a detailed proof of Theorem \ref{theorem:num_regions}, which states a tight upper bound on the number of regions created by discrete-time LIF-SNNs. The process is organized as follows.

We start by introducing in Subsection \ref{subsec:spatial_separation} the basic notions of \emph{hyperplane arrangements} and \emph{general position} as well as analyzing a special case of hyperplane arrangements, where the arrangement is a collection of families of parallel hyperplanes. In particular, we introduce an upper bound on the number of regions created in this case and prove that the bound is attained in order of magnitude if the hyperplane arrangements satisfy a certain refined general position condition. 

Next, in Subsection \ref{subsec:temporal_separation}, we consider the mapping from the input to the first hidden layer (or equivalently a shallow SNN) and show that each neuron in the first hidden layer corresponds to a number of parallel hyperplanes in the input space which scales exponentially in the number of time steps $T$. This allows us to apply the previous result and characterize an upper bound for the maximal number of regions created by a shallow discrete-time LIF-SNN. To see the tightness of this bound, one has to show the existence of a shallow discrete-time LIF-SNN that realizes the mentioned general position condition, which is discussed in Subsection \ref{subsec:existence_general_position}.

Finally, in Subsection \ref{subsec:deep_SNN_regions}, we analyze deeper spatial architectures and derive the main result for the number of regions created by discrete-time LIF-SNNs which is stated in Theorem \ref{theorem:num_regions}. 

\subsubsection{Number of regions created by families of parallel hyperplanes}\label{subsec:spatial_separation}
First, we give a formal definition for hyperplane arrangements and general position, which will be used below to characterize the input space partitioning of discrete-time LIF-SNNs. In Def. \ref{def:hyperplane_arrangements}, the notions of hyperplane arrangements and their general position are taken from \cite{book_enum_combinatorics_2011}, while the notion of general position for families of parallel hyperplanes focuses on the special case of hyperplane arrangements that are created by discrete-time LIF-SNNs on the input space.

\begin{definition}[Hyperplane arrangements, general position] \label{def:hyperplane_arrangements}
    Let $d\in \sN$.
    \begin{itemize}
        \item  A \textbf{hyperplane arrangement} $\gA \subset \sR^d$ is defined as a finite collection of hyperplanes in $\sR^d$. Furthermore, we say that $\gA$ is (respectively the hyperplanes in $\gA$ are) in \textbf{general position} if the intersection of any $p$ distinct elements from $\gA$ is either (1) the empty set if $p>d$ or (2) a $(d-p)$-dimensional affine subspace if $p\leq d$. 
        
        \item Let $\cup_{i=1}^n \gA_i \subset \sR^d$ be a hyperplane arrangement that consists of $n$ different families $\gA_i$, $i\in[n]$, of finitely many hyperplanes, where the hyperplanes within each family $\gA_i$ are parallel. We say that the \textbf{families} $\gA_i$ are in \textbf{general position} if the set $\set{a_1, \hdots, a_n}$ with any choices $a_i \in \gA_i$, $i\in [n]$ form a hyperplane arrangement in general position. We call such a subset $\set{a_1, \hdots, a_n}$ a \textbf{representative subset} of $\cup_{i=1}^n \gA_i$.
    \end{itemize}
\end{definition}
\begin{remark}\label{rm:hpa}
    Informally, a hyperplane arrangement in $\sR^d$ is in general position if and only if no pair of hyperplanes is parallel and no set of $d+1$ hyperplanes coincides. The notion of general position for families of parallel hyperplanes requires the same conditions for every representative subset. It is obvious that whenever the non-parallelism is satisfied by any representative subset, it is satisfied by all such subsets. However, it is in general more difficult to control whether the second condition is satisfied.     
\end{remark}

While the fundamental theorem of Zaslavsky \cite{zaslavsky_theorem_1975,book_enum_combinatorics_2011} establishes an abstract way to compute the number of regions in the general case via the so-called characteristic polynomials, we only focus on the special case of hyperplanes created by discrete-time LIF-SNNs, which consist of a collection of families of (the same number of) parallel hyperplanes. The following lemma demonstrates a tight upper bound on the number of regions created in hyperplanes in those special configurations. 
\begin{lemma}\label{lemma:spatial_separation}
    Let $n,k,d \in \sN$. Consider a hyperplane arrangement $\gA = \cup_{i=1}^n \gA_i$ in $\sR^d$ consisting of $n$ different families $\gA_i$, $i=1,\hdots, n$. Assume that each family $\gA_i$ consists of at most $k$ distinct parallel hyperplanes. Then the hyperplane arrangement $\gA$ partitions $\sR^d$ into a number of regions, which is upper bounded by 
    \[
        \begin{cases}
            \sum_{i=0}^d k^i \binom{n}{i} \quad &\text{if } n \geq d,  \\
            (k+1)^n \quad &\text{if } n\leq d.
        \end{cases}
    \]
    Furthermore, the upper bound is attained if the families $\gA_i$, $i =1,\hdots, n$, are in general position. 
\end{lemma}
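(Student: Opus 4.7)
The strategy is a double induction on the number of families $n$ and the ambient dimension $d$, combined with the classical deletion--restriction principle from the theory of hyperplane arrangements. Let $N(n,k,d)$ denote the maximum region count in the setting of the lemma. Observe first that the two cases of the closed form in the statement unify as
\[
N(n,k,d)\leq \sum_{i=0}^{\min\{n,d\}} k^{i}\binom{n}{i},
\]
since $\binom{n}{i}=0$ for $i>n$ and $\sum_{i=0}^{n}k^{i}\binom{n}{i}=(k+1)^{n}$ by the binomial theorem. I may therefore reduce the two regimes to a single inductive identity.

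The core step is to establish the recursion
\[
N(n,k,d)\;\leq\; N(n-1,k,d)\;+\;k\cdot N(n-1,k,d-1).
\]
To see this, given $\gA=\bigcup_{i=1}^n \gA_i$, I single out the last family $\gA_n=\{h_1,\ldots,h_{k_n}\}$ with $k_n\leq k$, delete it to obtain $\gA'=\bigcup_{i=1}^{n-1}\gA_i$, and then reintroduce its hyperplanes one at a time. Since $h_j$ is parallel to all previously reinserted $h_\ell$ from its own family, only hyperplanes in $\gA'$ contribute to the trace arrangement on $h_j$. By the standard deletion--restriction fact, inserting $h_j$ increases the total region count by exactly the number of regions that the induced arrangement $\{H\cap h_j:H\in \gA'\}$ creates inside $h_j\cong\sR^{d-1}$. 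Under the parallel-family hypothesis, each family $\gA_i$ ($i<n$) restricts on $h_j$ to at most $k$ parallel $(d-2)$-dimensional affine subspaces, giving $n-1$ families of at most $k$ parallel hyperplanes in $h_j$. The inductive hypothesis bounds this by $N(n-1,k,d-1)$, and summing over $j=1,\ldots,k_n\leq k$ yields the claimed recursion.

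Closing the induction amounts to algebra: using Pascal's identity $\binom{n}{i}=\binom{n-1}{i}+\binom{n-1}{i-1}$, one verifies directly that the closed form $\sum_{i=0}^{\min\{n,d\}}k^{i}\binom{n}{i}$ satisfies the same recursion. The base cases $N(0,k,d)=1$ and $N(n,k,1)=kn+1$ (at most $kn$ distinct points on the line produce at most $kn+1$ intervals) are immediate and match the formula. For tightness, when the families are in general position, every representative subset of $\gA'\cup\{h_j\}$ lies in general position in $\sR^d$; intersecting with $h_j$ shows that the restricted families on $h_j$ form $n-1$ parallel families in general position in $\sR^{d-1}$. Thus the deletion--restriction inequality becomes an equality and, by induction, the closed form is attained exactly.

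The main obstacle I foresee is the geometric bookkeeping in the inductive step, namely justifying carefully that the general position of parallel families on $\sR^d$ descends to the induced families on an arbitrary hyperplane $h_j\in\gA_n$ in $\sR^{d-1}$. This hinges on the fact that Definition \ref{def:hyperplane_arrangements} characterizes general position via representative subsets, so the property is preserved under transverse restriction; the remainder of the argument is a clean induction and an application of Pascal's identity. A minor care point is that the family $\gA_n$ itself does not contribute to the trace arrangement on $h_j$ (its remaining hyperplanes are parallel to $h_j$), which is precisely what reduces the count from $n$ to $n-1$ families in the inductive step and makes the recursion match the binomial closed form.
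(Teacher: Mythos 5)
Your proposal is correct and follows essentially the same route as the paper: a deletion--restriction recursion $N(n,k,d)\leq N(n-1,k,d)+k\,N(n-1,k,d-1)$ obtained by removing the family $\gA_n$ and reinserting its (mutually parallel, hence non-interacting) hyperplanes one at a time, closed by induction via Pascal's identity, with tightness recovered recursively from the general-position condition. Your explicit remark that the already-reinserted members of $\gA_n$ do not contribute to the trace arrangement on $h_j$ is a welcome clarification of a point the paper's proof leaves implicit.
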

The proof is based on the `deletion-restriction' principle \cite{book_enum_combinatorics_2011} (or simply recursion), but simplified to our special case. 
\begin{proof}
    We denote the maximal number of regions created by an arrangement of $n$ families of $k$ parallel hyperplanes in $\sR^d$ by $r_{n,d}^k$. In the following, we will compute $r_{n,d}^k$ recursively over $n$ and $d$. 

    First, we consider removing a family from the hyperplane arrangement $\gA$, say $\gA_n$. To that end, let $h\in \gA_n$ be an arbitrary hyperplane removed from $\gA$.
    Note that in the cases $h$ is parallel to or coincides with a hyperplane in $\gA_i$, the number of regions created by $\gA_n$ is strictly smaller than $r_{n,d}^k$, thus we exclude those cases in the computation of $r_{n,d}^k$. Then, each pair of hyperplanes formed by $h$ and a hyperplane from $\gA_i$, $i\leq n-1$ may have one intersection, which is a $(d-2)$-dimensional affine subspace. 
    
    Observe that these $(d-2)$-dimensional affine subspaces are in turn hyperplanes if restricted to $h$, which is of dimension $d-1$. Furthermore, among all such $(d-2)$-dimensional hyperplanes of $h$, the ones coming from the same families $\gA_i$, $i \in [n-1]$, are parallel. Hence, $h$ is divided into at most $r_{n-1,d-1}^k$ regions.

    On the other hand, observe that when adding $h$ back to $\gA \setminus \gA_n$, the number of regions added is exactly equal to the number of regions that $\gA\setminus \gA_n$ creates on $h$. Also, recall that $h$ is only an arbitrary representative from $k$ elements of $\gA_n$. Therefore, we arrive at the following recursive formula
    \[
        r_{n,d}^k = r_{n-1,d}^k +kr_{n-1,d-1}^k.
    \]
    
    With the convention $\binom{n}{i} = 0$ for any $i>n$, we can prove by induction over $n+d$ that 
    \[
        r_{n,d}^k = \sum_{i=0}^d k^i \binom{n}{i}.
    \]
    Indeed, the claim holds trivially for $n=0$ as an empty hyperplane arrangement would leave the space $\sR^d$ not partitioned. If the claim holds for any pair $(n',d')$ with $n^\prime +d^\prime\leq n+d-1$ for some fixed pair $(n,d)$, then by the above recursive formula and induction hypothesis, we get 
    \begin{align*}
        r_{n,d}^k &= r_{n-1,d}^k +kr_{n-1,d-1}^k = \sum_{i=0}^d k^i \binom{n-1}{i} + k \sum_{i=0}^{d-1} k^i \binom{n-1}{i} \\
        &= 1 + \sum_{i=1}^d k^i \binom{n-1}{i} + \sum_{i=1}^d k^i \binom{n-1}{i-1}
        = 1+ \sum_{i=1}^d k^i \binom{n}{i} \\
        &= \sum_{i=0}^d k^i\binom{n}{i}.
    \end{align*}
    In case $n\leq d$, it follows that $r_{n,d}^k = \sum_{i=0}^n k^i \binom{n}{i} = (k+1)^n$. Finally, notice that the hyperplane arrangement $\gA$ attains the upper bound of $r_{n,d}^k$ if and only if $\cup_{i=1}^{n-1} \gA_i$ attains $r_{n-1,d}$ and the $(d-2)$-dimensional intersections of elements of $\gA_n$ and elements of other families all exist and also attain $r_{n-1,d-1}$. This condition recursively recovers the property of general position and completes the proof.
\end{proof}

\subsubsection{Characterization of the hyperplane arrangements created by the first hidden layer} \label{subsec:temporal_separation}
In this subsection, we only consider the mapping from the input to the first hidden layer, or equivalently, a shallow architecture. Hence, the layer indices can be dropped for convenience. In particular, we will write $\mW, \vb, \beta, \vartheta, \vu(t), \vs(t)$ for $\mW^1, \vb^1, \beta^1, \vartheta^1, \vu^1(t), \vs(t)$ respectively. With the introduced reduced notation, we obtain the following dynamics
\begin{align*}
\begin{cases}
    \vs(t) &= H\Big( \beta \vu(t-1) + \mW \vx + \vb - \vartheta \vone\Big)\\
    \vu(t) &= \beta \vu(t-1) + \mW \vx + \vb - \vartheta \vs(t),
\end{cases}
    \quad t \in [T].
\end{align*}

For convenience, we repeat the statement of Lemma \ref{lemma:temporal_seperation} in the main paper in the introduced notation. Note that Lemma \ref{lemma:temporal_seperation_appendix} is stated for shallow discrete-time LIF-SNNs only because of notational convenience. It is straightforward to extend this result to deeper architectures as stated in Lemma \ref{lemma:temporal_seperation}. 
\begin{lemma}\label{lemma:temporal_seperation_appendix}
    Consider a shallow discrete-time LIF-SNN with $T$ time steps and an arbitrary neuron $k$ in the hidden layer. Then the input space $\sR^{n_\text{in}}$ is partitioned by $k$ via a number of parallel hyperplanes into at most $\frac{T^2+T+2}{2} \in O(T^2)$ regions, on each of which the time series $\big(s_k(t)\big)_{t \in [T]}$ takes a different value in $\set{0,1}^T$. This upper bound on the maximum number of regions is tight in the sense that there exist values of the spatial parameters $u_k(0), \beta, \vartheta \in \sR \times [0,1] \times \sR_+$ such that the bound is attained. 
\end{lemma}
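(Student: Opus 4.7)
The first step is to reduce to a one-dimensional counting problem. From~(\ref{eq:shift}), the spike $s_k(t)$ depends on the input $\vx$ only through the scalar $a := \vw_k^T\vx + b_k$, so the entire spike train $\big(s_k(t)\big)_{t\in[T]}$ is determined by $a$ alone. Consequently, every region boundary in $\sR^{n_\text{in}}$ is a level set $\{\vx : \vw_k^T\vx + b_k = c\}$ for some $c \in \sR$, and all such sets are parallel hyperplanes orthogonal to $\vw_k$. It remains to bound the number of distinct spike patterns, identified with their spike-time sets $S(a) := \{t \in [T] : s_k(t;a) = 1\} \subseteq [T]$, as $a$ ranges over $\sR$.

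\paragraph{Chain structure and a rank function.} I plan to show that the realized spike-time sets form a chain in the partial order $\preceq$ on $2^{[T]}$ given by $S \preceq S'$ iff $|S| \leq |S'|$ and, writing elements in increasing order as $t_1 < \cdots < t_{|S|}$ and $t_1' < \cdots < t_{|S'|}'$, $t_i' \leq t_i$ for every $i \leq |S|$; in words, increasing $a$ both adds spikes and pushes each existing spike earlier. To exploit this, I use the rank $\rho(S) := \sum_{t \in S}(T - t + 1) \in \{0, 1, \ldots, T(T+1)/2\}$. A direct computation gives
\[
    \rho(S') - \rho(S) = \sum_{i=1}^{|S|}(t_i - t_i') + \sum_{i=|S|+1}^{|S'|}(T - t_i' + 1) \geq 1
\]
whenever $S \prec S'$, so $\rho$ is strictly increasing along the $\preceq$-chain of realized patterns. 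Hence the number of distinct patterns---and therefore of regions---is at most $T(T+1)/2 + 1 = (T^2 + T + 2)/2$.

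\paragraph{Tightness via explicit construction.} For tightness I will exhibit spatial parameters realizing every value $0, 1, \ldots, T(T+1)/2$ of $\rho$. Setting $\beta = 1$ collapses $g_{t-1}$ in~(\ref{eq:shift}) to a function of $N_{t-1} := \sum_{i<t} s_k(i) \in \{0, \ldots, t-1\}$ alone, giving at most $t$ distinct thresholds at step $t$; choosing $u_k(0)$ and $\vartheta$ so that each of these thresholds lands strictly inside its corresponding region at time $t-1$ creates exactly $t$ new regions per step, producing $1 + \sum_{t=1}^T t = (T^2+T+2)/2$ regions in total. The case $T=3$ with $(u_k(0),\beta,\vartheta) = (1, 1, 3)$, which already attains the full $7$ regions, serves as the template; the general verification reduces to checking that a sufficiently generic choice of $u_k(0)$ and $\vartheta$ keeps each of the $t$ thresholds inside its intended interval.

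\paragraph{Main obstacle.} The delicate step is proving the monotonicity of sorted spike times in $a$ underlying the chain property. The naive comparison $s_k(t;a) \leq s_k(t;a')$ for $a < a'$ is false, because crossing a single threshold can cascade into flips of later spikes in both directions: an added earlier spike triggers a reset that suppresses a subsequent firing, which in turn may allow a still-later spike to reappear. I plan to establish monotonicity by induction on the spike index $i$, showing that $\sigma_i(a)$---the time of the $i$-th spike for input $a$, with convention $+\infty$ if fewer than $i$ spikes occur---is non-increasing in $a$. The inductive step will couple two trajectories $a < a'$ after their $(i-1)$-th spikes (which by hypothesis satisfy $\sigma_{i-1}(a') \leq \sigma_{i-1}(a)$), comparing post-reset membrane potentials while exploiting that for any fixed spike history the condition in~(\ref{eq:shift}) is non-decreasing in $a$.
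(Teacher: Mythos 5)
Your proposal is correct in outline but takes a genuinely different route from the paper's proof. The paper never establishes any global monotonicity of the spike pattern in the scalar $a=\vw_k^\top\vx+b_k$; instead it argues locally in time: a region $R_{t-1}(a_1,\dots,a_{t-1})$ splits at step $t$ only if it contains its \emph{own} hyperplane $h_{t-1}(a_1,\dots,a_{t-1})$, and within each Hamming-weight class $\set{(a_i):\sum_i a_{t-i}=m}$ the regions are ordered lexicographically while the corresponding shifts $g_{t-1}$ are ordered in reverse, so at most one hyperplane per class lands in its own region; this gives at most $t$ new regions at step $t$ and $1+\sum_{t=1}^{T}t$ in total. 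You replace this with a global structural claim (the realized spike-time sets form a chain under $\preceq$) plus the rank function $\rho$; the count is then immediate, and as a bonus you get that the preimage of each pattern is a single interval in $a$ (by antisymmetry of $\preceq$), something the paper obtains only implicitly. The one step you leave as a plan---the chain property---is the heart of your argument, and it does go through, but I would run the induction over time rather than over the spike index: for $a<a'$ show by induction on $t$ that $N(t;a'):=\sum_{\tau\le t}s_k(\tau;a')\ge N(t;a)$. If $N(t-1;a')>N(t-1;a)$ the step is trivial since counts change by at most one; if $N(t-1;a')=N(t-1;a)$, the induction hypothesis gives componentwise domination of the sorted spike times, so each spike of $a'$ occurs no later and is therefore discounted at least as much ($\beta\in[0,1]$, $\vartheta>0$), whence
\[
\vartheta\sum_{i=1}^{t-1}\beta^{i}s_k(t-i;a')\;\le\;\vartheta\sum_{i=1}^{t-1}\beta^{i}s_k(t-i;a),
\qquad
\sum_{i=0}^{t-1}\beta^{i}a'\;\ge\;\sum_{i=0}^{t-1}\beta^{i}a,
\]
so the argument of the Heaviside in \eqref{eq:spike_hyperplanes} is no smaller for $a'$ and $s_k(t;a')\ge s_k(t;a)$. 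Your proposed induction on the spike index can be made to work, but it forces a comparison of post-reset potentials at \emph{different} times, which is exactly where the cascading flips you worry about become awkward; the time-indexed cumulative-count formulation sidesteps this entirely. Your tightness construction ($\beta=1$ with a generic choice of $u_k(0)$ and $\vartheta$ to keep the $t$ candidate thresholds inside their intended intervals) coincides with the paper's up to the choice of constants.
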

\begin{proof}
\text{ }
\begin{enumerate}
    \item \textbf{Introducing notations.} 
    
    The dynamics with the introduced simplified notations imply 
    \[
        \beta^i \vu(t-1-i) = \beta^{i+1} \vu(t-2-i) + \beta^i(\mW \vx +\vb) - \vartheta \beta^i \vs(t-1-i).
    \]
    Taking the sum for $i = 0, \hdots, t-2$, we obtain
    \[
        \vu(t-1) = \beta^{t-1} \vu(0) + \sum_{i=0}^{t-2} \beta^i (\mW \vx +\vb) - \vartheta \sum_{i=0}^{t-2} \beta^i \vs(t-1-i).
    \]
    Therefore, the spike activation $\vs(t)$ is determined by
    \begin{align*}
        \vs(t) &= H \Big( \beta \vu(t-1) + \mW\vx + \vb - \vartheta\vone\Big) \\
        &= H \Bigg( \beta^{t} \vu(0) + \sum_{i=0}^{t-1} \beta^i (\mW\vx + \vb) - \vartheta\Big( \vone + \sum_{i=1}^{t-1}\beta^{i} \vs(t-i)\Big)\Bigg).
    \end{align*}
    For the given neuron $k\in [n_1]$ in the first hidden layer, this becomes 
    \begin{align} \label{eq:spike_hyperplanes}
        s_k(t) &= H\Bigg( \sum_{i=0}^{t-1} \beta^i (\sprod{\vw_k, \vx} + b_k) +\beta^{t} u_k(0) - \vartheta\Big( 1+ \sum_{i=1}^{t-1}\beta^i s_k(t-i)\Big)\Bigg) \notag\\ 
        &=H\Bigg(  \sprod{\vw_k, \vx} + b_k +\frac{\beta^{t} u_k(0) - \vartheta\Big( 1+ \sum_{i=1}^{t-1}\beta^{i} s_k(t-i)\Big)}{\sum_{i=0}^{t-1} \beta^i}\Bigg),
    \end{align}
    where $\vw_i \in \sR^{n_{\text{in}}}$ denotes the $i$-th row vector of $\mW$. This means that at time step $t \in [T]$, the value of $s_k(t) \in \set{0,1}$ gives information about the half-space the input vector $\vx\in \sR^{n_\text{in}}$ lies in with respect to the hyperplane 
    \[
        h_{t-1}(s_k(1), \hdots, s_k(t-1)) := \set{\vx\in \sR^{n_\text{in}}: \sprod{\vw_k,\vx}+b_k - g_{t-1}(s_k(1), \hdots, s_k(t-1)) = 0} \subset \sR^{n_\text{in}}
    \]
    where the function $g_{t-1}$ is defined by
    \begin{align}\label{eq:def_shifts}
        g_{t-1}: \set{0,1}^{t-1} \mapsto \sR, \quad g_{t-1}(a_1, \hdots, a_{t-1}) =
        \frac{-\beta^{t} u_k(0) + \vartheta\Big( 1+ \sum_{i=1}^{t-1}\beta^{i} a_{t-i}\Big)}{\sum_{i=0}^{t-1} \beta^i}.
    \end{align}
    Furthermore, for each binary code $(a_i)_{i\in[t-1]} \in \set{0,1}^{t-1}$, we define the corresponding region 
    \[
        R_{t-1}(a_1, \hdots, a_{t-1}) := \set{\vx \in \sR^{n_\text{in}}: s_k(i) = a_i \; \forall i\in [t-1]} = \cap_{i=1}^{t-1} \set{\vx \in \sR^{n_\text{in}}: s_k(i) = a_i}.
    \]
    Note that such a region can be empty (see below) and we denote by $N(t)$ the number of non-empty such regions (which is also the total number of regions created at time step $t$). Our starting point is the step $t=1$, i.e., $t-1=0$, where the whole space $\sR^{n_\text{in}}$, which corresponds to the empty code $(a_i)_{i =1}^0$, is divided by exactly $2^0 =1$ hyperplane, namely  (according to (\ref{eq:def_shifts})) the one given by the shift $g_0= -\beta u_k(0) + \vartheta$, into $2$ different regions (depending on whether $a_1=0$ or $a_1=1$). 
    
    \item \textbf{Not every binary code corresponds to a non-empty region.} 
        
    In principle, after time step $t-1$, or equivalently, before time step $t$, there can be $2^{t-1}$ possible binary codes $(a_i)_{i\in[t-1]}  \in \set{0,1}^{t-1}$ and accordingly the same number of hyperplanes $h_{t-1}(a_1, \hdots, a_{t-1})$. Each of these hyperplanes may separate (at most) one region into two sub-regions, thus increasing the total number of regions by one. This means that the number of regions might be doubled in each time step, i.e., $N(t)- N(t-1)$ might reach $2^{t-1}$, which possibly leads to $1+\sum_{t=1}^T 2^{t-1} = 2^T$ regions in total at time step $T$. 

    However, in reality, a hyperplane can divide a region into two sub-regions only if it intersects (in our case, as the hyperplanes are parallel, if it lies inside) that region, because otherwise the region remains one whole region. More specifically in our case, a region $R_{t-1}(a_1, \hdots, a_{t-1})$ corresponding to the code $(a_1, \hdots, a_{t-1})$ defined before time $t$ is separated into two sub-regions at time $t$ if and only if it contains the hyperplane $h_{t-1}(a_1, \hdots, a_{t-1})$ (created at time step $t$), i.e., 
    \begin{align}\label{eq:subregions_condition}
        h_{t-1}(a_1, \hdots, a_{t-1}) \subset R_{t-1}(a_1, \hdots, a_{t-1}).
    \end{align}
    According to our previous notion of (non-)empty regions, this means that if $h_{t-1}(a_1, \hdots, a_{t-1})$ falls outside of $R_{t-1}(a_1, \hdots, a_{t-1})$, i.e., the condition (\ref{eq:subregions_condition}) is violated, then the whole region $R_{t-1}(a_1, \hdots, a_{t-1})$ must lie on one side of the hyperplane $h_{t-1}(a_1, \hdots, a_{t-1})$ and therefore either $R_{t}(a_1, \hdots, a_{t-1}, 0)$ or $R_{t}(a_1, \hdots, a_{t-1}, 1)$ is empty, while the other set is the same as $R_{t-1}(a_1, \hdots, a_{t-1})$.
    
    The requirement (\ref{eq:subregions_condition}) significantly reduces the number of separated regions, or equivalently, reduces the increase $N(t)-N(t-1)$ in the number of regions from time step $t-1$ to $t$. 

    \item \textbf{Showing $N(t)-N(t-1) \leq t$ and deriving the bound on $N(T)$.}
    \begin{figure}[htbp]
        \centering
        \begin{subfigure}{0.45\textwidth}
        \centering
        \begin{tikzpicture}[scale = 2.0]
            \draw[->, very thick] (-1.5,0) -- (1.5,0);
            
            \draw[red, very thick] (0,0.2) -- (0,-0.2);
            \draw[blue, very thick] (0.5, 0.2) -- (0.5,-0.2);
            \draw[blue, very thick] (-0.7, 0.2) -- (-0.7,-0.2);
        
            \node at (-1.1, 0.2) {\textcolor{red}{0}\textcolor{blue}{0}};
            \node at (-0.4, 0.2) {\textcolor{red}{0}\textcolor{blue}{1}};
            \node at (0.2, 0.2) {\textcolor{red}{1}\textcolor{blue}{0}};
            \node at (0.8, 0.2) {\textcolor{red}{1}\textcolor{blue}{1}};
        \end{tikzpicture}%
        \caption{The separation after time step $t=2$: there are at most $4$ regions $\set{\textcolor{red}{s(1) = a_1}, \;\textcolor{blue}{s(2) = a_2}}$ for $(\textcolor{red}{a_1},\textcolor{blue}{a_2}) \in \set{0,1}^2$. The regions are sorted in the lexicographic order of their corresponding code. A region corresponding to the code $(\textcolor{red}{a_1},\textcolor{blue}{a_2})$ in (a) is divided into 2 sub-regions in the next time step, i.e., $t=3$, if and only if it contains the hyperplane $\textcolor{green}{h_2}(\textcolor{red}{a_1},\textcolor{blue}{a_2}).$} 
        \label{subfig:regions}
        \end{subfigure}
        \hfill
        \begin{subfigure}{0.45\textwidth}
        \centering
        \begin{tikzpicture}[scale = 2.0]
            \draw[->, very thick] (-1.5,0) -- (1.5,0);
            
            \draw[red, very thick] (0,0.2) -- (0,-0.2);
            \draw[blue, very thick] (0.5, 0.2) -- (0.5,-0.2);
            \draw[blue, very thick] (-0.7, 0.2) -- (-0.7,-0.2);
            \draw[green, very thick] (-1.0, 0.1) -- (-1.0,-0.1);
            \draw[green, very thick] (0.9, 0.1) -- (0.9,-0.1);
            \draw[orange, very thick, dotted] (-0.2, 0.1) -- (-0.2,-0.1);
            \draw[orange, very thick, dotted] (-0.3, 0.1) -- (-0.3,-0.1);
            
            \node at (-1.3, 0.2) {\textcolor{red}{0}\textcolor{blue}{0}\textcolor{green}{0}};
            \node at (-0.87, 0.2) {\textcolor{red}{0}\textcolor{blue}{0}\textcolor{green}{1}};
            
            \node at (-0.4, 0.2) {\textcolor{red}{0}\textcolor{blue}{1}};
            \node at (0.2, 0.2) {\textcolor{red}{1}\textcolor{blue}{0}};
            \node at (0.73, 0.2) {\textcolor{red}{1}\textcolor{blue}{1}\textcolor{green}{0}};
            \node at (1.2, 0.2) {\textcolor{red}{1}\textcolor{blue}{1}\textcolor{green}{1}};
        \end{tikzpicture}%
        \caption{At $t=3$, the hyperplane $\textcolor{green}{h_2}(\textcolor{red}{a_1},\textcolor{blue}{a_2})$ induces one more region if it lies in the "correct" region $R_2(\textcolor{red}{a_1},\textcolor{blue}{a_2})$. Since the hyperplane $\textcolor{orange}{h_2}(\textcolor{red}{0},\textcolor{blue}{1})$ lies on the left of $\textcolor{orange}{h_2}(\textcolor{red}{1},\textcolor{blue}{0})$, while the region $(\textcolor{red}{0},\textcolor{blue}{1})$ lies on the right of $(\textcolor{red}{1},\textcolor{blue}{0})$, no more than one of them can lie in the correct region. \\}
        \end{subfigure}
    
        \caption{Illustration of the proof. From time step $t-1$ to time step $t$, at most $1$ additional region is created in the regions with $\sum_{i=1}^{t-1} a_{i} = m$ for any fixed $m \in \set{0, \hdots, t-1}$. Therefore, at most $t$ regions are additionally created in this time step.}
        \label{fig:lexorder}
    \end{figure}
    
    We fix a time step $t \in [T]$ and consider the transition from $t-1$ to $t$. Moreover, let $m\in \set{0, \hdots, t-1}$ be arbitrary and consider the set 
    \begin{equation*}
        A_m :=\set{(a_i)_{i \in [t-1]} \in \set{0,1}^{t-1}: \sum_{i=1}^{t-1} a_{t-i} = m \text{ and } R_{t-1}(a_1, \hdots, a_{t-1}) \neq \emptyset}    
    \end{equation*}
    of all binary codes of length $t-1$ that have $m$ ones in their representation and correspond to a non-empty region created before time $t$. Observe that if we arrange the codes $(a_1, \hdots, a_{t-1})$ in $A_m$ in increasing lexicographic order, 
    then the corresponding values $\sum_{i=1}^{t-1} a_{t-i} \beta^i$ are in decreasing order (since $\beta^i$ decreases with increasing $i$). This means that while the regions $R_{t-1}(a_1,\hdots, a_{t-1})$ are arranged in increasing lexicographic order of $(a_1, \hdots, a_{t-1})$ \footnote{Intuitively, the new sub-region at any time step $i$ lies on the left of the hyperplane $h_{i-1}(a_1, \hdots, a_{i-1})$ if $a_i=0$ and on the right if $a_i=1$, and this process is performed from $i=1$ on. The process actually reflects the ordering of binary codes in lexicographic order.}, the position of their corresponding hyperplanes $h_{t-1}(a_1, \hdots, a_{t-1})$ are arranged in the reversed order, i.e., in lexicographic order of $(a_{t-1}, \hdots, a_1)$; see Figure \ref{fig:lexorder} for an illustration. Since the regions are all disjoint, it follows that there is at most one hyperplane belonging to the `correct' region, i.e., the region that corresponds to the same binary code. Since $m \in \set{0, \hdots, t-1}$ was arbitrary, we deduce that there are at most $t$ hyperplanes belonging to the `correct' regions at time step $t$. Hence, at the transition from time step $t-1$ to $t$, we obtain
    \begin{align*}
        N(t) \leq N(t-1) +t.
    \end{align*}
    Taking the sum over $t \in [T]$, we get 
    \begin{align*}
        N(T) \leq 1+ \sum_{t=1}^{T} t = 1+\frac{T(T+1)}{2} \in  O(T^2). 
    \end{align*}
        
    \item \textbf{Tightness of the bound.}

    Now it is left to show that the number of $O(T^2)$ parallel hyperplanes can indeed be achieved. For this, we simply let $\beta = 1$ and $\vartheta = 1$. Then the shifts at the transition from time step $t-1$ to $t$, as given in (\ref{eq:def_shifts}), become 
    \begin{equation*}
        g_{t-1}(a_1, \hdots, a_{t-1}) = \frac{ -u_k(0) + 1+ \sum_{i=1}^{t-1} a_{t-i}}{t}.    
    \end{equation*}
    Therefore, for any fixed $m \in \set{0, \hdots, t-1}$, any binary code $(a_1, \hdots, a_{t-1}) \in A_m$ corresponds to the same hyperplane 
    \[
        \set{\vx \in \sR^{n_\text{in}}: \sprod{\vw_k,\vx} +b_k - \frac{1+m-u_k(0)}{t} =0 }.
    \]
    One observes the followings:
    \begin{enumerate}
        \item For any $m \in \set{1, \hdots, t-2}$, the set $\cup_{(a_1, \hdots, a_{t-1}) \in A_m} R_{t-1}(a_1, \hdots, a_{t-1})$, i.e., the union of all non-empty regions with $\sum_{i=1}^{t-1} a_{t-i} = m$, is exactly the region between the hyperplanes corresponding to the shifts $\frac{m-u_k(0)}{t}$ and $\frac{1+m-u_k(0)}{t}$. In the transition from time step $t-1$ to $t$, if it holds that 
        \begin{align}\label{eq:cond_max_regions1}
            \frac{m-u_k(0)}{t} < \frac{1+m-u_k(0)}{t+1} <   \frac{1+m-u_k(0)}{t},
        \end{align}
        then the hyperplane corresponding to the shift $\frac{1+m-u_k(0)}{t+1}$ must belong to the set $\cup_{(a_1, \hdots, a_{t-1}) \in A_m} R_{t-1}(a_1, \hdots, a_{t-1})$. Thus, it must lie in one of such regions and separates it into two sub-regions. 

        \item For the outer-most regions, namely $R_{t-1}(0, \hdots, 0)$ and $R_{t-1}(1, \hdots, 1)$, observe that if 
        \begin{align}\label{eq:cond_max_regions2}
            \frac{1-u_k(0)}{t+1} < \frac{1-u_k(0)}{t} \quad \text{ and } \quad \frac{t+1-u_k(0)}{t+1} > \frac{t- u_k(0)}{t},
        \end{align}
        then those two regions are divided each into two sub-regions respectively by the hyperplanes corresponding to the shift $\frac{1-u_k(0)}{t+1}$ and $\frac{t+1-u_k(0)}{t+1}$.
        
        \item If we can, in addition to the two above conditions, ensure that for any $m \in \set{0, \hdots, t}$ and $m'\in \set{0,\hdots, t'}$ as well as for any $t>t'$ it holds
        \begin{align}\label{eq:cond_max_regions3}
            \frac{m-u_k(0)}{t} \neq \frac{m'-u_k(0)}{t'}
        \end{align}
        (i.e. there do not exist two time steps such that some shift of the current time step is equal to some shift of a step far away in the past), then at each time step $t$ the number of regions is increased by exactly $t$ and thus the maximum number of regions is achieved.
    \end{enumerate}
    Note that condition (\ref{eq:cond_max_regions1}) holds for $u_k(0) = 0$ and condition (\ref{eq:cond_max_regions2}) holds for any $u_k(0) \in (0,1)$. For an arbitrary but fixed $T$, it is simple to see that for $u_k(0)$ sufficiently small both these conditions are satisfied (since the involved terms are all continuous in $u_k(0)$) and so is condition (\ref{eq:cond_max_regions3}). Another simple way to guarantee (\ref{eq:cond_max_regions3}) is to choose an irrational value for $u_k(0)$. This shows the desired existence statement.
\end{enumerate}
\end{proof}

\subsubsection{Existence of general positioned families of hyperplanes created by discrete-time LIF-SNNs} \label{subsec:existence_general_position}
In the previous subsection, we have seen that each neuron in the first hidden layer of a discrete-time LIF-SNN corresponds to a family of at most $\frac{T^2+T}{2}$ parallel hyperplanes, a situation where Lemma \ref{lemma:spatial_separation} can be applied to obtain an upper bound on the number of regions. Our only concern left is the tightness of the bound in higher dimensions, which in turn leads to the question of whether the families of hyperplanes are in general position according to Lemma \ref{lemma:spatial_separation}. Here, we prove that for appropriate spatial parameters $(\mW, \vb)$, our shallow discrete-time LIF-SNNs will satisfy the condition about general position.
\begin{lemma}\label{lemma:existence_general_position}
    Consider a shallow discrete-time LIF-SNN with $T$ time steps with inputs from $\sR^{n_\text{in}}$ and $n_1$ neurons in the hidden layer. We denote by $\gA_k$, $k\in [n_1]$, the families of parallel hyperplanes corresponding to the $k$-th neuron of the first hidden layer. Then one can construct weight matrices $\mW \in \sR^{n_1 \times n_{\text{in}}}$ and bias vector $\vb \in \sR^{n_1}$ such that the families $\gA_k$, $k\in [n_1]$, of parallel hyperplanes are in general position. 
\end{lemma}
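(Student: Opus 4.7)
The plan is to show that, after fixing the temporal parameters $u_k(0),\beta,\vartheta$ so that each family $\gA_k$ attains its maximum size of $\tfrac{T^2+T+2}{2}$ distinct hyperplanes (e.g., as in the construction at the end of the proof of \Cref{lemma:temporal_seperation_appendix}), a Lebesgue-generic choice of $(\mW,\vb)$ places the families in general position. Recall from \eqref{eq:spike_hyperplanes} that every hyperplane in $\gA_k$ has normal $\vw_k$ and takes the form $\{\vx : \sprod{\vw_k,\vx} = c_{k,j}\}$, where each offset $c_{k,j} = -b_k - g^{(k)}_{t-1}(a_1,\ldots,a_{t-1})$ depends additively on $b_k$ (the shift $g^{(k)}_{t-1}$ being determined by the fixed temporal parameters). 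A representative subset $\{H_1,\ldots,H_{n_1}\}$ is thus encoded by a tuple of offsets $(c_1,\ldots,c_{n_1})$, and the number of such tuples is finite.

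I would first choose $\mW$ so that every subset of $\min(n_1,n_{\text{in}})$ rows is linearly independent. This is a generic condition: the complement is a finite union of proper algebraic subvarieties of $\sR^{n_1\times n_{\text{in}}}$ (each cut out by the vanishing of a minor determinant) and hence has Lebesgue measure zero. Under this condition, for any $p\leq \min(n_1,n_{\text{in}})$ and any choice of representatives from $p$ of the families, the corresponding $p$ hyperplanes intersect in an affine subspace of the required dimension $n_{\text{in}}-p$. In particular, general position is fully established in the case $n_1 < n_{\text{in}}$.

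When $n_1 \geq n_{\text{in}}$, it remains to ensure that no $n_{\text{in}}+1$ hyperplanes from any representative subset share a common point. Fix an index set $S\subset[n_1]$ with $|S|=n_{\text{in}}+1$ and an offset $c_k$ for each $k\in S$. The submatrix $\mW_S\in\sR^{(n_{\text{in}}+1)\times n_{\text{in}}}$ has rank $n_{\text{in}}$, so its left null space is one-dimensional and spanned by some $\bm{\lambda}\in\sR^{n_{\text{in}}+1}$. Moreover, every coordinate of $\bm{\lambda}$ is nonzero: if $\lambda_i=0$ for some $i\in S$, then the $n_{\text{in}}$ rows $\{\vw_k\}_{k\in S\setminus\{i\}}$ would satisfy a nontrivial linear dependence, contradicting the first step. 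The system $\{\sprod{\vw_k,\vx}=c_k\}_{k\in S}$ is therefore consistent if and only if $\sum_{k\in S}\lambda_k c_k = 0$, which translates into a single nontrivial affine equation on $(b_k)_{k\in S}$ and hence defines an affine hyperplane in $\sR^{n_1}$. Since there are only finitely many pairs consisting of a set $S$ and a choice of representative per index in $S$, the bad set for $\vb$ is a finite union of affine hyperplanes in $\sR^{n_1}$, hence of measure zero; picking any $\vb$ outside this union finishes the construction.

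The main delicate point is the very last step: each forbidden common-intersection condition must translate into a genuine, nontrivial linear constraint on $\vb$. This hinges on every coordinate of $\bm{\lambda}$ being nonzero, which is precisely where the generic choice of $\mW$ from the first step is used in an essential way.
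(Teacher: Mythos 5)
Your argument is correct, and it reaches the conclusion by a genuinely different route than the paper. The paper proceeds by an explicit iterative construction: starting from $n_{\text{in}}$ families in general position, it appends one family at a time, picking the new normal non-parallel to all previous ones and then translating the whole new family far enough (using that the shifts are bounded) so that its hyperplanes miss the finitely many existing intersection points, which rules out any $n_{\text{in}}+1$ hyperplanes meeting. You instead argue by genericity over the full parameter space: a measure-zero condition on $\mW$ forces every $\min(n_1,n_{\text{in}})$-subset of rows to be independent, and each forbidden coincidence of $n_{\text{in}}+1$ representatives then reduces, via the one-dimensional left kernel $\bm{\lambda}$ of $\mW_S$ (all of whose entries are nonzero exactly because of that rank condition), to a single nontrivial affine constraint on $\vb$, so the bad biases form a finite union of hyperplanes. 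Both proofs rest on the same underlying reduction---finitely many offending configurations, each avoidable by adjusting the bias---but yours buys strictly more: it shows general position holds for Lebesgue-almost-every $(\mW,\vb)$ once the temporal parameters are fixed, which is precisely what the paper only conjectures in the remark after Lemma \ref{lemma:existence_general_position}; the paper's version buys an explicit, hand-checkable construction. Two points you rely on tacitly and could state explicitly: the emptiness of $p$-fold intersections for $n_{\text{in}}<p\le n_1$ follows from the $(n_{\text{in}}+1)$-fold case, and the set of realized shifts $g^{(k)}_{t-1}$ per neuron is determined by the temporal parameters alone, so the collection of representatives you enumerate is indeed a fixed finite set independent of $(\mW,\vb)$.
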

\begin{proof}
    Observe that for $n_1 \leq n_\text{in}$, the notion of general position reduces to the condition that at least one of the representative subsets of the hyperplane arrangement $\cup_{i=1}^{n_1} \gA_i$ is in general position. In fact, if there exists a representative subset which does not have any parallel pair of hyperplanes, then the same holds for any other representative subset, while the condition of non-parallelism is sufficient for general position when $n_1 \leq n_\text{in}$. This means that in this case, one just needs to simply choose $\mW$ to be full-ranked so that the corresponding families of hyperplanes are non-parallel. It only remains to consider the case $n_1 \geq n_\text{in}$. 

    Now, we will show the existence of the desired discrete-time LIF-SNN by iteratively constructing $(\vw_i, b_i)$ for $i = 1, \hdots, n_{\text{in}}, \hdots, n_1$. First, we choose $(\vw_i, b_i)$ for $i\leq n_\text{in}$ such that the families $\gA_i$, $i\in [n_\text{in}]$, are in general position (which is straightforward due to the above observation). Thus, it suffices to show how to choose $(\vw_{n+1}, b_{n+1})$ properly given $(\vw_i, b_i)$ for $i\leq n$ for some fixed $n \geq n_\text{in}$ such that the corresponding families $\gA_i$, $i \in [n]$, are in general position.

    Obviously, the families $\gA_i$, $i\in [n]$, have only finitely many intersection points, in particular, each intersection point is the intersection of $n_{\text{in}}$ hyperplanes from $n_\text{in}$ different families. Let $(\vw_{n+1}, \tilde{b}_{n+1})$ define a hyperplane 
    \begin{equation*}
        \mathcal{\widetilde{H}}_{n+1} = \set{\vx \in \sR^{n_\text{in}}: \sprod{\vw_{n+1},\vx}+ \tilde{b}_{n+1} =0}  
    \end{equation*}
    that is not parallel to any family $\gA_i$, $i\in[n]$, and all the intersection points lie on the same side of $\mathcal{\widetilde{H}}_{n+1}$. It is not difficult to see that the shifts (as defined in the proof of Lemma \ref{lemma:spatial_separation}) are bounded, say by some constant $B>0$. Thus, by translating $\mathcal{\widetilde{H}}_{n+1}$ towards the half-space that does not contain the intersection points of $\gA_i$, $i\in [n]$, we obtain a hyperplane 
    \begin{equation*}
        \mathcal{H}_{n+1} = \set{\vx \in \sR^{n_\text{in}}: \sprod{\vw_{n+1},\vx}+ b_{n+1} =0}
    \end{equation*}
    so that all the intersection points of $\gA_i$, $i\leq [n]$ lie on the same side of all of its shifted versions with the shifts bounded by $B$. This way we gain a collection of families $\gA_i$, $i\in [n+1]$ where no $n_\text{in}+1$ hyperplanes coincide. 

    Finally, we obtain a hyperplane arrangement $\cup_{i=1}^{n_1} \gA_i$ which satisfies the following property by construction: (1) no pairs of hyperplanes from different families are parallel, (2) no sets of $n_\text{in}$ hyperplanes coincide. Thus, the hyperplane arrangement is in general position (according to Remark \ref{rm:hpa}) and the proof is complete. 
\end{proof}

\begin{remark}
    While Lemma \ref{lemma:existence_general_position} is proven in a constructive manner, the choice of the spatial parameters is quite flexible. We believe that for 'almost all' choices of $(\mW, \vb)$, i.e., up to a set of zero measure on the set of all spatial parameters, the resulting shallow discrete-time LIF-SNN creates families of parallel hyperplanes, which are in general position. 
\end{remark}

\subsubsection{Completing the proof of Theorem \ref{theorem:num_regions}} \label{subsec:deep_SNN_regions}
In this subsection, we combine the auxiliary lemmata discussed in previous subsections to finalize the proof of Theorem \ref{theorem:num_regions}, repeated here in a slightly revised version.
\begin{theorem}
    Consider a discrete-time LIF-SNN $\mPhi$ with $T$ time steps, input dimension $n_\text{in}$, and $n_1$ neurons in the first hidden layer. Then the maximum number of constant and activation regions taken over all choices of spatial and temporal parameters $\vtheta= \big( (\mW^\ell, \vb^\ell), (\vu^\ell(0),\beta^\ell,\vartheta^\ell) \big)_{\ell\in [L]}$ is upper bounded by
    \[
        \max_\vtheta\abs{\gC} \leq \max_\vtheta \abs{\gR} \leq 
        \begin{cases}
            \sum_{i=0}^{n_\text{in}} \left( \frac{T^2+T}{2}\right)^i\binom{n_1}{i} \quad &\text{if } n_1 \geq n_\text{in},\\
            \left( \frac{T^2+T+2}{2}\right)^{n_1} \quad &\text{otherwise,}
        \end{cases}
    \]
    The first inequality becomes equality if each spike layer has at least $n_1$ neurons. The second inequality becomes equality for appropriate choices of the network parameters $\vtheta$.   
\end{theorem}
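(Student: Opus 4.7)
The plan is to decompose the argument into an upper bound obtained by counting activation regions of the first hidden layer and a matching tightness argument that combines the three auxiliary lemmas already established. The first observation, underlying everything, is that $|\gC|\leq|\gR|$ holds trivially since two inputs that produce the same first-layer spike pattern feed identical spike trains into layers $2,\hdots,L$ and therefore produce the same network output. Consequently the entire bound can be reduced to counting activation regions determined by the first hidden layer alone, and deeper architectures cannot enlarge this count.

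For the upper bound, I would invoke the hyperplane characterization in \eqref{eq:shift} and Lemma~\ref{lemma:temporal_seperation_appendix} to conclude that each neuron $k\in[n_1]$ in the first hidden layer is associated with a family $\gA_k$ of at most $\frac{T^2+T}{2}$ parallel hyperplanes in $\sR^{n_\text{in}}$, all sharing the normal direction $\vw_k$. The activation regions of $\mPhi$ are then exactly the cells of the hyperplane arrangement $\bigcup_{k=1}^{n_1}\gA_k$, which consists of $n_1$ families of at most $\frac{T^2+T}{2}$ parallel hyperplanes each. Applying Lemma~\ref{lemma:spatial_separation} with the substitutions $n\mapsto n_1$, $d\mapsto n_\text{in}$, $k\mapsto \frac{T^2+T}{2}$ immediately yields the two claimed upper bounds in the cases $n_1\geq n_\text{in}$ and $n_1<n_\text{in}$.

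For the tightness I would combine three ingredients. First, select the temporal parameters $(u^1(0),\beta^1,\vartheta^1)$ as in the explicit construction inside the proof of Lemma~\ref{lemma:temporal_seperation_appendix} so that every neuron in the first hidden layer attains the maximum of $\frac{T^2+T}{2}$ parallel hyperplanes. Second, invoke Lemma~\ref{lemma:existence_general_position} to choose spatial parameters $(\mW^1,\vb^1)$ that place the $n_1$ families $\gA_1,\hdots,\gA_{n_1}$ in general position, which by Lemma~\ref{lemma:spatial_separation} saturates the upper bound on the number of activation regions. Third, to ensure $|\gC|=|\gR|$ under the hypothesis $n_\ell\geq n_1$ for every spike layer, I would appeal to Proposition~\ref{prop:IdentityMapping} to configure layers $\ell=2,\hdots,L$ as identity maps on the first-layer spike trains (padding with inert neurons if $n_\ell>n_1$), and then fix a decoder $D$ of the form in Definition~\ref{def:coding_schemes}, for instance with weights $a_t=2^{t-1}$ and an injective $\mV$, so that distinct binary spike trains are sent to distinct real vectors in $\sR^{n_\text{out}}$. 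Under this choice every activation region maps to its own output, giving $|\gC|=|\gR|$ at the upper bound.

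The main obstacle I anticipate is verifying that the three ingredients of the tightness construction are compatible: the temporal parameters are scalars shared across the layer, while $(\mW^1,\vb^1)$ varies per neuron, and one must check that maximizing the per-neuron hyperplane count does not constrain the general-position choice of $(\mW^1,\vb^1)$. This decoupling is already visible in \eqref{eq:shift}, since the shift term $g_{t-1}$ depends only on $(u(0),\beta,\vartheta)$ while the hyperplane normal and intercept depend only on $(\vw_k,b_k)$. I would make this formal by fixing the temporal parameters first, treating the resulting shift values as an arbitrary prescribed set, and then observing that the inductive construction in Lemma~\ref{lemma:existence_general_position} places families in general position regardless of which specific parallel offsets within each family have been selected, closing the gap.
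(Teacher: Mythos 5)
Your proposal is correct and follows essentially the same route as the paper: the upper bound via Lemma \ref{lemma:temporal_seperation_appendix} combined with Lemma \ref{lemma:spatial_separation}, and tightness via the temporal-parameter construction, Lemma \ref{lemma:existence_general_position}, and Proposition \ref{prop:IdentityMapping} for the identity mapping through deeper layers. Your additional care about decoder injectivity (so distinct spike trains give distinct outputs in $\sR^{n_\text{out}}$) and about the decoupling of temporal and spatial parameter choices fills in details the paper's proof leaves implicit, but does not change the argument.
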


\begin{proof}
    By Lemma \ref{lemma:temporal_seperation_appendix}, each neuron $k\in [n_1]$ in the first hidden layer corresponds to a family $\gA_k$ of at most $\frac{T^2+T}{2}$ parallel hyperplanes. Lemma \ref{lemma:spatial_separation} shows the desired upper bound on $\max_{\vtheta} \abs{\gR}$. To see the tightness of this bound, we choose $\big(\vu(0), \beta, \vartheta\big)$ according to Lemma \ref{lemma:temporal_seperation_appendix} and $(\mW^1, \vb^1)$ according to Lemma \ref{lemma:existence_general_position} so that the families $\gA_k$ are in general position while having the maximum number of hyperplanes. 

    For the first inequality to become an equality, we apply Proposition \ref{prop:IdentityMapping} to construct an identity mapping from the first to (the first $n_1$ neurons) of the last spike layer, $\big(s^1_{1}(t), \hdots, s^1_{n_1}(t)\big)_{t \in [T]} \mapsto \big(s_{1}^L(t), \hdots, s_{n_1}^L(t)\big)$. Note that this construction only involves the subsequent layers $\ell \geq 2$ and does not involve the first hidden layer. One can observe that any two distinct activation regions are mapped to two distinct output time series, thus $\abs{\gC} = \abs{\gR}$. 
    
\end{proof}
%
%
\section{Experimental details}\label{sec:experiments_appendix}
\subsection{Additional experiment results}
In this section, we complement the experimental results shown in Section \ref{sec:exp_first} with (1) similar experiments on the SVHN dataset \cite{svhn} (in place of CIFAR10) and (2) a comparison of test accuracies. In particular, for SVHN, we conduct experiments with the same models as well as technical settings as for CIFAR10 (see Section \ref{sec:experiment_setting_appendix} below). The corresponding results are presented in Fig. \ref{fig:bottleneck_svhn_appendix}, which shows a similar trend as before for CIFAR10, with a few minor exceptions (possibly due to training instability in the cases of low spatial and temporal model sizes). 
\begin{figure}[ht]
    \centering
    \begin{subfigure}[b]{0.48\textwidth} 
         \centering
         \includegraphics[width=0.98\textwidth]{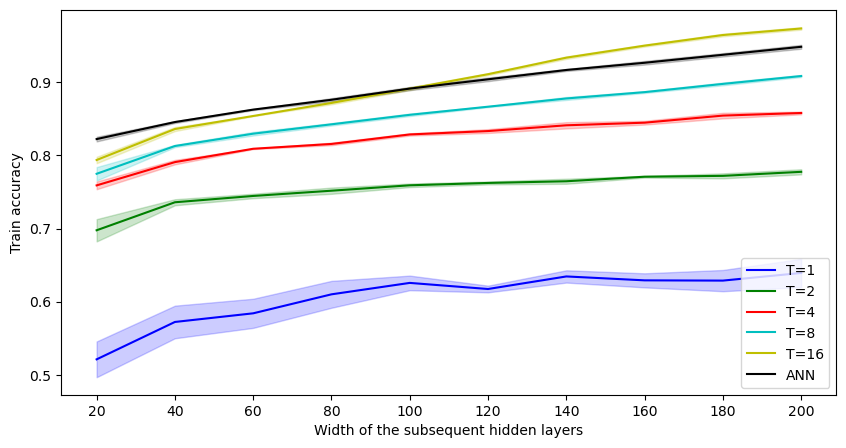}
         \caption{\centering Train accuracy achieved by an ANN and SNNs with different numbers of time steps but identical spatial architectures}
         \label{fig:bottleneck_svhn_train}
    \end{subfigure}
    \begin{subfigure}[b]{0.48\textwidth}
        \centering
        \includegraphics[width=0.98\textwidth]{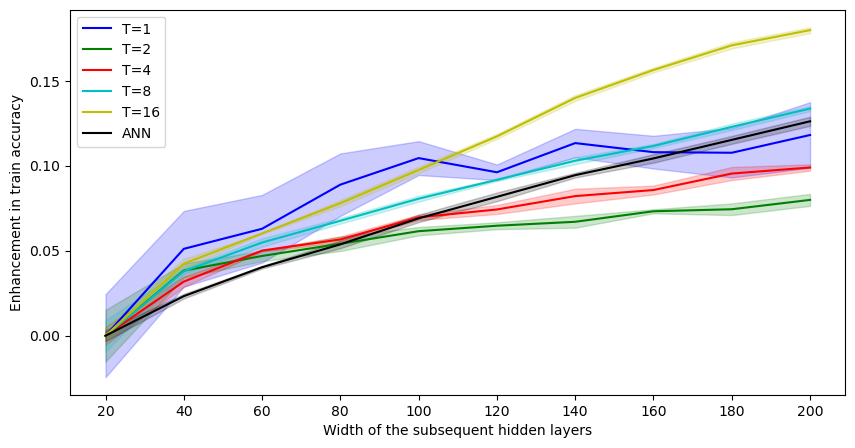}
        \caption{\centering The improvement of training accuracy when increasing the width of the subsequent hidden layers.}
        \label{fig:bottleneck_svhn_diff}
    \end{subfigure}
    \caption{Comparison of train accuracies achieved by ANN and SNNs on SVHN dataset with different numbers of time steps. Both types of networks share the same spatial architectures: the first hidden layer is a bottleneck with only $20$ neurons and the subsequent layers are progressively widened in each experiment. We consider $4$ layers in both cases. }
    \label{fig:bottleneck_svhn_appendix}
\end{figure}
While our experiments aim to verify our theoretical results on the expressivity of discrete-time LIF-SNNs, for completeness, we include the comparison of test accuracies achieved by the considered SNN models, see Fig. \ref{fig:bottleneck_test_appendix}. 
\begin{figure}[ht]
    \centering
    \begin{subfigure}[b]{0.48\textwidth} 
         \centering
         \includegraphics[width=0.98\textwidth]{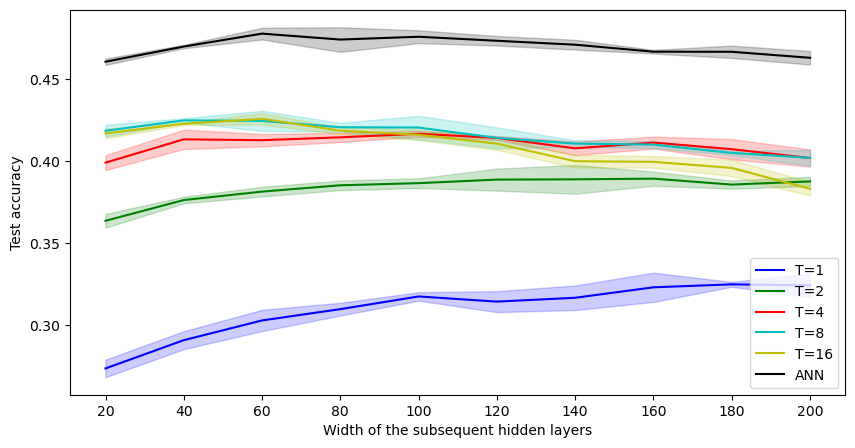}
         \caption{\centering Test accuracy on CIFAR10 dataset}
         \label{fig:bottleneck_cifar_test}
    \end{subfigure}
    \begin{subfigure}[b]{0.48\textwidth}
        \centering
        \includegraphics[width=0.98\textwidth]{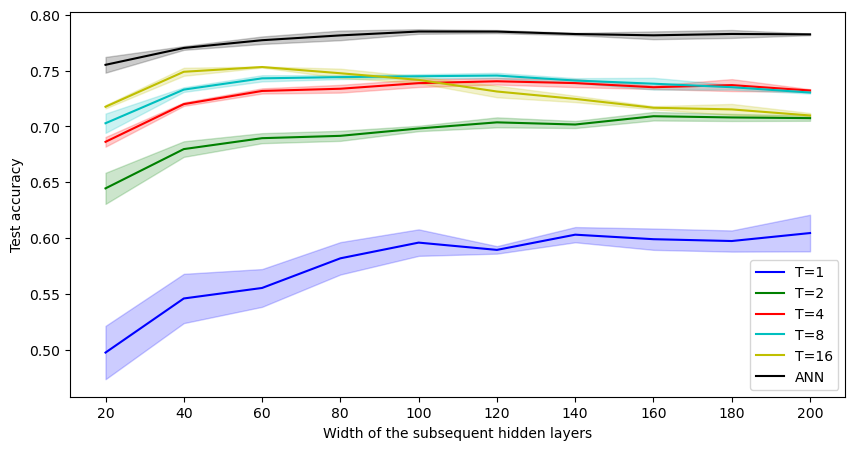}
        \caption{\centering Test accuracy on SVHN dataset}
        \label{fig:bottleneck_svhn_test}
    \end{subfigure}
    \caption{Test accuracies achieved by ANN and SNNs on CIFAR10 and SVHN dataset with different numbers of time steps. Both types of networks share the same spatial architectures: the first hidden layer is a bottleneck with only $20$ neurons and the subsequent layers are progressively widened in each experiment. We consider $4$ layers in both cases. }
    \label{fig:bottleneck_test_appendix}
\end{figure}
Note that in both datasets the generalization performance is quite limited for both ANNs and SNNs (of any latency). The reason behind this is that we did not apply any techniques to avoid overfitting, since as a justification of our theoretical expressivity results, the networks should have the same architectures as in the theoretical expressivity findings and aim merely to fit the training data (and not to generalize well to test data). In such a setting, ANNs are observed to generalize better than SNNs (of any presented latency) and SNNs, even with low latency such as $T=4$ or $T=8$, show certain overfitting phenomena when the width of the subsequent becomes large enough.

\subsection{Experimental setting}\label{sec:experiment_setting_appendix}
We employ the snntorch package \cite{Lessons_from_DL_eshraghian_2023} for implementing discrete-time LIF-SNNs, where all leaky parameters $\beta^\ell$ and thresholds $\vartheta^\ell$ are set to be learnable \cite{trainable_membrane_constant_2021} while the initial membrane potential vectors $\vu^\ell(0)$ are set to $0$ as default. Furthermore, we deploy the cross-entropy loss applied to spike count outputs (see Example E.1 in Supplementary Material \ref{sec:output_decoding_examples}). For the training, we apply backpropagation through time \cite{backprop2016, Neftci2019SurrogateGD, Lessons_from_DL_eshraghian_2023} with the arctan surrogate function \cite{trainable_membrane_constant_2021}. 

The numerical experiments described in \Cref{sec:exp_first} were conducted under the following set-up:
Both ANNs and SNNs are trained in $200$ epochs using the Adam optimizer \cite{Adam_optimizer_2015} with $\beta_1=0.9$ and $\beta_2=0.999$. The learning rate is initialized at $10^{-3}$ and decays by a factor of $10$ after $100$ epochs. The batch size is set to $512$. We used the pytorch default Kaiming initialization \cite{kaiming} in all our experiments.

The numerical experiments described in \Cref{sec:sec_exp} were performed in the same set-up only the learning rate, number of epochs, and batch size were adjusted to a fixed value $5\times10^{-4}$, $1000$ and $256$, respectively.

\section{Related works}\label{sec:related_works_appendix}
Here, we provide a more detailed discussion of the relevant literature going beyond the discussion in \Cref{sec:introduction}. We contrast the extensive work on the expressive power of ANNs with the currently still (in comparison) limited insights for ANNs.

\subsection{The expressive power of ANNs}
Understanding the expressive power of ANNs is a central concern in the field of deep learning, with a long and rich history of research. Investigations in this topic can be broadly divided into the following two key areas.

\paragraph{Approximation capabilities} This field focuses on the ability of ANNs to approximate certain function classes as well as the complexity of the network to attain a prescribed accuracy of the approximation. A starting point are the Universal Approximation Theorems \cite{Cybenko_universal_1989, hornik_universal_1989,leshno_universal_1993}, which established that shallow ANNs with sufficient neurons and appropriate activation functions can approximate any continuous function on a compact domain arbitrarily well. Building on these foundational results, subsequent research has focused on various extensions and refinements, ranging from investigations on the effect of diverse architectural aspects such as depth, width, number of neurons, convolutional layers, etc., to approximation certain function classes \cite{benefits_of_depth_telgarsky_2016, error_bounds_relu_NNs_yarotsky2016, expressivity_width_zhou2017, universal_bounded_width_hanin_2019, approx_num_neurons_2020, universal_CNNs_zhou_2020} to deriving (optimal) approximation rates \cite{error_bounds_relu_NNs_yarotsky2016, opt_approx_pw_smooth_functions_petersen_2018, approx_relu_sobolev_guehring2020, opt_approx_sparse_DNNs_boelcskei_2019, DNNs_parametric_pdes_mones_2022}. 

\paragraph{Input partitioning} The focus in this field lies on understanding the internal processing in ANNs more deeply, specifically their ability to represent complex patterns and hierarchical features. Notably, the pioneering works \cite{linear_regions_2013_montufar, linear_regions_2014_montufar} have sparked research work on the piecewise linearity of ANNs with piecewise linear, including ReLU, activation functions \cite{expressive_power_DNNs_raghu_2017, bound_count_regions_DNNs_serra_2018, understanding_ReLU_DNNs_arora_2018, complexity_linear_regions_hanin_rolnick2019, surprisingly_few_regions_hanin_rolnick2019}. 
Hence, a natural metric to theoretically quantify the expressivity of an ANN in this case is the maximum number of linear regions into which it can separate its input space. By reducing the problem to counting the number of regions created by a hyperplane arrangement and directly applying Zaslavsky's Theorem \cite{zaslavsky_theorem_1975, book_enum_combinatorics_2011}, \cite{linear_regions_2013_montufar} proved a simple yet tight upper bound for this number for shallow ReLU networks. In the follow-up works \cite{linear_regions_2014_montufar, expressive_power_DNNs_raghu_2017, bound_count_regions_DNNs_serra_2018, understanding_ReLU_DNNs_arora_2018}, various theoretical upper and lower bounds on the maximum number of regions have been derived and improved. The study of input partitioning has subsequently broadened, leading to connections with diverse impactful aspects of neural network theory, such as practical expressivity \cite{complexity_linear_regions_hanin_rolnick2019, surprisingly_few_regions_hanin_rolnick2019}, functionality and geometry of decision boundaries \cite{spline_theory_DL_balestriero_2018, geometry_DNNs_balestriero_2019, transversality_relu_NNs_grigsby_2022} and local complexity, generalization and robustness \cite{DNNs_grok_humayun_2024, local_complexity_regions_DNNs_patel_2024} of ANNs. Finally, for a more comprehensive survey on the relation of deep learning and polyhedra theory, please refer to \cite{DL_polyhedra_survey_serra_2023}. 

For a comprehensive overview on the mathematical theory of ANN expressivity, we also want to highlight the dedicated sections in the recent books (and references therein) \cite{book_Jentzen_2023, book_petersen2024}.

\subsection{The expressive power of SNNs}

\paragraph{Approximation capabilities} While the literature on the expressivity of ANNs is extensive, the theory for SNNs still remains quite limited. A valuable foundation for the theory of SNNs lies in the research conducted by Maass in the 90s. These early works introduced several expressivity results for both discrete and continuous-time models with temporal coding often focusing on the spike response model \cite{Maass_nips_1994, lower_bounds_snns_Maass_1996, 3rd_gen_NNs_maass_1997} or its stochastic extensions \cite{noisy_maass_nips_1995, noisy_maass_nips_1996, 3rd_gen_NNs_maass_1997, fast_maass_1997} for describing the neuronal dynamics. The continuous-time spike response model has been revisited in several recent works \cite{temporal_coding_SNN_alpha_comsa_2020, TTFS_Stanojevic2024, expressivity_snns_manjot_adalbert_2024, stable_learning_SNN_neuman_2024}, which provide more quantitative approximation results as well as discussions on related theoretical aspects of SNNs. However, as mentioned in \cite{stable_learning_SNN_neuman_2024} the analyzed continuous-time models have not yet found broad applications on dedicated neuromorphic platforms, thus limiting the adaptability of these results to practical settings. Meanwhile, the works \cite{zhang2023IntrinsicStructures, theo_provable_snns_zhang_2022} examine various SNN models that are closely related to each other and establish several approximation properties. However, these works often include self-connections and rate coding in the continuous-time framework in a smoothed form, which diverges significantly from common practical SNN implementations. 

In contrast to the previously mentioned approaches, our research directly targets a straightforward and widely adopted SNN model frequently used in practical SNN implementation frameworks \cite{Lessons_from_DL_eshraghian_2023, spikingjelly_2023, spinnaker_async_ML_2024}. Given that SNNs are still evolving with new models being proposed progressively, we do not claim that our paper introduces the most appropriate model of SNNs, but rather that it focuses on the one that is widely used today. 

\paragraph{Input partitioning} With regard to the input partitioning of SNNs, the piecewise functionality has been observed in \cite{expressivity_snns_manjot_adalbert_2024, supervised_tempo_code_SNNs_mostafa_2018}, which discuss the continuous-time framework (see above discussion). More precisely, these works consider continuous-time SNNs based on the spike response model combined with the  \emph{time-to-first-spike} coding and narrow the focus to the single spike scenario, i.e., a context where the (unique) continuous firing time of each neuron can be seen as its activation. Informally speaking, since the input current depends on the (differences of) firing times in a certain linear manner (possibly after change of variables as in \cite{supervised_tempo_code_SNNs_mostafa_2018}), the activation of a post-synaptic neuron also depends linearly on its pre-synaptic neurons' activations. In our case, the linearity dependence between pre- and post-synaptic activations is in a stricter sense, namely because the activations take only binary values. 

\citet{SNN_neural_architecture_search_2022} investigate an SNN model that is most closely related to ours, differing mainly in the membrane potential reset mechanism, and also points out the piecewise linearity of the realization. Notably, the authors introduce a time-dependent input partitioning concept as the motivation for the proposed neural architecture search, where regions at each time step are defined based on neuron transitions as outlined in \cite{expressive_power_DNNs_raghu_2017}. Compared to \cite{SNN_neural_architecture_search_2022}, our work provides a deeper theoretical analysis of the piecewise constant functionality of discrete-time LIF-SNNs, characterizing the complexity of their input partition.

\section{The discrete-time LIF-SNN model} \label{sec:discussion_SNN}
In this section we provide further background information for reader less familiar with SNNs. We start with the description of the leaky-integrate-and-fire model and a short derivation to obtain the discretized LIF model employed in this work. Subsequently, we  discuss further models of neuronal dynamics and embed the LIF dynamic in this vast space. Moreover, we introduce practically applied coding schemes and show that our coding framework encompasses them. Finally, we derive elementary properties of discrete-time LIF-SNNs.

\subsection{The leaky-integrate-and-fire neuronal model}
The leaky-integrate-and-fire (LIF) neuron is one of the simplest models of neuronal dynamics to study information processing (in biological neural networks) \cite{Book_Gerstner_neuronal_dynamics_2014}. A linear differential equation, which also underlies a basic capacitor-resistor electrical circuit, describes the LIF dynamics 
\begin{align}\label{eq:neuron_mem_ODE}
    \tau \frac{du(t)}{dt} = -u(t) + R I(t), 
\end{align}
where $u(t)$ and $I(t)$ represent the membrane potential and input current at time $t$, respectively, while $\tau$ and $R$ denote the membrane time and impedance constant. The second component, a thresholding operation with parameter $\vartheta$, translates the dynamics of the potential into spike emission: whenever $u(t)$ reaches the threshold $\vartheta$ (from below) a spike is generated and $u$ is immediately reset to a new value below the threshold  -- mimicking the observed biological patterns in a severely simplified fashion. 

Solving the differential equation via a forward Euler discretization with equidistant time steps $t_n$ as well as including the thresholding with the so-called reset-by-subtraction method, i.e., subtracting the value of the threshold $\vartheta$ from the potential after a spike, yields
\begin{equation}\label{eq:neuron_reset}
    \begin{cases}
        s(t_n) &= H\big( \beta u(t_{n-1}) +  (1-\beta) R I(t_n) - \vartheta\big)\\
        u(t_n) &= \beta u(t_{n-1}) + (1-\beta) R I(t_n) - \vartheta s(t_n)   
    \end{cases},
\end{equation}
where $H = \indi_{[0,\infty)}$ denotes the Heaviside step function and $\beta>0$ is a coefficient depending on the step size; see for instance \cite{Lessons_from_DL_eshraghian_2023}. As a computational model, it is natural to consider networks of spiking neurons and introduce learnable parameters analogously to the structure of (non-spiking) ANNs (independent from biological plausibility). The latter can be achieved by assuming that incoming weighted spikes from pre-synaptic neurons, the so-called spike trains, generate the input current of neuron $i$ 
\begin{equation*}\label{eq:neuron_current_based_model}
    I_i(t_n) = \sum_{j: j \text{ presynaptic neuron of } i} w_{i,j} s_j(t_n) + b_i,
\end{equation*}
where $w_{i,j}\in\sR$ denotes a synaptic weight replacing/absorbing the coefficient $(1-\beta)R$ in \eqref{eq:neuron_reset}) and $b_i\in\sR$ is a synaptic bias, whereas $s_j(t_n) \in \set{0,1}$ specified via \eqref{eq:neuron_reset} indicates whether neuron $j$ emitted a spike at time $t_n$. This leads to the computational model of a network of spiking LIF neurons, which we refer to as the \emph{discrete-time LIF-SNN model}, formally introduced in \Cref{def:discrete_SNN_model}. The discrete-time LIF-SNN model is a fairly general and flexible framework to study SNNs encompassing a wide range of practically applied models (see below). Its advantage is that it incorporates spike-based processing into artificial neural networks while still maintaining advantageous properties from ANNs such as the sequential optimization process with gradient methods although some new obstacles arise \cite{Lessons_from_DL_eshraghian_2023}. 
Nevertheless, the number of SNN models employed in practice is vast and therefore we summarize and relate other approaches to the discrete-time LIF-SNN framework. We approach this task from two perspectives. First, we embed the LIF neuron in the landscape proposed by neuroscience to model realistic behaviour of biological neurons. Second, we discuss adaptations of the discrete-time LIF-SNN as a computational model to obtain better performance in practice, i.e., by making it easier to train or exploiting task-dependent properties.

\subsection{Neuronal models in biology}\label{subsec:other_neuron_models}

For a detail-oriented biological motivation of various neuronal models we refer to \cite{Book_Gerstner_neuronal_dynamics_2014, SNN_review_brain_sciences_2022}, instead we only provide a short summary. One take-away is that integrate-and-fire (IF) models, thereof LIF is a prominent example, are one of the main classes to describe neuronal dynamics. As the name suggests, these models capture the integration of incoming action potential/spikes and the resulting generation of new spikes, i.e., firing, via differential equations. Single exponential, double exponential, or more general non-linear (leaky) IF models extend the basic LIF model, which is based on linear differential equations, to achieve more biological plausibility. The spike response model, a superset of IF models, goes a step further by emphasizing various effects occurring in a neuron after emitting a spike, i.e., the spike response, such as refractoriness. Besides, there exists a vast amount of models focusing on various biophysical and biochemical aspects of biological neurons, among them the famous Hodgkin-Huxley model. These models are more apt to study physiological processes but (currently) less amenable as computational models employed in computer science since the increased biological plausibility of the models often comes at the cost of higher computational complexity. Hence, the LIF model provides a reasonable balance between complexity and efficiency to study the effectiveness of more biologically inspired neurons (in comparison with classical artificial neurons) as a computing framework.

\subsection{LIF SNNs as a computational model}
Having established the goal of computational power and efficiency, one is bound by biological plausibility to a lesser degree and is inclined to adjust the model to achieve the desired goal. First, employing networks of (LIF) neurons as a computing model requires the implementation and subsequent optimization of the network on a computing platform to solve a given task. One key distinction in implementations is the time-discrete versus time-continuous approach. Both are viable from a theoretical perspective but currently, time-discrete solutions are favored in practice, mainly due to two reasons. First, the discretization framework aligns well with the typically employed digital hardware platforms, which however may change with the development of (analog) neuromorphic hardware \cite{Mehonic2024NeuromorphicRoadmap}. Second, after discretization, the obtained model such as the discrete-time LIF-SNN can be mostly treated via the established and high-performance optimization pipeline already developed for ANNs \cite{Lessons_from_DL_eshraghian_2023}. Moreover, the time-continuous implementation needs to overcome specific obstacles, which makes the networks currently difficult to scale, although progress has been made in this regard \cite{firstspike2021goltz, supervised_tempo_code_SNNs_mostafa_2018, temporal_coding_SNN_alpha_comsa_2020}.

In the discretized setting, various options are still available. First, depending on the assumptions the discretization via the Euler method might lead to slightly different dynamics \cite{Lessons_from_DL_eshraghian_2023, Neftci2019SurrogateGD}. Moreover, the reset mechanism, i.e., the rule how to reset the potential of a neuron once a spike is emitted (meaning the potential crossed the threshold), does not need to follow the introduced reset-by-subtraction method but can be implemented for instance as a reset-to-zero approach. Additionally, the choice of learnable parameters and fixed (hyper)parameters also influences the model capacity. Less learnable parameters ease the training process whereas more learnable parameters obviously extend the computational capacity of the model. Therefore, research focused on obtaining efficient training pipelines incorporating many learnable parameters and/or improving the spiking neuron model to exhibit better learning characteristics; see \cite{parallel_spiking_neurons_fang2023} and the references therein. We skip the specific details but we would like to emphasize that research is still progressing, e.g., by modifying the model to allow for more parallelization in the computing process and thereby more efficiency. Finally, the coding scheme is highly relevant for the performance of the full computational pipeline, independent of the specific spiking model employed. Next, we will deepen this discussion with an emphasis on the output decoding scheme since the input encoding already mostly converged towards direct and learnable schemes.   

\subsection{Input encoding schemes}\label{subsec:input_coding}
Classical encoding schemes such as rate or temporal coding often exhibit several inherent drawbacks that may negatively impact model performance and efficiency \cite{direct_input_2_rueckauer_2017, direct_input_3_wu_2019}.
First, each input neuron can usually represent only a grid of $T+1$ different values, thus requiring high latency $T$ to achieve high precision. Second, the analog-spike transformation based on rate coding is likely to introduce variability into the firing of the network. In other words, one analog input signal, e.g., an image, may be transformed into a number of rate-coded spike trains and thus lead to totally different output predictions, which not only impairs the model performance but also raises concerns regarding the well-definedness and stability of the model. To prevent such shortcomings as well as to seek for low-latency computations, direct coding has nowadays become a standard input encoding regime for SNNs \cite{direct_input_2_rueckauer_2017, direct_input_3_wu_2019, trainable_membrane_constant_2021}.

\subsection{Output decoding schemes} \label{sec:output_decoding_examples}
An important component of any SNN model is how the information encoded in the spikes is decoded into a final output. In the discrete LIF-SNN model as defined in Section \ref{sec:SNN_model}, this appears as an output decoding mapping $D$ from the space of spike trains to the actual output space, e.g., a real vector space in the considered case of static data. This abstraction covers many common decoding schemes as will be demonstrated in this section. To this end, we categorize them into two main approaches based on whether the output codes directly act on the (binary) spike activations of the final layer ('spike output') or an additional spatial transform is interconnected ('membrane potential output').

\subsubsection{Spike outputs}
To apply the concept of spike outputs, one would certainly require that the size of the last layer of the network is designed to be equal to the dimension of the target vectors, i.e., $n_L = n_\text{out}$. In our treatment of spike outputs, we will always assume that this condition is satisfied. 
Two of the simplest decoding schemes are \emph{rate coding} and \emph{count coding}, where the output signal is understood as the average and spike count of the output spike trains over time, respectively 
\begin{example}[Spike rate coding and spike count coding]
    The output rate decoding mapping is given by 
    \begin{equation*}
        D_{\text{rate}}: \set{0,1}^{n_\text{out} \times T} \to \sR^{n_\text{out}}, \big(\vs(t) \big)_{t \in [T]} \mapsto \frac{1}{T}\sum_{t=1}^T \vs(t).    
    \end{equation*}
    The output count decoding mapping can be written as
    \begin{equation*}
        D_{\text{count}}: \set{0,1}^{n_\text{out} \times T} \to \sR^{n_\text{out}}, \big(\vs(t) \big)_{t \in [T]} \mapsto \sum_{t=1}^T \vs(t).    
    \end{equation*}
\end{example}
Note that these decoding schemes restrict the output values to be on the grid $\set{0, 1, \hdots, T}$ -- possibly rescaled by the factor $1/T$. However, these approaches have shown effectiveness in applications where the target set is simple and small such as classification tasks \cite{Lessons_from_DL_eshraghian_2023, trainable_membrane_constant_2021}. For instance, for $n_\text{out}$ different classes with one-hot encoded labels $y \in \set{0,1}^{n_\text{out}}$, i.e., for class $c \in [n_\text{out}]$ the corresponding label $y$ satisfies $y_i = 1$ if $i=c$ and zero otherwise, the training objective given a discrete-time LIF-SNN $\mPhi$ is to minimize $\gL(R(\mPhi)(\vx^i)),\vy^i)$ averaged over all samples in the dataset $(\vx^i,\vy^i)_i$, for some loss function $\gL: \sR^{n_\text{out}} \times \sR^{n_\text{out}} \to \sR_+$. Minimizing the objective corresponds to aligning the output vector $R(\mPhi)(\vx^i)$ with the one-hot encoded label $\vy^i$ with respect to some measure. In the case of rate-coded outputs, this is equivalent to requiring that the neuron in the last spike layer corresponding to the correct class fires as many times as possible, while all other neurons in the last layer stay silent as often as possible. 
The idea of forcing many spikes at the neuron associated with the correct class and few spikes at the other neurons can be also applied without an explicit decoding scheme by incorporating the decoding step in the training pipeline, i.e., finding the optimal decoding scheme is part of the learning objective. However, it is not a priori clear how to embed this setting into our framework.

A completely different approach for the output decoding is to rely on the spike times instead of the spike rates/counts via temporal coding. The first spike time of a neuron $i$ in the last layer is simply
\begin{equation*}
    f_i :=\begin{cases}
            f_0, &\text{ if } s^L_i(t) = 0 \; \forall t\in [T]\\
            \min \set{ t\in [T]: s^L_i(t) = 1}, \quad  &\text{ otherwise},
        \end{cases}    
\end{equation*}
where $f_0 \in \sR$ is a value specified beforehand in case neuron $i$ does not spike. The spike time vector $\vf:= (f_i)_{i \in [n_\text{out}]} \in \sR^{n_\text{out}}$ is then used to define the final output vector, possibly after some transformation.
\begin{example}[Spike time/temporal coding]
    The output spike time mapping can be written as
    \begin{equation*}
        D_{\text{time}}: \set{0,1}^{n_\text{out} \times T} \to \sR^{n_\text{out}}, \big(\vs(t) \big)_{t \in [T]} \mapsto h(\vf)     
    \end{equation*}
    for some function $h: [T] \cup \set{f_0} \to \sR$ applied entry-wise.
\end{example}
For the previously considered classification task with one-hot encoded label, setting $f_0 = T+1$ and $h(x) = 1/x$ encourages the neuron associated with the correct class to spike early and all other neurons to spike late or not at all.

\subsubsection{Membrane potential outputs}
The decoding regimes that are based on the spike outputs often lead to outputs restricted to the grid $\set{0, \hdots, T}$ accompanied by a simple transformation such as rescaling, inverting, etc. This might be too restrictive in certain tasks where finer accuracy is required, e.g. in regression tasks \cite{SNN_regression_2024}.
However, this problem can be addressed by transforming the output spike train of the last layer $\vs^L(t)$, $t\in [T]$ via an affine mapping first. Informally, this can be thought of as adding another layer to the network but without any firing or reset mechanism (which is comparable to the last affine layer in ANNs typically employed in regression tasks). 
More precisely, let $\mW^{L+1} \in \sR^{n_\text{out} \times n_L}$ be the weight matrix and $\vb^{n_\text{out}}$ the bias vector of the affine transformation from layer $L$ to the newly introduced layer $L+1$. Furthermore, let $\beta^{L+1}$ be the leaky parameter of that layer and define the membrane potential vector by
\begin{equation*}
    \vu^{L+1} (t) := \beta^{L+1}\vu^{L+1}(t-1) + \mW^{L+1} \vs^{L}(t) + \vb^{L+1}, \quad t\in [T],    
\end{equation*}
with some initial membrane potential vector $\vu^{L+1}(0) \in \sR^{n_\text{out}}$ given beforehand. The explicit formula can now be expressed as 
\begin{equation*}
    \vu^{L+1}(t) = (\beta^{L+1})^t \vu^{L+1}(0) + \sum_{i=0}^{t-1} (\beta^{L+1})^i \mW^{L+1}\vs^{L}(t-i) +\sum_{i=0}^{t-1} (\beta^{L+1})^i \vb^{L+1},    
\end{equation*}
Now, the time series of membrane potential vectors $\big( \vu^{L+1}(t)\big)_{t\in [T]} \in \sR^{n_\text{out} \times T}$ can be treated analogously to spike outputs $\big( \vs^{L}(t)\big)_{t\in [T]}\in \set{0,1}^{n_\text{out}\times T}$. 
For instance, similarly to the spike rate coding, we can also define the output vector to be the average of the membrane potential time series $\big( \vu^{L+1}(t)\big)_{t\in [T]}$ over time.  
To embed, this decoding scheme in our framework, we first aim to express the time series $\big( \vu^{L+1}(t) \big)_{t\in [T]}$ as function of the spike time series $\big( \vs^{L}(t) \big)_{t\in [T]}$ in a more compact way. For this, we stack the quantities of different time steps together as a matrix (with each column representing a time step):
\begin{equation*}
    \mU := \big[ \vu^{L+1}(1), \hdots, \vu^{L+1}(T)\big] \in \sR^{n_\text{out} \times T} \quad \text{ and } \quad \mS:= \big[ \vs^{L}(1), \hdots, \vs^{L}(T)\big] \in \sR^{n_L \times T}.    
\end{equation*}
Furthermore, we define the temporal weight vectors 
\begin{equation*}
    \va(t) := \big[ (\beta^{L+1})^{t-1}, (\beta^{L+1})^{t-2}, \hdots, (\beta^{L+1})^{t-(t-1)}, (\beta^{L+1})^{t-t}, 0, \hdots, 0\big]^\top \in \sR^{T}    
\end{equation*}
and stack them as well as the bias vectors to matrices
\begin{equation*}
    \mA := \big[ \va(1) , \hdots, \va(t)\big] 
    \in \sR^{T\times T} 
    \quad \text{ and }  
    \quad
    \mB:= \vb^{L+1} \otimes \vone_T^\top = [\vb^{L+1}, \hdots, \vb^{L+1}] \in \sR^{n_\text{out} \times T}.
\end{equation*}
For clarity, we assume that $\vu^{L+1}(0) = 0$ (the general case is a simple extension) and obtain
\begin{equation*}
     u(\mS):= \mU = (\mW^{L+1} \mS + \mB) \mA \quad \text{ for }  u: \set{0,1}^{n_L \times T} \to \sR^{n_\text{out} \times T}.  
\end{equation*}

\[
    \vz := \frac{1}{T} \sum_{t=1}^T \vu^{L+1}(t) = \frac{1}{T} \sum_{t=1}^T \vu(t) = \mU \vone_T.
\]

\begin{example}[Membrane potential rate coding]
    The membrane potential rate coding mapping can be written as 
    \begin{equation*}
        D_{\text{pot-rate}}: \sR^{n_L \times T} \to \sR^{n_\text{out}}, \big(\vs(t) \big)_{t \in [T]} \mapsto \tilde{D} \circ u (\mS) = \frac{1}{T} (\mW^{L+1} \mS + \mB) \mA \vone_T  
    \end{equation*}
    with 
    \begin{equation*}
         \tilde{D}: \sR^{n_\text{out} \times T} \to \sR^{n_\text{out}}, \big(\vu(t) \big)_{t \in [T]} \mapsto \frac{1}{T} \sum_{t=1}^T \vu(t) = \mU \vone_T.   
    \end{equation*}
    Hence, in comparison to spike rate coding, membrane potential rate coding incorporates an affine spatial transformation given by $h_\text{spat}: \sR^{n_L \times T} \to \sR^{n_\text{out} \times T}, h(x) = \mW x + \mB$ and a linear temporal transformation given by $h_\text{temp}: \sR^{n_\text{out} \times T} \to \sR^{n_\text{out}}, h_\text{temp}(x) = x \mA \vone_T$, which assigns a non-uniform weight distribution to the time steps.  
\end{example}
In combination with commonly used loss functions, membrane potential rate coding drives the membrane potential of the neuron corresponding to the correct class to be large, while diminishing the other ones. 

\subsubsection{Summary}

The notion of membrane potential outputs is a generic decoding scheme without fixing any specific output decoding regime. It simply assumes that the last spike activations are further passed to a subsequent layer via an affine mapping over all time steps similar to spikes being transmitted in the internal layers. This analogy motivates the notion of membrane potential outputs, which are averaged over all time steps given some (temporal) weights. However, note that by substituting the affine mapping with the identity function one recovers a 'spike output approach' that directly relies on the final (binary) spike activations and consequently allows for less flexibility. In practice, the parameters of the decoder, particularly the affine mapping, are considered trainable parameters of the model, which is comparable to the fact that the last layer in conventional ANNs usually consists of a trainable affine transformation.

\subsection{Elementary properties of discretized LIF SNNs}\label{App:Model_Elem_Prop}

We want to highlight that discrete-time LIF-SNNs can be linked to feedforward ANNs with Heaviside activation function. The realization of an ANN $\Psi$ can be expressed as
\begin{equation*}
    R_{\text{ANN}}(\Psi) = A^{L} \circ \sigma \circ A^{L-1} \circ \sigma \hdots \circ \sigma \circ A^1,  
\end{equation*}
where $\sigma: \sR \to \sR$ is an activation function that acts entry-wise and $A^{\ell}: \R^{n_{\ell-1}} \to \R^{n_{\ell}},  A^\ell (x)= W^\ell x + b^\ell$ are affine transformations between layers $\ell-1$ and $\ell$ of size $n_{\ell-1}$ and $n_{\ell}$, respectively, encompassing the trainable parameters $(W^\ell,b^\ell) \in \R^{n_{\ell} \times n_{\ell-1}}\times \R^{n_{\ell}}$. In contrast, given a discrete-time LIF-SNN $\mPhi=\Big( \big(\mW^\ell, \vb^\ell\big)_{\ell\in [L]}, \big(\vu^\ell(0),\beta^\ell,\vartheta^\ell\big)_{\ell\in [L]}, T,(E,D) \Big)$, its realization can be written as
\begin{equation*}
    R(\mPhi) =  D \circ\sigma^{T}_\text{LIF}(\cdot) \circ \mW^L \circ \sigma^{T}_\text{LIF}(\cdot) \circ  \hdots \circ \mW^2 \circ \sigma^{T}_\text{LIF}(\cdot) \circ \mW^1 \circ E ,
\end{equation*}
where the weight matrices $W^\ell$ represent the affine, in fact linear, mappings and the activation $\sigma^{T}_{\text{LIF}}(\cdot): \sR^{T} \to \sR^{T}$ operates on the sets of (spatially one-dimensional) time series of length $T$
\begin{equation*}
    \sigma^{T}_{\text{LIF}}(\tilde{\vb}) (\vz) = H(\vz + \tilde{\vb}) \quad \text{ for some } \tilde{\vb} \in \sR^{T}.
\end{equation*}
For an actual initial spike activation $\vs^0=(\vs(t))_{t\in[T]} \in \sR^{n_1 \times T}$ the composition $\sigma^{T}_\text{LIF}(\cdot) \circ \mW^\ell$ simple represents the (time-enrolled) computation in the $\ell$-th layer according to \eqref{eq:discrete_snn_dynamics}
\begin{align*}
    \big(\sigma^{T}_\text{LIF}((\tilde{\vb}(t))_{t\in [T]}) \circ \mW^\ell\big) (\vs^{\ell-1})_{t\in[T]}  &= H(\mW^\ell (\vs^{\ell-1})_{t\in[T]} + (\tilde{\vb}(t))_{t\in[T]})\\
    &=  H(\mW^\ell (\vs^{\ell-1})_{t\in[T]} + (\beta^\ell \vu^{\ell}(t-1) + \vb^\ell- \vartheta^{\ell} \vone_{n_{\ell}})_{t\in[T]}), 
\end{align*}
if $(\tilde{\vb}(t))_{t\in [T]}$ is chosen as $(\beta^\ell \vu^{\ell}(t-1) + \vb^\ell- \vartheta^{\ell} \vone_{n_{\ell}})_{t\in[T]}$. Hence, the activation $ \sigma^{T}_\text{LIF}(\cdot)$ is not fixed throughout the network but depends via the membrane potential on the initial spike activation. A more insightful option to express the realization of $\mPhi$ is given by
\begin{equation}\label{eq:CompRepr}
    R(\mPhi) = D \circ H \circ A^L(\cdot) \circ H \circ  \hdots \circ A^2(\cdot) \circ H \circ A^1(\cdot) \circ E   
\end{equation}
with $A^\ell(\cdot): \R^{n_{\ell-1} \times T}\to \R^{n_\ell \times T}$ specified by
\begin{equation*}
    \big(A^\ell(\tilde{\vb})\big)(\vs) = (\mW^\ell \vs(t) + \tilde{\vb}(t))_{t\in[T]} = \mW^\ell\vs +\tilde{\vb} \quad \text{ for } \vs=(\vs(t))_{t\in [T]} \in  \R^{n_{\ell-1} \times T}, \tilde{\vb}=(\tilde{\vb}(t))_{t\in [T]} \in  \R^{n_{\ell} \times T}.     
\end{equation*}
Note that now the specific form of the affine mapping $A^\ell(\cdot)$ depends on the variable $\tilde{\vb}$ (which represents the dynamical aspects including the membrane potential of the neuronal model), whereas the activation function $H$ is kept fixed. For the same choice of $\tilde{\vb}$ as before, one immediately verifies that the expression in \eqref{eq:CompRepr} is equivalent to the one provided in \Cref{def:discrete_SNN_model}.

The benefit of \eqref{eq:CompRepr} is the previously mentioned link to feedforward ANNs. In particular, for $T=1$ (and appropriate choices for $E$ and $D$) the model is equivalent to ANNs with Heaviside activation function. Indeed, since there are no temporal dynamics that need to be taken into account the layer-wise affine mapping is simply $A^\ell(s) = \mW^\ell \vs + \tilde{\vb}$ for some $\tilde{\vb}\in \R^{n_{\ell}}$ as in case of ANNs.

For $T>1$, the equivalence between ANNs and discrete-time LIF-SNNs is not entirely valid anymore, however, structural similarities remain and can potentially be exploited. In particular, vectorizing the spike activations $\vs^\ell=(\vs^\ell(1)\cdots\vs^\ell(T))^T \in \sR^{n_\ell \cdot T}$ (instead of stacking them in a matrix) and using that the weight matrix is shared over all time steps in a layer, the affine mappings in the realization can be rewritten as 
\begin{equation}\label{eq:block_diag}
    A^\ell(\cdot): \R^{n_{\ell-1} \cdot T}\to \R^{n_\ell \cdot T}, \quad 
    \big(A^\ell(\tilde{\vb})\big)(\vs) = 
        \begin{pmatrix}
            \mW^\ell \\
            & \ddots \\
            & & \mW^\ell
        \end{pmatrix}
        \vs +\tilde{\vb} \quad \text{ for } \tilde{\vb}= (\tilde{\vb}(1)\cdots\tilde{\vb}(T))^T\in  \R^{n_{\ell} \cdot T}.    
\end{equation}
Hence, we observe that $A^\ell(\cdot)$ represents a special case of the affine mapping in a Heaviside ANN with the weight matrix taking a block-diagonal structure (where the same block is repeated $T$ times), i.e., the time dimension is expressed in a higher-dimensional spatial structure. In contrast to ANNs, the bias term is not fixed but varies (based on the neuronal dynamics) and is therefore dependent on the initial spike activation.

\end{document}